\renewcommand{\P}{\mathbb{P}}
\newcommand{\E}{\mathbb{E}}
\newcommand{\X}{\mathcal{X}}
\newcommand{\Y}{\mathcal{Y}}
\newcommand{\C}{\mathcal{C}}
\newcommand{\Eqref}[1]{Eq.~\eqref{#1}}
\newcommand{\id}[1]{\mathds{1}\{#1\}}
\newcommand{\argmin}[1]{\underset{#1}{\operatorname{arg\,min\,}}}
\newcommand{\CovGap}{\mathop{\text{CovGap}}}
\newcommand{\TuningBias}{\mathop{\text{TuningBias}}}
\newcommand{\of}[1]{\left(#1\right)}
\newcommand{\offf}[1]{\left\{#1\right\}}
\theoremstyle{plain}
\newtheorem{theorem}{Theorem}[section]
\newtheorem{proposition}[theorem]{Proposition}
\newtheorem{lemma}[theorem]{Lemma}
\newtheorem{corollary}[theorem]{Corollary}
\theoremstyle{definition}
\newtheorem{definition}[theorem]{Definition}
\theoremstyle{remark}
\newtheorem{remark}[theorem]{Remark}
\icmltitlerunning{Parametric Scaling Law of Tuning Bias in Conformal Prediction}
\begin{document}

\twocolumn[
\icmltitle{Parametric Scaling Law of Tuning Bias in Conformal Prediction}



\icmlsetsymbol{equal}{*}

\begin{icmlauthorlist}
    \icmlauthor{Hao Zeng}{equal,sustech}
    \icmlauthor{Kangdao Liu}{equal,sustech,um}
    \icmlauthor{Bingyi Jing}{sustech}
    \icmlauthor{Hongxin Wei}{sustech}
\end{icmlauthorlist}

\icmlaffiliation{sustech}{Department of Statistics and Data Science, Southern University of Science and Technology}
\icmlaffiliation{um}{Department of Computer and Information Science, University of Macau }

\icmlcorrespondingauthor{Hongxin Wei}{weihx@sustech.edu.cn}

\icmlkeywords{Machine Learning, Uncertainty Quantification, Conformal Prediction, Parameter Tuning, Coverage Gap, Tuning Bias}

\vskip 0.3in

]


\printAffiliationsAndNotice{\icmlEqualContribution}  

\begin{abstract}
Conformal prediction is a popular framework of uncertainty quantification that constructs prediction sets with coverage guarantees.
To uphold the exchangeability assumption, many conformal prediction methods necessitate an additional hold-out set for parameter tuning.
Yet, the impact of violating this principle on coverage remains underexplored, making it ambiguous in practical applications. 
In this work, we empirically find that the tuning bias - the coverage gap introduced by leveraging the same dataset for tuning and calibration, is negligible for simple parameter tuning in many conformal prediction methods. 
In particular, we observe the scaling law of the tuning bias: this bias increases with parameter space complexity and decreases with calibration set size.
Formally, we establish a theoretical framework to quantify the tuning bias and provide rigorous proof for the scaling law of the tuning bias by deriving its upper bound.
In the end, we discuss how to reduce the tuning bias, guided by the theories we developed. 
\end{abstract}

\section{Introduction}
\label{sec:introduction}
Quantifying uncertainty in predictions is crucial for the safe deployment of machine learning, particularly in high-risk domains such as financial decision-making and medical diagnostics. 
Conformal prediction stands out as a promising statistical framework for quantifying uncertainty in the predictions of any predictive algorithm~\citep{vovk2005algorithmic, papadopoulos2008inductive, vovk2012conditional}.
It transforms point predictions into prediction sets guaranteed to contain ground truths with a user-specified coverage rate. 
Under the assumption of data exchangeability, these prediction sets offer non-asymptotic coverage guarantees without distributional assumptions.
However, the assumption may not always hold due to the inherent characteristics of the data or its usage in practice, such as time series analysis, domain shifts, and parameter tuning~\citep{barber2023conformal, oliveira2024split}.
Therefore, it is essential to understand and quantify the coverage gap that arises from the violation of exchangeability.

Many conformal prediction methods utilize a hold-out set for \textit{parameter tuning}, in addition to the calibration set. 
For example, 
(1) RAPS~\citep{angelopoulos2021uncertainty} and SAPS~\citep{huang2024conformal} search their parameters on the hold-out set; 
(2) score aggregation methods~\citep{yang2024selection, luo2024weighted, fan2024utopia} optimize the selection or weights for candidate scores;
(3) confidence calibration methods~\citep{guo2017calibration, xi2024does, dabah2024temperature} tune the scale parameters for better calibration;
(4) training methods~\citep{stutz2022learning,liu2025cadapter} fine-tune models on the hold-out set.
Previous works claim that leveraging the same hold-out set for tuning and calibration will destroy the exchangeability~\citep{angelopoulos2021uncertainty, yang2024selection, xi2024does, dabah2024temperature}, but they did not provide a formal quantification on the coverage gap introduced by the violation of exchangeability systematically.
This prompts us to investigate the influence of parameter tuning on the coverage gap in the absence of a hold-out set.

In this work, we reveal a previously unrecognized phenomenon: the tuning bias - the coverage gap introduced by using the same dataset for tuning and calibration, is negligible for simple parameter tuning in conformal prediction methods. 
Empirically, we find that most methods maintain their coverage rates, except for vector scaling~\citep{guo2017calibration} and the fine-tuning version of ConfTr~\citep{stutz2022learning}.  
Furthermore, we observe the \textit{parametric scaling law} of the tuning bias: this bias increases with parameter space complexity and decreases with calibration set size. 
An intuitive explanation for this phenomenon is that models with more complex parameter space require more data to tune for the optimal parameters.\footnote{Code: \url{https://github.com/ml-stat-Sustech/Parametric-Scaling-Law-CP-Tuning}.}

Formally, we propose a theoretical framework to quantify the tuning bias, which is formulated as a specifically designed constrained ERM problem~\citep{bai2022efficient}. 
In particular, we measure the tuning bias using the classical empirical process within the extended parameter space~\citep{vandervaart1996weak}.  
Furthermore, we derive the upper bounds of tuning bias in the cases of finite and infinite parameter spaces, respectively.
We then provide rigorous proof for the parametric scaling law of tuning bias through the upper bounds, which aligns with empirical observations. 
Finally, we present the tuning biases of parameter tuning in several conformal prediction methods, such as RAPS, score aggregation, and confidence calibration methods.

We further discuss the potential solutions to mitigate tuning bias. 
With the parametric scaling law, we could increase the size of the calibration set or reduce the complexity of the parameter space to reduce the tuning bias. 
The former may not be practical due to data scarcity. 
On the latter, we discuss two cases: one is to reduce the number of parameters to tune; the other is to apply regularization on the parameter space.  
Our main contributions are summarized as follows:

\begin{itemize}
    \item We identify a new phenomenon: simple parameter tuning in many conformal prediction methods leads to negligible tuning bias, where we leverage the same dataset for tuning and calibration.
    \item We present the \textit{parametric scaling law} of tuning bias: this bias increases with parameter space complexity and decreases with calibration set size.
    \item We establish a theoretical framework to quantify tuning bias and rigorously prove the parametric scaling law of tuning bias via its upper bounds.
    \item We discuss the potential application of our theoretical results and explain order-preserving regularization as a possible solution to reduce tuning bias. 
\end{itemize}

\section{Background}\label{sec:background}

\paragraph{Conformal prediction} 
Conformal prediction \citep{vovk2005algorithmic} aims to produce prediction sets that contain ground-truth labels with a desired coverage rate. 
Let $\mathcal{X} \subset \mathbb{R}^d$ denote the input space and $\mathcal{Y}$ denote the label space. 
Formally, the goal of conformal prediction is to construct a set-valued mapping $\mathcal{C}:\mathcal{X}\rightarrow 2^\mathcal{Y}$ that satisfies the marginal coverage:
\begin{equation}
\label{eq:validity}
    \mathbb{P}(y\in {\C}(\boldsymbol{x}))\geq 1-\alpha,  
\end{equation}
for a user-specified error rate $\alpha\in(0,1)$, where input \(\boldsymbol{x} \in\mathcal{X}\), and output \(y \in \mathcal{Y}\). 
As a widely used procedure, split conformal prediction~\citep{papadopoulos2008inductive} initiates with a calibration step.
For each sample $(\boldsymbol{x}_i, y_i)$ from the calibration set $\mathcal{D}_{\text{cal}}:=\{(\boldsymbol{x}_i,y_i)\}_{i=1}^n$, we compute the \textit{non-conformity score} $s_i := S(\boldsymbol{x}_i, y_i)$ for a score function $S:\X\times\Y\to\mathbb{R}$. 
The non-conformity score function $S$ measures the strangeness of a given sample.  
Based on the scores computed on the calibration set, we search for a threshold $\hat{t}$ such that the probability of observing scores on test samples below the threshold satisfies a pre-specified error rate $\alpha$. 
Specifically, we determine the threshold by finding the $(1-\alpha)$-th empirical quantile of the calibration scores: 
\begin{equation}\label{eq:threshold}
\hat{t}=\inf\offf{s:|\{i:S(\boldsymbol{x}_{i},y_{i})\leq s\}| \geq \lceil(n+1)(1-\alpha)\rceil}.
\end{equation}
For the test step with a given feature of test sample \(\boldsymbol{x}\), the non-conformity score is computed for each label \(y \in \mathcal{Y}\). 
The prediction set \({\mathcal{C}}(\boldsymbol{x})\) is constructed by including all labels whose non-conformity scores are below the threshold \(\hat{t}\):
\begin{equation} \label{eq:cp_set}
{\C}(\boldsymbol{x}) := \widehat{\C}_{\hat{t}}(\boldsymbol{x}) = \offf{y \in \mathcal{Y}: S(\boldsymbol{x},y)\leq\hat{t}},
\end{equation}
where the notation $\widehat{\cdot}$ is used to emphasize its dependence on the calibration set $\mathcal{D}_{\text{cal}}$.
Here we consider a little weaker version of independence:
\begin{definition}[Exchangeability]
    The samples from $\mathcal{D}$ are exchangeable if and only if the joint distribution of these samples is invariant under any permutation of the indices of the samples in $\mathcal{D}$.
\end{definition}

With the assumption of exchangeability, the conformal prediction set \({\mathcal{C}}(\boldsymbol{x})\) in \Eqref{eq:cp_set} has a finite-sample coverage guarantee:
\begin{theorem}\citep{lei2018distributionfree, barber2023conformal}\label{prop:finite-sample_coverage_guarantee}
If the samples from $\mathcal{D}_{\text{cal}}$ and the test sample $(\boldsymbol{x},y)$ are exchangeable, the conformal prediction set defined in \Eqref{eq:cp_set} satisfies
\begin{align*}
1-\alpha\leq\P\of{y\in {\C}(\boldsymbol{x})}\leq1-\alpha+\frac{1}{n+1}.
\end{align*}
\end{theorem}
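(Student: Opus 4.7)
The plan is to reduce the coverage event to a rank comparison among the $n+1$ scores and then exploit exchangeability to pin down the rank distribution. Writing $s_i := S(\boldsymbol{x}_i, y_i)$ for $i = 1, \ldots, n$ and $s_{n+1} := S(\boldsymbol{x}, y)$ for the test score, I would first observe that by \Eqref{eq:threshold} and \Eqref{eq:cp_set} the event $\{y \in \mathcal{C}(\boldsymbol{x})\}$ coincides with $\{s_{n+1} \leq \hat{t}\}$, where $\hat{t}$ is the $k$-th smallest of $s_1, \ldots, s_n$ with $k := \lceil (n+1)(1-\alpha)\rceil$ (using the convention $\hat{t} = +\infty$ whenever $k > n$, which trivially yields coverage $1$).

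Next, I would recast $\{s_{n+1} \leq \hat{t}\}$ as a statement about the rank $R$ of $s_{n+1}$ within the augmented multiset $\{s_1, \ldots, s_{n+1}\}$. In the tie-free case, $s_{n+1} \leq \hat{t}$ is equivalent to $R \leq k$. The crux of the argument is that exchangeability of the samples $(\boldsymbol{x}_1, y_1), \ldots, (\boldsymbol{x}_n, y_n), (\boldsymbol{x}, y)$ transfers to exchangeability of the scores $s_1, \ldots, s_{n+1}$, since each score is a deterministic function of its corresponding sample. Exchangeability of a tuple implies the rank of any one coordinate is uniform on $\{1, \ldots, n+1\}$, so
\begin{equation*}
\mathbb{P}(y \in \mathcal{C}(\boldsymbol{x})) = \mathbb{P}(R \leq k) = \frac{k}{n+1}.
\end{equation*}

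The two bounds then drop out from elementary ceiling inequalities: $(n+1)(1-\alpha) \leq k < (n+1)(1-\alpha) + 1$, so dividing through by $n+1$ gives $1-\alpha \leq k/(n+1) < 1-\alpha + 1/(n+1)$, which implies the stated double inequality.

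The main obstacle I would expect is the careful handling of ties, which can break the exact uniformity of $R$. I would resolve this either by assuming the non-conformity scores are almost surely distinct (as in the continuous case), or by appending i.i.d.\ uniform perturbations to each score to induce a strict total order; such tie-breaking preserves exchangeability, leaves the lower bound intact (since ties can only enlarge $\mathcal{C}(\boldsymbol{x})$ under the $\leq$ convention in \Eqref{eq:cp_set}), and recovers the upper bound under the original scores via a standard coupling argument. A secondary but minor point is bookkeeping at the boundary where $\lceil (n+1)(1-\alpha)\rceil$ exceeds $n$; I would handle this separately by noting that the prediction set becomes all of $\mathcal{Y}$, so both bounds hold with room to spare.
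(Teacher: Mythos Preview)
The paper does not actually supply its own proof of this theorem; it is quoted as a known result with citations to \citet{lei2018distributionfree} and \citet{barber2023conformal}. Your rank-based argument is precisely the standard proof found in those references: reduce the coverage event to $\{s_{n+1} \leq s_{(k)}\}$, use exchangeability to conclude that the rank of $s_{n+1}$ among $s_1,\ldots,s_{n+1}$ is uniform on $\{1,\ldots,n+1\}$, and finish with the ceiling bounds on $k=\lceil(n+1)(1-\alpha)\rceil$. So there is nothing to compare against beyond noting the match with the cited literature.

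One small caveat on your tie discussion: the lower bound survives ties exactly as you say, but the upper bound genuinely requires almost-sure distinctness of the scores (or that the tie-broken scores are the ones actually used in the procedure). Your phrase ``recovers the upper bound under the original scores via a standard coupling argument'' overstates what is possible: if all $n+1$ scores are equal with positive probability, the coverage under the original $\leq$ convention is $1$, and no coupling rescues $1-\alpha+1/(n+1)$. The cited references state the upper bound under a distinctness hypothesis, and you should do the same rather than suggesting it can be recovered unconditionally.
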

If the assumption of exchangeability does not hold, there could be a large deviation in the coverage rate.
We could use the coverage gap to measure the conformal predictor $\C$ for a given error rate $\alpha$ and test sample $(\boldsymbol{x}, y)$: 
\begin{equation*}
    \CovGap(\C) := |(1-\alpha) - \P(y \in \C(\boldsymbol{x}))|.
\end{equation*}

\paragraph{Parameter tuning}
Parameter tuning is a common practice in conformal prediction. 
We characterize the \textit{parameter tuning} as follows. 
Given a score function $S$, we can transform it using a function $\sigma$ to obtain $S^{\sigma} := \sigma \circ S \in \{f: \mathcal{X} \times \mathcal{Y} \to \mathbb{R}\}$. 
When this transformation $\sigma$ is parameterized by $\lambda \in \Lambda$, where $\Lambda$ is the parameter space, we denote the transformed score as $S^\lambda$ for simplicity. 
There are some examples: 
\begin{itemize}
    \item RAPS~\citep{angelopoulos2021uncertainty} uses a scalar $\lambda$ and an integral to tune the best score function on a hold-out set, and so does SAPS \citep{huang2024conformal}.
    \item For selection and aggregation of scores problems~\citep{yang2024selection, luo2024weighted, fan2024utopia,ge2024optimal}, the optimal score function or weight vector is obtained from a finite number of candidates.  The selection or weights could be viewed as the transformation parameter \(\lambda\) to tune. 
    \item Conformal prediction with confidence calibration \citep{guo2017calibration, xi2024does, dabah2024temperature} uses a single positive scalar \(\lambda > 0\) or a vector \(\lambda \in \mathbb{R}^K\) to calibrate deep learning classifiers.  
    \item For training models from scratch in conformal prediction, they could be modified as a post-training method.  
    Researchers fine-tune the fully connected layer or an appended layer, and the efficiency of conformal predictors could be enhanced \citep{stutz2022learning, huang2023uncertainty, liu2025cadapter}.  
    The parameters in the layer could be viewed as the transformation parameter.  
\end{itemize}
These practices tune their parameters using a hold-out set that is separate from the one used for conformal prediction calibration.
Given the transformation parameter $\lambda$, we can calculate the threshold $\hat{t}$ in \Eqref{eq:threshold} with parameter $\lambda$.  
If we use the same set for both parameter tuning and conformal prediction calibration, it will destroy the data exchangeability assumption and probably introduce an additional coverage gap. 
We defined the additional coverage gap as:
\begin{definition}[Tuning Bias]
\label{def:tuning_bias}
    For the conformal prediction with parameter tuning, \(\mathcal{C}\), the tuning bias for parameter tuning on the same set as the calibration set is defined as the additional coverage gap caused by the practice, i.e.,
    \begin{equation*}
        \TuningBias(\mathcal{C}) =  
        \CovGap(\mathcal{C}) - \CovGap(\mathcal{C}_{\text{hold-out}}),
    \end{equation*}
    where $\mathcal{C}$ is the conformal predictor with parameter tuned on the same set as the calibration set, and $\mathcal{C}_{\text{hold-out}}$ is the conformal predictor with parameter tuning using a hold-out set separated from the set used for conformal prediction.
\end{definition}
\begin{remark}
    Here, we clarify that understanding the tuning bias is crucial in conformal prediction practice:
    \begin{itemize}
        \item For \textbf{data-scarce scenarios}, splitting the calibration dataset is impractical in data-scarce scenarios like rare diseases, natural disaster prediction, and privacy-constrained personal data. With limited data, using separated datasets will reduce points for parameter tuning and conformal calibration, compromising the approach's effectiveness and stability. Thus, it’s valuable to assess when splitting is needed, or data reuse is permissible rather than sticking to traditional practices. 
        \item For \textbf{simple implementation}, even with sufficient data, maintaining separate sets can increase the pipeline complexity. Understanding when this separation is unnecessary—such as when tuning bias is negligible—enables simpler, more streamlined workflows while preserving coverage guarantees, offering practical relevance. 
        \item For \textbf{foundational understanding}, exploring the tuning bias can provide an in-depth understanding of the exchangeability assumption in conformal prediction. In particular, the insight in this work may inspire future works in non-exchangeable conformal prediction. 
    \end{itemize}
    It could also answer the concern, ``\textit{why not split the dataset}'', for the same reasons. Indeed, it is easy to split the validation set when the provided data is sufficient. 
    However, it can be particularly important to consider data reusing in data-scarce scenarios, like rare diseases, natural disaster prediction, and privacy-constrained personal data. With limited data, separating the dataset will further exacerbate the data scarcity problem, compromising the effectiveness of both conformal calibration and parameter tuning. In addition, exploring the tuning bias could provide an in-depth understanding of the exchangeability assumption, which may inspire future works in non-exchangeable CP\@.
\end{remark}

In the next section, we first conduct an extensive empirical study of various conformal prediction methods with parameter tuning. 
Then, we explore how the complexity of parameter space and the size of the calibration set influence tuning bias, respectively. 
We empirically demonstrate the scaling laws of tuning bias on the complexity of parameter space and the size of the calibration set.

\section{Empirical Study}
\label{sec:emprical_study}

\begin{figure}[t]
    \centering
    \begin{subfigure}[b]{0.45\textwidth}
        \includegraphics[width=\textwidth]{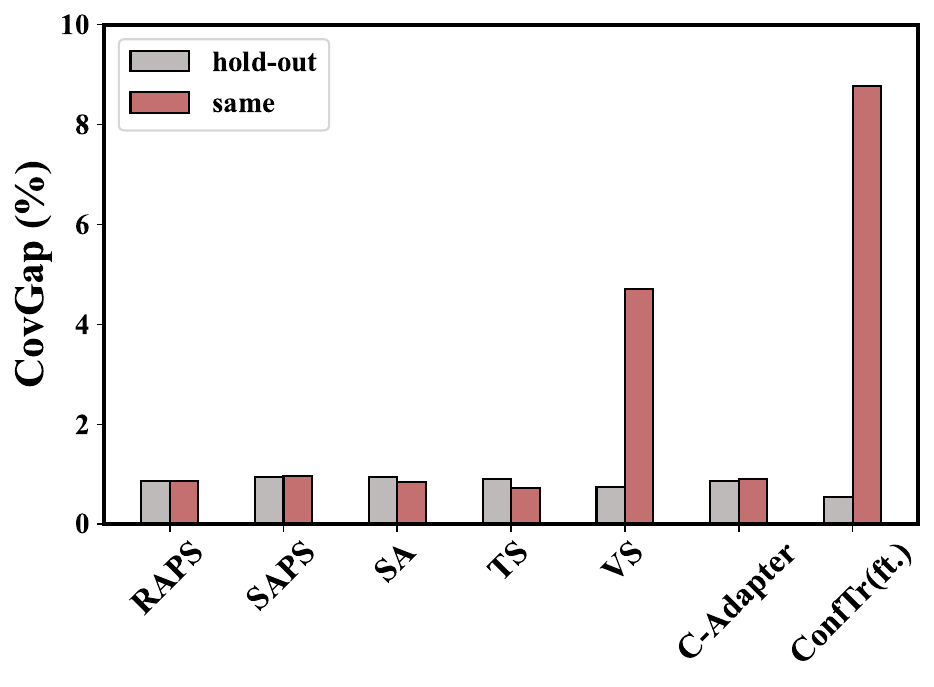}
    \end{subfigure} 
    \caption{\textbf{Tuning biases of various methods in conformal prediction}, using ResNet-18 on CIFAR-100 at \(\alpha = 0.1\).
    The APS is used except for RAPS, SAPS and SA\@, with a calibration set size of 1000.
    The ``hold-out'' and ``same'' denote that we use separate/same datasets for parameter tuning and calibration, respectively.
    \textit{Tuning bias} is the difference between the coverage gaps of models tuned in the setting of \textit{hold-out} and \textit{same}. 
    }
    \label{fig:motivation}
\end{figure}

\subsection{Tuning Bias of Current Methods}
We empirically investigate the potential influences of the tuning bias. 
The methods considered here include RAPS~\citep{angelopoulos2021uncertainty}, SAPS~\citep{huang2024conformal}, score aggregation~\citep[SA]{luo2024weighted}, temperature scaling (TS), vector scaling (VS)~\citep{guo2017calibration, xi2024does, dabah2024temperature}, C-Adapter~\citep{liu2025cadapter}, and the fine-tuning version of ConfTr~\citep{stutz2022learning}. 
We conduct experiments on the CIFAR-100 dataset~\citep{krizhevsky2009learning} and use pre-trained model ResNet-18~\citep{he2016deep}. 
The introduction of methods and detailed experimental setup are provided in Appendix~\ref{sec:basic_experimental_setup}. 
The size of the calibration set considered here is 1000.
We use separate or the same datasets for parameter tuning and calibration, respectively.
For each experiment, the test set comprises the remaining data.
By default, we use the APS score for experiments involving confidence calibration and conformal training.

\paragraph{Tuning bias is not always negligible for the parameter tuning in conformal prediction.}
In Figure~\ref{fig:motivation}, we present the tuning biases caused by various tuning methods using ResNet-18 on CIFAR-100.  
The results indicate that most methods do not introduce significant tuning biases, achieving similar coverage gaps regardless of whether the same dataset or separate datasets are used for calibration and tuning. 
For example, the coverage gaps of RAPS using the same and separate datasets are almost identical, as shown in Figure~\ref{fig:motivation}. 
However, in vector scaling and ConfTr (ft.), using the same datasets could result in much larger coverage gaps than using a hold-out dataset for tuning. 
We conjecture that these differences may be due to the large number of parameters tuned in these methods. 
In what follows, we provide in-depth analyses to explore the factors influencing tuning biases in these methods.

\begin{figure*}[t]
    \centering
    \begin{subfigure}[b]{0.49\textwidth}
        \includegraphics[width=\textwidth]{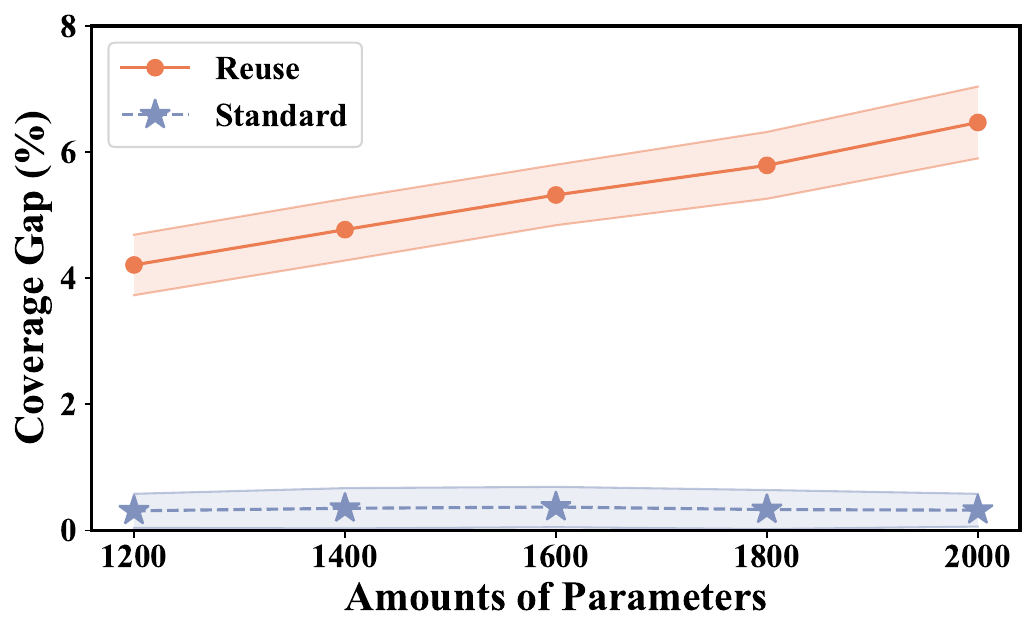}
        \caption{\textbf{The complexity of parameter space.} }    
    \end{subfigure}
    \begin{subfigure}[b]{0.49\textwidth}
        \includegraphics[width=\textwidth]{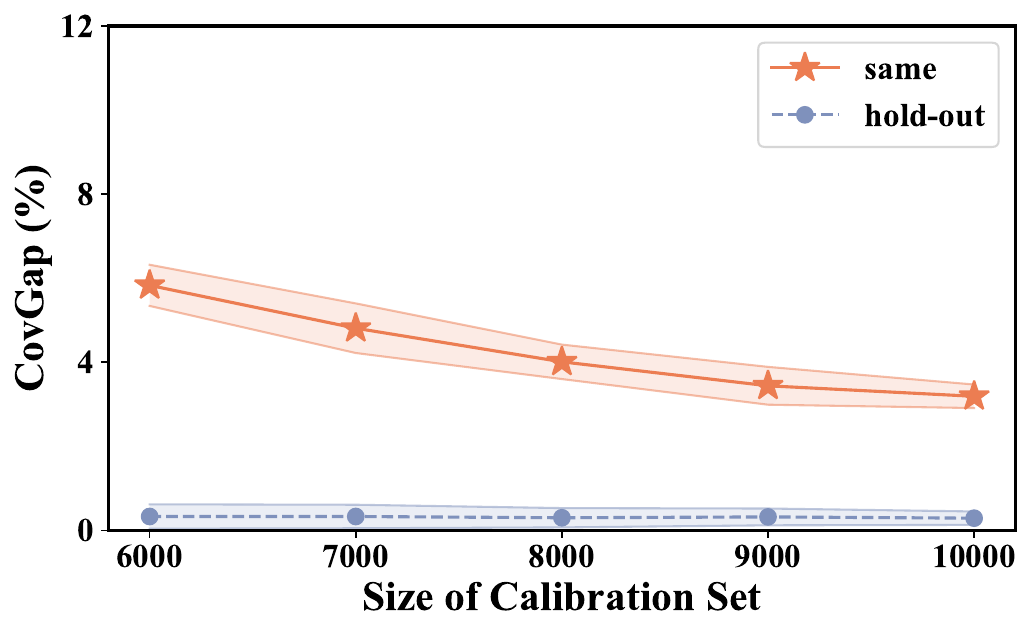}
        \caption{\textbf{The size of the calibration set.} }  
    \end{subfigure}
    \caption{\textbf{Parametric scaling law, 
    }
    on (a) the complexity of parameter space and (b) the size of the calibration set, using ResNet-18 on ImageNet using APS\@.
    The calibration set size for Figure (a) is 6000.
    The `hold-out' and `same' denote that we use separate/same datasets for parameter tuning and calibration, respectively.
    \textit{Tuning bias} is the difference between the coverage gaps of models tuned in the setting of \textit{hold-out} and \textit{same}.
    The tuning bias scales up with the number of parameters and scales down with the size of the calibration set.
    }
    \label{fig:empirical_analysis}
\end{figure*}

\subsection{Parametric Scaling Laws of Tuning Bias}
We conduct an additional analysis from two key perspectives: (1) the number of parameters and (2) the size of the calibration set. 
The experiments provide deeper insights into how tuning methods influence bias formation. 
By the convention to control the number of unfrozen parameters, we focus on confidence calibration using vector scaling.
We could control the number of unfrozen parameters by freezing a portion of the parameters in vector scaling. 
We \textit{freeze} a portion of the parameters in vector scaling. 
In detail, we do not let the proportion of the parameters be optimized randomly. 
The ratio of frozen parameters ranges from \(0\%\) to \(40\%\) by \(10\%\) increments. 
The detail of vector scaling is provided in \cref{subsec:parameter_tuning}. 
We use separate or the same datasets for parameter tuning and calibration, respectively, with APS.
All methods of conformal prediction are under the same error rate $\alpha=0.1$.
To ensure the reliability of our results, we repeat each experiment 30 times and report both the average result and its corresponding standard deviation. 

\paragraph{The tuning bias scales up with the complexity of parameter space.}
We first consider the tuning bias under a different number of unfrozen parameters in Figure~\ref{fig:empirical_analysis} (a).
As shown in Figure~\ref{fig:empirical_analysis} (a), the tuning bias increases with the number of unfrozen parameters in vector scaling. 
With a larger number of parameters, the tuning bias increases by more than \(50\%\) compared to the lower number of parameter spaces. 
In general conclusions in learning theory, the higher number of parameters to tune means more complex parameter space without additional constraints. 
The more complex parameter space leads to more generalization bias. 
Here, our empirical results also support this conclusion: \textit{the tuning bias scales with the complexity of parameter space}.

\paragraph{The effect of calibration set size}
Then, we consider the tuning bias under different sizes of the calibration set on Figure~\ref{fig:empirical_analysis} (b).
The calibration set size varies from 6000 to 10000 with an increment of 1000.
The parameter tuning method used here is vector scaling with full parameter space. 
For the size of the calibration set, the tuning bias decreases with the size of the calibration set increasing, as shown in~\cref{fig:empirical_analysis} (b).
The tuning bias decreases about 50\% from the size of 6000 to 10000.
As the size of the calibration set increases, the tuning bias approaches zero.
The empirical results support the conclusion that \textit{the tuning bias scales down with the size of the calibration set}.

\subsection{Intuitive Explanation}
\label{subsec:intuitive_explanation}
Combining the above two empirical results, we could see the tuning bias scales up with the complexity of parameter space and down with the size of the calibration set empirically.
Based on the framework of learnability of conformal prediction~\citep{gupta2022nested,bai2022efficient}, we could reinterpret the threshold calculation step~\Eqref{eq:threshold} as solving a constrained empirical risk minimization (ERM) problem with a single learnable parameter: 
\begin{align*}
    \hat{t} & = \arg\min_{t} \sum_{i\in\mathcal{I}_{\text{cal}}}|\widehat{\C}_{t}(\boldsymbol{x}_i)|/n  \\ 
    & \text{ such that } \sum_{i \in \mathcal{I}_{\text{cal}}} \id{ y_i \notin \widehat{\C}_{t}(\boldsymbol{x}_i)}/n \leq \alpha
\end{align*}
where $\mathcal{I}_{\text{cal}}$ is the index set of the hold-out set $\mathcal{D}_{\text{cal}}$.
Then, the tuning parameter in the conformal prediction could be placed into the optimization problem with the learnable parameter \((\lambda, t)\). 
According to the general results in ERM theory, a more complex parameter space requires more data to learn the optimal tuning parameters. 
These results intuitively explain the phenomenon that appears above. 
Further, we will quantify the parametric scaling law of the tuning bias on the complexity of parameter space and the size of the calibration set, respectively, under the constrained ERM framework.
To our knowledge, we are the first to explore this framework for the learnability of tuning parameters in the context of conformal prediction.
Based on the learnability of tuning parameters in the context of conformal prediction, we further establish the theoretical results of the tuning bias in the following section.

In summary, the empirical results reveal that the tuning bias is negligible for simple parameter tuning and scales up with the complexity of parameter space and down with the size of the calibration set. 
We intuitively explain the empirical results using the learnability of tuning parameters in the context of conformal prediction.
Next, we will provide the theoretical results of the tuning bias in the following section.

\section{Theoretical Results}
\label{sec:theoretical_results}
In this section, we provide the theoretical results for the tuning bias. 
We first introduce the problem formulation in Section~\ref{subsec:tuning_bias}. 
We introduce the tuning bias in general cases. 
Then, we consider the theoretical results of the tuning bias in finite and infinite parameter cases in Section~\ref{subsec:finite_parameter_tuning_bias} and Section~\ref{subsec:infinite_parameter_tuning_bias}, respectively. 
The application of these results to some specific cases is also considered. 

\subsection{Problem Formulation}
\label{subsec:tuning_bias}

For a score function $S^{\lambda}$ with $\lambda \in \Lambda$, the prediction set is defined as:
\begin{equation*}
    \widehat{\C}^{\lambda}_t(\boldsymbol{x}) := \{y \in \mathcal{Y} : S^\lambda(\boldsymbol{x},y) \leq t\},
\end{equation*}
where $t$ is a threshold. 
Given a dataset $\mathcal{D}$, we denote the set of scores under parameter $\lambda$ as $\mathcal{S}^{\lambda}_{\mathcal{D}} := \{S^\lambda(\boldsymbol{x}_i, y_i) \mid (\boldsymbol{x}_i, y_i) \in \mathcal{D}\}$ and $\mathcal{S}_{\mathcal{D}} := \{S(\boldsymbol{x}_i, y_i) \mid (\boldsymbol{x}_i, y_i) \in \mathcal{D}\}$. 
The associated threshold for $\mathcal{S}^{\lambda}_{\mathcal{D}}$ is defined as:
\begin{equation} \label{eq:threshold_reusing_data}
    \hat{t}^\lambda_\mathcal{D} = Q_{((1-\alpha)(1+1/n))}{(\mathcal{S}^{\lambda}_{\mathcal{D}})},
\end{equation}
where $Q_{p}(\mathcal{S})$ denotes the $p$-th empirical quantile of non-empty set $\mathcal{S}$: \(Q_{p}(\mathcal{S}) = \inf\{q: |\{s \in \mathcal{S}: s \leq q\}| \geq p |\mathcal{S}|\}\).
We formulate the parameter tuning using the same set as the calibration set for conformal prediction as follows: the tuning parameter $\hat{\lambda}$ is selected by minimizing a pre-specified loss function $\ell$ on the calibration set $\mathcal{D}_{\text{cal}}$:
\begin{equation*}
    \hat{\lambda} := \hat{\lambda}_{\mathcal{D}_{\text{cal}}} 
    = \arg\min_{\lambda \in \Lambda} \ell(\lambda, \mathcal{S}^{\lambda}_{\mathcal{D}_{\text{cal}}}). 
\end{equation*}
The conformal prediction set with parameter tuning is reformulated as: 
\begin{equation}\label{eq:reusing_data_cp}
    {\C}(\boldsymbol{x}) := \widehat{\C}^{\hat{\lambda}}_{\hat{t}^{\hat{\lambda}}_{\mathcal{D}_{\text{cal}}}}(\boldsymbol{x}).
\end{equation}

\paragraph{Tuning bias} 
While this procedure appears straightforward, using the same set for both parameter tuning and conformal prediction introduces a tuning bias.
To understand this phenomenon, let us examine why the standard split conformal prediction works. 
The validity guarantee in \Eqref{eq:validity} relies on the exchangeability assumption - specifically, the score function $S$ is independent of the calibration set $\mathcal{D}_{\text{cal}}$. 
This independence ensures that the score of a test sample $S(\boldsymbol{x}_{\text{test}}, y_{\text{test}})$ is exchangeable with the scores of calibration samples $\{S(\boldsymbol{x}_i, y_i) \mid (\boldsymbol{x}_i, y_i) \in \mathcal{D}_{\text{cal}}\}$.
However, if we reuse the same set for both parameter tuning and calibration for conformal prediction, this crucial exchangeability property is violated. 
The transformed score $S^{\hat{\lambda}}$ depends on the calibration set $\mathcal{D}_{\text{cal}}$ through the parameter selection process.
Consequently, the scores of test samples lose their exchangeability with the scores of calibration samples, invalidating the coverage guarantee in \Eqref{eq:validity}.
To understand this phenomenon mathematically, we demonstrate the coverage gap for the conformal prediction with parameter tuning by decomposing the coverage gap into two components:
\begin{theorem}[Coverage gap for conformal prediction with parameter tuning]
\label{thm:tuning_bias}
    If the samples in $\mathcal{D}_{\text{cal}} \cup \mathcal{D}_{\text{test}}$ are independent and identically distributed, then the coverage gap of the conformal prediction set with parameter tuning $\widehat{\C}$ defined in \Eqref{eq:reusing_data_cp} are bounded as follows:
    \begin{equation*}
        \CovGap({\C}) \leq \E \mathfrak{R}_{\Lambda} + \varepsilon_{\alpha, n},
    \end{equation*}
    where \(\varepsilon_{\alpha, n}:= \lceil (1+n) (1-\alpha) \rceil/n - (1-\alpha) \geq 0\) is the coverage gap of the standard split conformal prediction set with the size of the calibration set $n$ and error rate $\alpha$, and the term \(\mathfrak{R}_{\Lambda}\) with $\mathcal{T} \subset \mathbb{R}$,
    \begin{equation*}
        \begin{aligned}
            \mathfrak{R}_{\Lambda} := 
            \sup_{\lambda \in \Lambda, t \in \mathcal{T}} \Big|
                & \frac{1}{n} \sum_{i \in [n]} \id{S^{\lambda}(\boldsymbol{x}_i, y_i) \leq t} \\
                & - \P\of{S^{\lambda}(\boldsymbol{x}_{\text{test}}, y_{\text{test}})\leq t \mid \mathcal{D}_{\text{cal}}}
            \Big|.
        \end{aligned}
    \end{equation*} 
    
\end{theorem}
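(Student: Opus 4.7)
The plan is to condition on the calibration set $\mathcal{D}_{\text{cal}}$, reduce the coverage probability to the true CDF of $S^{\hat\lambda}$ evaluated at the data-dependent threshold, and then pay for the data-driven choice of $(\hat\lambda, \hat t)$ via the uniform empirical-process term $\mathfrak{R}_\Lambda$. Throughout, write $F_\lambda(t) := \P(S^\lambda(\boldsymbol{x}, y) \leq t)$ and $\hat F_\lambda(t) := \frac{1}{n}\sum_{i \in [n]} \id{S^\lambda(\boldsymbol{x}_i, y_i) \leq t}$; since the test point is independent of $\mathcal{D}_{\text{cal}}$, the conditional probability in the definition of $\mathfrak{R}_\Lambda$ equals $F_\lambda(t)$ for each fixed $(\lambda, t)$, so pathwise $\mathfrak{R}_\Lambda = \sup_{\lambda \in \Lambda, t \in \mathcal{T}} |\hat F_\lambda(t) - F_\lambda(t)|$.

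First, because $\hat\lambda$ and $\hat t = \hat t^{\hat\lambda}_{\mathcal{D}_{\text{cal}}}$ are measurable functions of $\mathcal{D}_{\text{cal}}$ while $(\boldsymbol{x}_{\text{test}}, y_{\text{test}})$ is i.i.d.\ from the same law and independent of $\mathcal{D}_{\text{cal}}$, the tower property gives
\begin{equation*}
    \P\bigl(y_{\text{test}} \in \C(\boldsymbol{x}_{\text{test}})\bigr) = \E\bigl[F_{\hat\lambda}(\hat t)\bigr].
\end{equation*}
Second, plugging the random pair $(\hat\lambda, \hat t)$ into the pathwise inequality $|F_\lambda(t) - \hat F_\lambda(t)| \leq \mathfrak{R}_\Lambda$ (valid once $\mathcal{T}$ is chosen to contain $\hat t$, e.g.\ $\mathcal{T} = \mathbb{R}$) yields $|F_{\hat\lambda}(\hat t) - \hat F_{\hat\lambda}(\hat t)| \leq \mathfrak{R}_\Lambda$. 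Third, I would sandwich $\hat F_{\hat\lambda}(\hat t)$ using the empirical quantile in \Eqref{eq:threshold_reusing_data}: by construction $\hat F_{\hat\lambda}(\hat t) \geq \lceil(n+1)(1-\alpha)\rceil/n = (1-\alpha) + \varepsilon_{\alpha,n}$, and for every $\epsilon > 0$ also $\hat F_{\hat\lambda}(\hat t - \epsilon) < \lceil(n+1)(1-\alpha)\rceil/n$. Chaining these with the uniform bound produces, pathwise,
\begin{equation*}
    (1-\alpha) - \mathfrak{R}_\Lambda \;\leq\; F_{\hat\lambda}(\hat t) \;\leq\; (1-\alpha) + \varepsilon_{\alpha,n} + \mathfrak{R}_\Lambda,
\end{equation*}
after letting $\epsilon \downarrow 0$ for the upper direction. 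Taking expectation over $\mathcal{D}_{\text{cal}}$ and combining the two sides delivers $|\P(y_{\text{test}} \in \C(\boldsymbol{x}_{\text{test}})) - (1-\alpha)| \leq \E \mathfrak{R}_\Lambda + \varepsilon_{\alpha,n}$, which is exactly the claim.

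The main obstacle is the upper direction of this sandwich: the empirical CDF can jump at $\hat t$, so one either invokes a standard continuity assumption on the score distribution or carries an extra $1/n$ slack that is absorbed into $\varepsilon_{\alpha,n}$. A secondary subtlety is ensuring that the uniform bound indeed applies to the data-selected pair $(\hat\lambda, \hat t)$; this is handled by taking the supremum over the joint function class $\{(\boldsymbol{x}, y) \mapsto \id{S^\lambda(\boldsymbol{x}, y) \leq t} : \lambda \in \Lambda, t \in \mathcal{T}\}$. Quantifying $\E \mathfrak{R}_\Lambda$ itself, by a union bound when $|\Lambda| < \infty$ and by covering-number or VC/Rademacher arguments in the infinite-parameter case, is precisely the content of the two subsections that follow and drives the parametric scaling law.
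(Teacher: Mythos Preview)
Your approach is essentially the same as the paper's: condition on $\mathcal{D}_{\text{cal}}$, apply the uniform bound $|\hat F_\lambda(t)-F_\lambda(t)|\le\mathfrak{R}_\Lambda$ at the data-driven pair $(\hat\lambda,\hat t)$, control $\hat F_{\hat\lambda}(\hat t)$ via the empirical-quantile definition, and take expectation. The paper's proof simply asserts the equality $\hat F_{\hat\lambda}(\hat t)=\lceil(n+1)(1-\alpha)\rceil/n$ (implicitly assuming distinct scores), which immediately yields the two-sided sandwich; you are more careful in flagging the tie/atom issue for the upper direction, which the paper does not discuss.
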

The proof of Theorem~\ref{thm:tuning_bias} is provided in \cref{sec:proof_tuning_bias}.
The upper bound of the coverage gap could be decomposed into two components: $\E\mathfrak{R}_{\Lambda}$ and $\varepsilon_{\alpha, n}$. 
The first term, $\E\mathfrak{R}{\Lambda}$, introduces an additional bound for the coverage gap, which is not present in the standard conformal prediction framework, containing only the second term, $\varepsilon_{\alpha, n}$. 
As the definition of tuning bias in~\cref{def:tuning_bias}, we could conclude that
\begin{equation*}
    \TuningBias({\mathcal{C}}) \leq \E \mathfrak{R}_{\Lambda}.
\end{equation*}
In fact $\E \mathfrak{R}_{\Lambda}$ is closely related to the classical empirical process theory~\citep{vandervaart1996weak, vershynin2018highdimensional}. 
We could define the class associated with $\mathfrak{R}_{\Lambda}$ as:
    \begin{equation*}
        \mathcal{H}_{\Lambda} = \left\{ \id{S^\lambda(\boldsymbol{x},y)\le t}, \lambda\in \Lambda, t\in \mathbb{R} \right\}.
    \end{equation*}
The bound $\E \mathfrak{R}_{\Lambda}$ is the empirical process on the class $\mathcal{H}_{\Lambda}$: 
\begin{equation*}
    \E \mathfrak{R}_{\Lambda} = 
    \E \sup_{g \in \mathcal{H}_{\Lambda}} \Big|
         \frac{1}{n} \sum_{i \in [n]} g(\boldsymbol{x}_i, y_i) - \E[g(\boldsymbol{x}_{\text{test}}, y_{\text{test}}) \mid \mathcal{D}_{\text{cal}}]
    \Big|.
\end{equation*}
It depends on the complexity of parameter space $\Lambda$.
Theorem~\ref{thm:tuning_bias} is a general result that applies to any parameter space $\Lambda$ without any constraint. Next, we provide the bound analysis of the tuning bias in finite and infinite parameter space and consider several applications using these results.

\subsection{Tuning Bias in Finite Parameter Space}
\label{subsec:finite_parameter_tuning_bias}
When the parameter space $\Lambda$ is finite, we bound the tuning bias using classical concentration inequality, Dvoretzky–Kiefer–Wolfowitz inequality~\citep{dvoretzky1956asymptotic}, on the empirical process with the finite union of probability:

\begin{proposition}
    \label{prop:finite_parameter_tuning_bias}
    For a finite parameter space $\Lambda$, we have
    \begin{equation*}
        \TuningBias({\mathcal{C}}) \leq
        \sqrt{\frac{\log(2|\Lambda|)}{2n}} + \frac{1}{\sqrt{2n}\sqrt{\log(2|\Lambda|)}}.
    \end{equation*}
\end{proposition}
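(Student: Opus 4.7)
The plan is to deduce the bound as a consequence of Theorem~\ref{thm:tuning_bias} combined with the classical DKW inequality and a union bound. By Theorem~\ref{thm:tuning_bias}, it suffices to establish $\E \mathfrak{R}_{\Lambda} \leq \sqrt{\log(2|\Lambda|)/(2n)} + 1/(\sqrt{2n}\sqrt{\log(2|\Lambda|)})$. The first observation is that for each fixed $\lambda \in \Lambda$ (nonrandom, since $\Lambda$ is fixed), the scores $\{S^{\lambda}(\boldsymbol{x}_i, y_i)\}_{i=1}^n$ are i.i.d.\ copies of $S^{\lambda}(\boldsymbol{x}_{\text{test}}, y_{\text{test}})$, so $\P(S^{\lambda}(\boldsymbol{x}_{\text{test}}, y_{\text{test}}) \leq t \mid \mathcal{D}_{\text{cal}}) = F^{\lambda}(t)$, the marginal CDF of $S^{\lambda}$. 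Thus the inner supremum over $t$ is exactly the Kolmogorov--Smirnov distance between the empirical CDF and $F^{\lambda}$.

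Next I would invoke the DKW inequality, which yields, for each fixed $\lambda$, $\P(\sup_{t} |\hat{F}^{\lambda}_n(t) - F^{\lambda}(t)| > \varepsilon) \leq 2 e^{-2n\varepsilon^2}$. A union bound over the finitely many $\lambda \in \Lambda$ then gives the tail bound $\P(\mathfrak{R}_{\Lambda} > \varepsilon) \leq 2|\Lambda|\, e^{-2n\varepsilon^2}$.

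To convert this into a bound on the expectation, I would use the layer-cake identity $\E \mathfrak{R}_{\Lambda} = \int_0^{\infty} \P(\mathfrak{R}_{\Lambda} > \varepsilon)\, d\varepsilon$, split the integral at the threshold $\varepsilon_0 := \sqrt{\log(2|\Lambda|)/(2n)}$ (the value where the DKW union bound equals $1$), bound the probability by $1$ on $[0,\varepsilon_0]$, and use the tail bound on $[\varepsilon_0,\infty)$. The remaining Gaussian integral $\int_{\varepsilon_0}^{\infty} 2|\Lambda| e^{-2n\varepsilon^2}\, d\varepsilon$ is handled via the change of variables $u = \sqrt{2n}\,\varepsilon$ and the standard Mills-ratio-type estimate $\int_x^{\infty} e^{-u^2} du \leq e^{-x^2}/(2x)$ applied at $x = \sqrt{\log(2|\Lambda|)}$, which produces precisely the $1/(\sqrt{2n}\sqrt{\log(2|\Lambda|)})$ residual.

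The main obstacle, and really the only delicate step, is the bookkeeping in the last computation: one must choose $\varepsilon_0$ to exactly balance the trivial and DKW-based pieces, and then estimate the tail integral tightly enough to recover the stated second term with the correct constant. Everything else follows from standard tools once one recognizes that fixing $\lambda$ reduces $\mathfrak{R}_{\Lambda}$ to a supremum of KS statistics and that $|\Lambda| < \infty$ permits a direct union bound rather than a more elaborate covering or chaining argument.
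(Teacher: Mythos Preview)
Your proposal is correct and follows essentially the same route as the paper: DKW for each fixed $\lambda$, union bound over the finite $\Lambda$, layer-cake for the expectation split at $\varepsilon_0=\sqrt{\log(2|\Lambda|)/(2n)}$, and the Gaussian tail estimate $\int_x^\infty e^{-u^2}\,du\le e^{-x^2}/(2x)$ to handle the remaining integral. The paper's argument differs only in presentation, not in substance.
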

The proof of Proposition~\ref{prop:finite_parameter_tuning_bias} is provided in \cref{sec:proof_finite_parameter_tuning_bias}.
The result in Proposition~\ref{prop:finite_parameter_tuning_bias} is a general result that applies to any finite parameter space $\Lambda$. 
The examples include parameter tuning in some specific score functions~\citep{angelopoulos2021uncertainty,huang2024conformal}, score selection~\citep{yang2024selection} and aggregation~\citep{luo2024weighted} with finite candidates, and early stopping in conformal prediction~\citep{liang2023conformal} and so on.

\paragraph{Application to parameter tuning in RAPS score} The Regularized Adaptive Prediction Sets (RAPS) method~\citep{angelopoulos2021uncertainty} introduces a score function that requires tuning two hyperparameters:
\begin{equation*} \label{eq:raps}
S_{\text{RAPS}}(\boldsymbol{x}, y, p) = S_{\text{APS}}(\boldsymbol{x}, y, p) + \gamma \cdot (o(y, \boldsymbol{x}, p) - k_{\text{reg}})^+,
\end{equation*}
where \(S_{\text{APS}}\) is defined in \Eqref{eq:aps}, \(o(y, \boldsymbol{x}, p)\) represents the rank of probability \(p(y|\boldsymbol{x})\) among all possible classes, \(p(y'|\boldsymbol{x}) \ (y' \in [K])\), and  \(\gamma\) and \(k_{\text{reg}}\) are hyperparameters. 
The notation \((z)^+\) signifies the positive part of \(z\). 
The RAPS method optimize the set \textit{size} or \textit{adaptiveness} with hyperparameters \(\lambda = (\gamma, k_{\text{reg}})^\top\) from a predetermined finite parameter space with\(|\Lambda_{\text{RAPS}}| = M\cdot \bar{K}\) where \(M\) is the number of grid points for \(\gamma\), and \(\bar{K}\) is the pre-specified max number of classes~\citep{angelopoulos2021uncertainty}. 
The tuning bias is bounded by Proposition~\ref{prop:finite_parameter_tuning_bias} as follows:
\begin{corollary}
    \label{corollary:tuning_bias_raps}
    For a classification problem and the hold-out set size \(n\), the tuning bias of RAPS is bounded by:
    \begin{equation*}
        \TuningBias({\mathcal{C}}_{\text{RAPS}}) \leq \sqrt{\frac{\log(2M\bar{K})}{2n}} + \frac{1}{\sqrt{2n}\sqrt{\log(2M\bar{K})}}.
    \end{equation*}
\end{corollary}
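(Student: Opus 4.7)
The plan is to treat this corollary as a direct plug-in of Proposition~\ref{prop:finite_parameter_tuning_bias}, since the work of identifying RAPS's parameter space as finite has already been done in the paragraph preceding the statement. Concretely, I would verify that the RAPS score $S_{\text{RAPS}}$ fits the parametric score-transformation framework of Section~\ref{subsec:tuning_bias}: the underlying APS score is fixed, and the transformation $\sigma_\lambda$ that maps $S_{\text{APS}}$ to $S_{\text{RAPS}}$ is indexed by $\lambda = (\gamma, k_{\text{reg}})^\top$. This places RAPS squarely in the setup where the tuning parameter $\hat\lambda$ is chosen by minimizing a loss (set size or an adaptiveness criterion) over a candidate set, as in Definition~\ref{def:tuning_bias}.

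Next, I would argue that the candidate set is finite of cardinality $|\Lambda_{\text{RAPS}}| = M\bar K$: the grid for $\gamma$ contributes $M$ values, and $k_{\text{reg}}$ is an integer rank lying in $\{1,\dots,\bar K\}$, contributing $\bar K$ values, so by product-counting the total is $M\bar K$. Because the tuning search is confined to this discrete grid, the hypothesis of Proposition~\ref{prop:finite_parameter_tuning_bias} is satisfied.

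Then I would invoke Proposition~\ref{prop:finite_parameter_tuning_bias} with $|\Lambda|$ replaced by $M\bar K$, which yields
\begin{equation*}
\TuningBias({\mathcal{C}}_{\text{RAPS}}) \leq \sqrt{\frac{\log(2M\bar K)}{2n}} + \frac{1}{\sqrt{2n}\sqrt{\log(2M\bar K)}},
\end{equation*}
matching the stated bound. No further calculation is required.

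The only genuinely non-trivial issue, and hence the main ``obstacle'', is a bookkeeping/modeling one rather than a technical one: one must justify that the loss used to select $(\gamma, k_{\text{reg}})$ in the RAPS pipeline (e.g., the size or adaptiveness objective evaluated on the same set used for calibration) really has the form $\ell(\lambda, \mathcal{S}^{\lambda}_{\mathcal{D}_{\text{cal}}})$ required in Section~\ref{subsec:tuning_bias}, so that Theorem~\ref{thm:tuning_bias} and Proposition~\ref{prop:finite_parameter_tuning_bias} are applicable as stated. Once that is noted, the corollary follows immediately.
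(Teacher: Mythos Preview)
Your proposal is correct and matches the paper's own proof, which simply states that the corollary is a direct application of Proposition~\ref{prop:finite_parameter_tuning_bias} with $|\Lambda| = M\bar K$. The additional bookkeeping you flag (that the RAPS tuning loss fits the form $\ell(\lambda,\mathcal{S}^\lambda_{\mathcal{D}_{\text{cal}}})$) is a reasonable sanity check, but the paper does not elaborate on it either.
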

The proof of Corollary~\ref{corollary:tuning_bias_raps} is provided in \cref{sec:proof_tuning_bias_raps_score_aggregation}.
Although the RAPS method~\citep{angelopoulos2021uncertainty} recommends using an additional hold-out set for hyperparameter selection, our findings allow for direct hyperparameter selection without the need for an extra hold-out set.
As for SAPS~\citep{huang2024conformal}, the parameter space is also finite, and we could also apply the same analysis to SAPS\@.

\paragraph{Application to score selection with finite candidate}
Considering there are $M$ candidate score functions $\{S_m, m \in [M]\}$ for constructing conformal prediction sets, let $\mathcal{C}_m(\boldsymbol{x})$ be the prediction set obtained using score function $S_m$ at a given error rate $\alpha$. 
The optimal score function is selected by minimizing the empirical size of the prediction set:
\begin{equation*}
    \hat{m} = \argmin{m \in \{1,\ldots,M\}} \sum_{i \in \mathcal{I}_{\text{cal}}}|\mathcal{C}_m(\boldsymbol{x}_i)|/|\mathcal{I}_{\text{cal}}|.
\end{equation*}
This selection process introduces a tuning bias that can be bounded using Proposition~\ref{prop:finite_parameter_tuning_bias} with $|\Lambda| = M$. 
This case is also considered by the score selection issues~\citep{yang2024selection}.  
They propose a score selection framework for constructing minimal conformal prediction regions by selecting optimal score functions from finite candidates. 
Their work introduces two methods. 
The first optimizes prediction width using the same hold-out set as the calibration set of conformal prediction, and the second maintains coverage guarantees using a separate hold-out set. 
The first achieves minimal prediction interval width with approximate coverage guarantees. 
It provides a bound of coverage gap for the first method, as a special case of our result:
\begin{corollary}
    \label{corollary:tuning_bias_score_aggregation}
    For a classification problem, and the hold-out set size \(n\), the tuning bias of selection of score functions~\citep{yang2024selection} with the number of candidates \(M\) is bounded by:
    \begin{equation*}
        \TuningBias({\mathcal{C}}) \leq \sqrt{\frac{\log(2M)}{2n}} + \frac{1}{\sqrt{2n}\sqrt{\log(2M)}}.
    \end{equation*}
\end{corollary}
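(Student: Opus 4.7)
The statement is a direct specialization of Proposition~\ref{prop:finite_parameter_tuning_bias} to the score-selection setting of \citet{yang2024selection}. The only real content is to identify the parameter space $\Lambda$ associated with the selection rule with the finite candidate set $\{S_m : m\in[M]\}$, verify that it has cardinality $M$, and then substitute $|\Lambda|=M$ into the already-established bound.

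\textbf{Main steps.} First, I would cast score selection into the parametric scoring framework of \cref{subsec:tuning_bias}. Define a base score $S$ (arbitrary, e.g., $S\equiv 0$) and a family of transformations $\sigma_m$ indexed by $m\in[M]$ such that $S^{\sigma_m}(\boldsymbol{x},y)=S_m(\boldsymbol{x},y)$ for each $m$. Then the parameter space is $\Lambda=\{1,\dots,M\}$, which is finite with $|\Lambda|=M$. The data-dependent selection
\[
\hat m = \argmin{m\in[M]} \frac{1}{|\mathcal{I}_{\text{cal}}|}\sum_{i\in\mathcal{I}_{\text{cal}}}|\mathcal{C}_m(\boldsymbol{x}_i)|
\]
is precisely a minimizer of a loss $\ell(m,\mathcal{S}^m_{\mathcal{D}_{\text{cal}}})$ of the form admitted in Section~\ref{subsec:tuning_bias} (any loss that can be computed from the scores on the calibration set is allowed), so the setup fits the hypotheses of Theorem~\ref{thm:tuning_bias} and Proposition~\ref{prop:finite_parameter_tuning_bias}.

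Second, I would invoke Proposition~\ref{prop:finite_parameter_tuning_bias} with $|\Lambda|=M$ to obtain
\[
\TuningBias(\mathcal{C})\le\sqrt{\frac{\log(2M)}{2n}}+\frac{1}{\sqrt{2n}\sqrt{\log(2M)}},
\]
which is exactly the claim of the corollary.

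\textbf{Obstacle.} There is no substantive obstacle: the proposition does the heavy lifting via the DKW-type union bound over the finite class $\mathcal{H}_\Lambda$. The only care needed is bookkeeping to ensure that the ``size-minimizing'' selection rule used by \citet{yang2024selection} is indeed of the form covered by the abstract parameter-tuning formulation in Section~\ref{subsec:tuning_bias}, i.e., that selecting $\hat m$ via an empirical objective computable from $\mathcal{S}^m_{\mathcal{D}_{\text{cal}}}$ (and the thresholds $\hat t^m_{\mathcal{D}_{\text{cal}}}$) does not require any additional structure beyond what Theorem~\ref{thm:tuning_bias} already assumes. Once this correspondence is made explicit, the bound is a one-line substitution.
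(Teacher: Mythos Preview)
Your proposal is correct and matches the paper's own proof, which simply states that the corollary is a direct application of Proposition~\ref{prop:finite_parameter_tuning_bias} with $|\Lambda|=M$. Your additional bookkeeping---explicitly casting the score-selection rule as a parametric family indexed by $\Lambda=\{1,\dots,M\}$ and checking that the size-minimizing criterion fits the abstract loss in Section~\ref{subsec:tuning_bias}---is a welcome elaboration but not something the paper spells out.
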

The proof of Corollary~\ref{corollary:tuning_bias_score_aggregation} is provided in \cref{sec:proof_tuning_bias_raps_score_aggregation}.
For the score aggregation~\citep{luo2024weighted}, we analyze it in \cref{sec:additional_theoretical_results}. 
Next, we extend our analysis to an infinite parameter space and apply it to conformal prediction with confidence calibration, where the parameter space is infinite.

\subsection{Tuning Bias in Infinite Parameter Space}
\label{subsec:infinite_parameter_tuning_bias}
Before introducing the infinite parameter case, we first introduce the concept of Vapnik-Chervonenkis (VC) dimension~\citep{vapnik1971uniform}. 
VC-dimension is a measure of the complexity of classes of Boolean functions. 
\begin{definition}[VC Dimension]
Let $\mathcal{F}$ be a class of Boolean functions defined on a domain $\Omega$,  $f: \Omega \to \{0,1\}$ for all $f \in \mathcal{F}$. A subset $\Omega' \subseteq \Omega$ is said to be \emph{shattered} by $\mathcal{F}$ if any function $g: \Omega' \to \{0,1\}$ can be obtained by restricting some function $f \in \mathcal{F}$ onto $\Omega'$. The VC dimension of $\mathcal{F}$, denoted by $\text{VC}(\mathcal{F})$, is the size of the largest subset $\Omega' \subseteq \Omega$ that can be shattered by $\mathcal{F}$. If the size of the largest shattered subset is infinite, we say the VC dimension is infinite.
\end{definition}
Although the concept may seem hard to understand, it can be viewed as the maximum number of points that the class of functions can shatter intuitively. 
With a more complex class of functions, it is more likely to shatter a larger number of points. 
It is a measure of the complexity of the class of functions, \(\mathcal{F}\).
The intuition is that the larger the complexity of the class of functions, the larger the bound of the empirical process is. 
By the definition of \(\mathcal{H}\) and Lemma~\ref{lemma:infinite_parameter_empirical_process} in Appendix~\ref{sec:useful_lemmas}, 
we develop a general bound of the tuning bias:
\begin{proposition}
    \label{prop:infinite_parameter_tuning_bias}
    For an infinite parameter space $\Lambda \subset \mathbb{R}^d$, we have the tuning bias is bounded by:
    \begin{equation*}
        \TuningBias(\widehat{\mathcal{C}}) \leq C\sqrt{\frac{d+1}{n}}
    \end{equation*}
    for some universal constant $C > 0$.
\end{proposition}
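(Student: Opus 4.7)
My plan is to chain Theorem~\ref{thm:tuning_bias} with an empirical-process bound for VC classes, using the hidden lemma Lemma~\ref{lemma:infinite_parameter_empirical_process} referenced in the text. First, by Theorem~\ref{thm:tuning_bias} I already have $\TuningBias(\C) \leq \E\mathfrak{R}_\Lambda$, and $\E\mathfrak{R}_\Lambda$ is the expected uniform deviation of the empirical measure from its conditional population counterpart over the Boolean class
\begin{equation*}
\mathcal{H}_\Lambda = \bigl\{ \id{S^\lambda(\boldsymbol{x},y) \leq t} : \lambda \in \Lambda,\ t \in \mathbb{R} \bigr\}.
\end{equation*}
So the whole problem reduces to controlling the VC dimension of $\mathcal{H}_\Lambda$ and then feeding this into the standard expected-supremum bound.

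Next, I would argue that $\mathrm{VC}(\mathcal{H}_\Lambda) \leq d+1$. The parameter $(\lambda,t)$ ranges in a subset of $\mathbb{R}^{d+1}$, and each element of $\mathcal{H}_\Lambda$ is the indicator of a sub-level set of the real-valued function $(\boldsymbol{x},y,\lambda,t) \mapsto S^\lambda(\boldsymbol{x},y) - t$. Under the mild structural assumption (implicit in the paper's setup) that this function sits inside a vector space of dimension $d+1$, a standard result on sub-level sets of finite-dimensional function spaces (see e.g.\ the ``VC subgraph'' / Dudley-type lemmas in \citealp{vandervaart1996weak}) gives $\mathrm{VC}(\mathcal{H}_\Lambda) \leq d+1$. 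I would invoke this uniformly over the score family being considered.

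Then I would apply Lemma~\ref{lemma:infinite_parameter_empirical_process}, which (as signalled by the result's form) must be the standard VC-based bound on the expected uniform deviation of a Boolean class: for a class $\mathcal{H}$ with $\mathrm{VC}(\mathcal{H}) = v$ and i.i.d.\ samples of size $n$,
\begin{equation*}
\E \sup_{g \in \mathcal{H}} \Bigl| \frac{1}{n}\sum_{i=1}^n g(\boldsymbol{x}_i,y_i) - \E g \Bigr| \leq C \sqrt{\frac{v}{n}},
\end{equation*}
which follows from symmetrization and Dudley's entropy integral together with the Haussler bound on the covering numbers of VC classes. Substituting $v \leq d+1$ yields $\E \mathfrak{R}_\Lambda \leq C\sqrt{(d+1)/n}$, and combining with Theorem~\ref{thm:tuning_bias} produces the claim.

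The main obstacle I anticipate is pinning down the VC-dimension step cleanly: absent an explicit parametric form for $S^\lambda$, one needs either a subgraph-class argument with a finite-dimensional spanning space, or an assumption that $\{S^\lambda : \lambda \in \Lambda\}$ is Euclidean in the sense of \citet{vandervaart1996weak}. I would adopt the former, since the paper already implicitly parameterizes by $\lambda \in \mathbb{R}^d$ and the threshold $t$ contributes one extra degree of freedom, giving the natural ``$d+1$'' appearing in the statement. Everything else is a routine application of the uniform deviation bound for VC classes.
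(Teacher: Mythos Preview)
Your proposal is correct and matches the paper's approach: the paper's proof is a one-line invocation of Lemma~\ref{lemma:tuning_bias_infinite_d} (which bounds $\mathrm{VC}(\mathcal{H}_\Lambda)\le d+1$) together with Lemma~\ref{lemma:infinite_parameter_empirical_process}, exactly the chain you describe. The only cosmetic difference is that the paper justifies the VC bound via a direct shattering argument assuming continuity of $S^\lambda$ in $\lambda$, whereas you sketch a subgraph-class / finite-dimensional span argument; both land at $d+1$ and plug into the same empirical-process lemma.
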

The proof of Proposition~\ref{prop:infinite_parameter_tuning_bias} is provided in \cref{sec:proof_infinite_parameter_tuning_bias}.
The bound in Proposition~\ref{prop:infinite_parameter_tuning_bias} is a general result that applies to any infinite parameter space $\Lambda \subset \mathbb{R}^d$. 
It shows that the tuning bias increases with the dimension of parameter space $d$ and decreases with the size of the hold-out set $n$.
Next, we will present an application for confidence calibration.

\paragraph{Application to confidence calibration} 
Consider a classification problem with $K$ classes, where we have two confidence calibration methods: Temperature scaling (TS) and vector scaling (VS) \citep{guo2017calibration, xi2024does, dabah2024temperature}. 
The details of temperature scaling and vector scaling with tuning parameter $\lambda$ are given in \cref{subsec:parameter_tuning}.
For the two methods, there is a significant difference in the dimension of parameter space $\Lambda$:
For temperature scaling, the parameter space is $\Lambda_{\text{TS}} = \mathbb{R}^+$, containing only one positive real number $\lambda$. 
For vector scaling, the parameter space is $\Lambda_{\text{VS}} = \mathbb{R}^{K} \times \mathbb{R}^K$, containing $2K$ parameters.
Since vector scaling has a more complex parameter space compared to temperature scaling, the tuning bias of vector scaling is larger than that of temperature scaling:
\begin{corollary}\label{corollary:tuning_bias_contrast_vs_ts}
    For a classification problem with $K$ classes, the calibration set size $n$ and the same pre-trained model, the tuning bias of temperature scaling is smaller than that of vector scaling: $\TuningBias({\mathcal{C}}_{\text{TS}}) \leq \TuningBias({\mathcal{C}}_{\text{VS}})$,
    where ${\mathcal{C}}_{\text{TS}}$ and ${\mathcal{C}}_{\text{VS}}$ are the prediction sets of temperature scaling and vector scaling, respectively.
\end{corollary}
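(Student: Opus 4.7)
The plan is to exploit the fact that temperature scaling is a structural special case of vector scaling, which yields a set inclusion between their associated hypothesis classes, and then to propagate this inclusion through the empirical-process bound of Theorem \ref{thm:tuning_bias}. Concretely, for any temperature $T \in \Lambda_{\text{TS}} = \mathbb{R}^+$, setting the vector-scaling weight $w = (1/T)\mathbf{1}_K$ and bias $b = \mathbf{0}_K$ produces identical softmax outputs, so the transformed probability vector (and hence the induced score function, e.g.\ APS composed with the calibrated probabilities) under temperature scaling coincides with that under vector scaling for this specific choice of $(w,b) \in \Lambda_{\text{VS}}$. Thus $\{S^\lambda : \lambda \in \Lambda_{\text{TS}}\} \subseteq \{S^\mu : \mu \in \Lambda_{\text{VS}}\}$.

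First, I would lift this inclusion to the indicator classes used in Theorem \ref{thm:tuning_bias}, giving
\begin{equation*}
\mathcal{H}_{\Lambda_{\text{TS}}} = \offf{\id{S^\lambda(\boldsymbol{x},y)\leq t} : \lambda \in \Lambda_{\text{TS}}, t \in \mathbb{R}} \subseteq \mathcal{H}_{\Lambda_{\text{VS}}}.
\end{equation*}
Since taking the supremum of a fixed functional over a larger class can only increase it, this inclusion yields the pointwise inequality $\mathfrak{R}_{\Lambda_{\text{TS}}} \leq \mathfrak{R}_{\Lambda_{\text{VS}}}$ for every realization of $\mathcal{D}_{\text{cal}}$, and hence $\E\mathfrak{R}_{\Lambda_{\text{TS}}} \leq \E\mathfrak{R}_{\Lambda_{\text{VS}}}$ by monotonicity of expectation. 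Applying Theorem \ref{thm:tuning_bias} to both methods (with the same pre-trained model, calibration size $n$, and error rate $\alpha$, so that the residual term $\varepsilon_{\alpha,n}$ is identical) bounds each tuning bias by its respective $\E\mathfrak{R}_\Lambda$, which then delivers the claimed ordering.

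The main obstacle is conceptual rather than technical: Theorem \ref{thm:tuning_bias} only provides upper bounds on the tuning biases, so the inclusion argument compares these bounds rather than the raw quantities. I would therefore phrase the corollary as an inequality between the canonical upper bounds our framework produces, and emphasize that the inclusion $\mathcal{H}_{\Lambda_{\text{TS}}} \subseteq \mathcal{H}_{\Lambda_{\text{VS}}}$ is the essential structural reason why enlarging the parameter class cannot decrease the worst-case empirical fluctuation. As a sanity check, I would also verify consistency with Proposition \ref{prop:infinite_parameter_tuning_bias}, which gives $O(\sqrt{2/n})$ for TS and $O(\sqrt{(2K+1)/n})$ for VS, respecting the same ordering and reinforcing that the gap widens with the number of classes $K$.
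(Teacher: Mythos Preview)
Your argument is correct and, in fact, cleaner than the paper's. The paper's proof is a one-line appeal to Proposition~\ref{prop:infinite_parameter_tuning_bias}: since $\Lambda_{\text{TS}}\subset\mathbb{R}$ has dimension $1$ while $\Lambda_{\text{VS}}\subset\mathbb{R}^{2K}$ has dimension $2K$, the VC-based bound $C\sqrt{(d+1)/n}$ is smaller for TS. Your route bypasses the VC machinery entirely: you observe the structural embedding of TS into VS (via $W=(1/T)\mathbf{1}_K$, $b=0$), lift it to $\mathcal{H}_{\Lambda_{\text{TS}}}\subseteq\mathcal{H}_{\Lambda_{\text{VS}}}$, and use monotonicity of the supremum to compare $\E\mathfrak{R}_{\Lambda}$ directly. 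This is precisely the pattern the paper itself uses later for Proposition~\ref{prop:tuning_bias_order_preserving}, so your approach is well within the paper's own toolkit---just applied earlier and more directly. What the paper's approach buys is an explicit quantitative gap ($\sqrt{2/n}$ versus $\sqrt{(2K+1)/n}$), which you recover only as a sanity check; what your approach buys is that it works without any assumption on the VC dimension of $\mathcal{H}_{\Lambda}$ and makes transparent that the ordering is driven by nesting of the parameter classes rather than a dimension count.

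You are also right to flag the conceptual caveat: both your argument and the paper's compare the upper bounds $\E\mathfrak{R}_{\Lambda}$ supplied by Theorem~\ref{thm:tuning_bias}, not the tuning biases themselves as defined in Definition~\ref{def:tuning_bias}. The paper is silent on this point, so your explicit acknowledgment is an improvement in rigor, not a defect in your proof.
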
 
The proof of Corollary~\ref{corollary:tuning_bias_contrast_vs_ts} is provided in \cref{sec:proof_tuning_bias_contrast_vs_ts}. The result shows that the tuning bias of temperature scaling is smaller than that of vector scaling. We provide a result for the binary classification case in \cref{sec:discussion}, where the tuning bias of temperature scaling is zero.

In this section, we derive the upper bounds of tuning bias in finite and infinite spaces, proving the parametric scaling law of tuning bias, and provide examples including RAPS, score selection and confidence calibration. 
Next, we discuss how to reduce the tuning bias with two specific cases. 

\section{Discussion of Potential Solutions}
\label{sec:discussion}
In the previous analysis, we demonstrate the parametric scaling law of tuning bias in relation to parameter space complexity and calibration set size.
Here, we provide an initial discussion of potential solutions to mitigate the tuning bias that arises from using the same data for both calibration and parameter tuning. 
It is important to note that the common practice is to use separate hold-out datasets for calibration and parameter tuning.
Mitigating tuning bias can be particularly beneficial in real-world applications when splitting the dataset is impractical due to data scarcity.

Following the scaling law, we can implement two principles to reduce the tuning bias: increasing the size of calibration sets or reducing the complexity of tuning parameters. The former indicates that we can decrease the bias by collecting more data points for the calibration set, but it is challenging in data-scarce scenarios. We then focus on the latter~-~reducing the complexity of tuning parameters, which is commonly implemented to address overfitting issues. In the following, we discuss this principle with two special cases of parameter tuning in split conformal prediction.

\begin{table}[ht!]
    \centering
    \caption{\textbf{Tuning bias comparison of various methods.} The calibration set size is 2500 for CIFAR-100 and 5000 for ImageNet using APS\@. Results in \textbf{bold} indicate superior performance.}
    \label{tab:performance} 
    \renewcommand\arraystretch{1.2}
    \resizebox{0.50\textwidth}{!}{ 
    \setlength{\tabcolsep}{5mm}{
    \begin{tabular}{lcc}
        \toprule
        Methods & CIFAR-100 (\%) & ImageNet (\%) \\
        \midrule
        Temperature scaling & \textbf{0.14 $\pm$ 0.01} & \textbf{0.04 $\pm$ 0.01} \\
        Vector scaling      & 1.13 $\pm$ 0.02 & 6.63 $\pm$ 0.03 \\
        \midrule
        ConfTr (ft.) w/ OP  & \textbf{0.52 $\pm$ 0.37} & \textbf{0.40 $\pm$ 0.31} \\
        ConfTr (ft.) w/o OP & 6.15 $\pm$ 0.86 & 21.68 $\pm$ 0.58 \\
        \bottomrule
    \end{tabular}    
    }
    }
\end{table}

\paragraph{Reducing the parameter number}
A straightforward way to constrain parameter complexity is to reduce the number of parameters in model tuning. 
Here, we provide an example of parameter reduction through weight sharing. 
In particular, we show the benefits of weight sharing by comparing the tuning biases of temperature scaling (TS) and vector scaling (VS), where the former can be viewed as a special case of vector scaling with a shared parameter $|\Lambda_{\mathrm{TS}}|<|\Lambda_{\mathrm{VS}}|$.
The empirical results in \cref{tab:performance} show that TS with fewer parameters achieves much smaller tuning biases than VS\@, which is also supported by \cref{fig:motivation} and \cref{fig:additional_result_confidence_calibration}.
Theoretically, we provide a theoretical result for the comparison in the case of binary classification:
\begin{proposition}
    \label{prop:tuning_bias_ts_vs}
    For a binary classification problem, we have $\TuningBias({\mathcal{C}}_{\text{TS}}) = 0 \leq \TuningBias({\mathcal{C}}_{\text{VS}})$,  
    where ${\mathcal{C}}_{\text{TS}}$ and ${\mathcal{C}}_{\text{VS}}$ are the prediction sets of models tuned by temperature scaling and vector scaling, respectively.
\end{proposition}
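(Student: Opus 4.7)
The plan is to split the statement into two claims. The equality $\TuningBias(\mathcal{C}_{\text{TS}})=0$ is proved by showing that, in binary classification, temperature scaling induces the same total order on the non-conformity scores for every $T>0$; consequently the empirical quantile threshold $\hat t^T$ always occupies the same rank position, the prediction set for a test point is $T$-invariant, and the coverage $\mathbb{P}(y\in\mathcal{C}_{\text{TS}}(\boldsymbol{x}))$ is identical whether $T$ is tuned on $\mathcal{D}_{\text{cal}}$ or on an independent hold-out set. The inequality $0\leq\TuningBias(\mathcal{C}_{\text{VS}})$ is then the easy part: it reduces to the fact that the bound $\mathbb{E}\mathfrak{R}_{\Lambda_{\text{VS}}}$ from \cref{thm:tuning_bias} is non-negative, together with the exchangeability-driven bound on the hold-out variant of VS.

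\paragraph{Order preservation for binary TS.} I would fix any non-conformity score that is monotone in the softmax probability of the true class (for instance, APS or $1-p(y\mid \boldsymbol{x})$) and let $R_i$ denote the logit margin of the true class against the other class for calibration sample $i$, so that the temperature-scaled probability of the true class equals $\sigma(R_i/T)$. Under APS, every sample whose true class is the ``bottom'' one receives score $1$, whereas every sample whose true class is the ``top'' one receives $\sigma(R_i/T)$ with $R_i>0$. Since $\sigma$ is strictly monotone and $T>0$ only rescales the argument, the sign of $S^T(\boldsymbol{x}_i,y_i)-S^T(\boldsymbol{x}_j,y_j)$ is the same for every $T>0$; equivalently, the full ranking on the calibration set coincides with the $T$-free ranking by $R_i$, with bottom samples pooled at the top. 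The identical monotonicity applied to the test point shows that the event $\{S^T(\boldsymbol{x},y)\leq\hat t^T\}$ reduces to a $T$-free comparison of the test logit margin with the $k$-th calibration logit margin (or a trivial comparison with $1$), so $\mathbb{P}(y\in\mathcal{C}_{\text{TS}}(\boldsymbol{x}))$ does not depend on $T$. Subtracting the resulting identical coverage gaps yields $\TuningBias(\mathcal{C}_{\text{TS}})=0$.

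\paragraph{Main obstacle and finish.} The main subtlety is that $\TuningBias$ is a signed difference of coverage gaps, so the claim $0\leq\TuningBias(\mathcal{C}_{\text{VS}})$ does not follow from $\CovGap\geq 0$ alone. I would resolve this via the upper-bound chain in \cref{thm:tuning_bias}: the hold-out version satisfies $\CovGap(\mathcal{C}_{\text{VS,hold-out}})\leq\varepsilon_{\alpha,n}$ by \cref{prop:finite-sample_coverage_guarantee}, while $\CovGap(\mathcal{C}_{\text{VS}})\leq\mathbb{E}\mathfrak{R}_{\Lambda_{\text{VS}}}+\varepsilon_{\alpha,n}$, so the relevant tuning bias is governed by the non-negative empirical-process term $\mathbb{E}\mathfrak{R}_{\Lambda_{\text{VS}}}\geq 0$. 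Apart from this interpretive point, the entire argument hinges on a one-line monotonicity fact about the sigmoid in the binary case---a property that cleanly fails for vector scaling, whose $2K$ parameters can permute the ranking of the scores.
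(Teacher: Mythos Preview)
Your argument for $\TuningBias(\mathcal{C}_{\text{TS}})=0$ via order preservation of the sigmoid in the binary case is exactly the paper's approach: the paper packages it as a lemma showing $p^{\text{TS}}_\lambda(y\mid\boldsymbol{x})$ is order-preserving across pairs $(\boldsymbol{x},y)$ for every $\lambda>0$, then invokes a second lemma stating that any order-preserving score transformation leaves the quantile-based coverage probability unchanged. Your sketch with the logit margin $R_i$ and sigmoid monotonicity is the same mechanism (the paper uses THR rather than APS, but that is immaterial), so this half is fine.

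The gap is in your treatment of $0\leq\TuningBias(\mathcal{C}_{\text{VS}})$. You correctly flag that this is a signed claim, but the ``resolution'' you propose does not establish it: from $\CovGap(\mathcal{C}_{\text{VS,hold-out}})\leq\varepsilon_{\alpha,n}$ and $\CovGap(\mathcal{C}_{\text{VS}})\leq\mathbb{E}\mathfrak{R}_{\Lambda_{\text{VS}}}+\varepsilon_{\alpha,n}$ you can only conclude $\TuningBias(\mathcal{C}_{\text{VS}})\leq\mathbb{E}\mathfrak{R}_{\Lambda_{\text{VS}}}$, which is an \emph{upper} bound. Two upper bounds, one larger than the other, never give a lower bound on the difference of the bounded quantities; you would need a lower bound on $\CovGap(\mathcal{C}_{\text{VS}})$ or an exact value for the hold-out gap. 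In fairness, the paper's own proof does not close this either---it ends by simply asserting ``$\TuningBias(\mathcal{C}_{\text{TS}})=0\leq\TuningBias(\mathcal{C}_{\text{VS}})$'' after establishing only the equality, implicitly reading the comparison at the level of the upper bounds from \cref{thm:tuning_bias} rather than the literal signed quantity in \cref{def:tuning_bias}. So your instinct that something is being glossed over is correct; just do not present the upper-bound chain as if it settled the sign.
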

We provide the proof of the above proposition in~\cref{sec:proof_tuning_bias_ts_vs}. 
From the example, we demonstrate that designing tuning algorithms with few parameters will be beneficial in preventing the tuning bias.
Also, we analyze the relationship between the number of parameters and tuning bias by freezing different numbers of parameters within VS\@. 
The results in \cref{fig:empirical_analysis} (a) show that increasing the number of parameters leads to higher tuning bias, supporting the claim.
However, reducing the parameter number may severely degrade the performance of parameter tuning, making it inapplicable in some scenarios, which motivates us to apply regularization to the tuning algorithm.

\paragraph{Regularization in the tuning}
Another effective approach is the use of regularization methods, which introduce additional constraints or penalties to model parameters, predictions, or loss functions. 
Here, we present an example of applying an order-preserving constraint to model predictions: 
\(\forall (\boldsymbol{x}, y), o(y, \boldsymbol{x}, p) = o(y, \boldsymbol{x}, p_\lambda)\) where \(o(y, \boldsymbol{x}, p)\) is the order of the true label \(y\) in the sorted predicted probabilities \(p(\cdot|\boldsymbol{x})\), and \(p_\lambda\) is the probability after transformation \(\lambda\).
Taking ConfTr \citep{stutz2022learning} as the tuning algorithm on the final linear layer, we present the following proposition to show the benefit of the order-preserving constraint. 
\begin{proposition}
    \label{prop:tuning_bias_order_preserving}
    For a classification problem with $K$ classes, and the calibration set size $n$, we have 
    the tuning bias of \({\mathcal{C}}_{\text{op}}\) is less than the one of \({\mathcal{C}}\):
    \(\TuningBias({\mathcal{C}}_{\text{op}}) \leq \TuningBias({\mathcal{C}})\),
    where ${\mathcal{C}}$ and ${\mathcal{C}}_{\text{op}}$ represent the prediction sets of models tuned by ConfTr, with ${\mathcal{C}}_{\text{op}}$ under the order-preserving constraint and ${\mathcal{C}}$ without it.
\end{proposition}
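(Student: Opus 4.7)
The plan is to apply Theorem~\ref{thm:tuning_bias} to both prediction sets $\mathcal{C}_{\text{op}}$ and $\mathcal{C}$, and exploit the fact that the order-preserving constraint carves out a strictly smaller effective parameter space. The main work will be to turn this containment into a comparison of the empirical-process terms $\mathfrak{R}_\Lambda$ appearing in the coverage-gap bound.

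First, I would formalize the constrained parameter space as $\Lambda_{\text{op}}=\{\lambda\in\Lambda:\ o(y,\boldsymbol{x},p)=o(y,\boldsymbol{x},p_\lambda)\ \text{for all }(\boldsymbol{x},y)\}$, so that ConfTr with order-preserving regularization is an instance of parameter tuning with $\hat\lambda$ restricted to $\Lambda_{\text{op}}\subseteq\Lambda$. Both tuned predictors therefore fit into the general framework of Section~\ref{subsec:tuning_bias}, with the only difference being the set over which $\hat\lambda$ is optimized.

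Next, I would invoke Theorem~\ref{thm:tuning_bias} to obtain
\begin{equation*}
    \TuningBias(\mathcal{C}_{\text{op}}) \leq \E\,\mathfrak{R}_{\Lambda_{\text{op}}}, \qquad \TuningBias(\mathcal{C}) \leq \E\,\mathfrak{R}_{\Lambda}.
\end{equation*}
Because $\Lambda_{\text{op}}\subseteq\Lambda$, the associated indicator classes satisfy $\mathcal{H}_{\Lambda_{\text{op}}}\subseteq\mathcal{H}_{\Lambda}$, and taking suprema over the smaller class gives $\mathfrak{R}_{\Lambda_{\text{op}}}\leq\mathfrak{R}_{\Lambda}$ pointwise, hence in expectation. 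Combined with the previous display, this yields the desired inequality in the upper-bound sense, which is how all the other comparison results in the paper (e.g.\ Corollary~\ref{corollary:tuning_bias_contrast_vs_ts} and Proposition~\ref{prop:tuning_bias_ts_vs}) are established.

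The main obstacle is that Theorem~\ref{thm:tuning_bias} is one-sided: it controls tuning bias from above by $\E\,\mathfrak{R}_\Lambda$, so the chain of inequalities above does not immediately yield $\TuningBias(\mathcal{C}_{\text{op}})\leq\TuningBias(\mathcal{C})$ in a literal pointwise sense. To make the argument rigorous, I would argue, in parallel with the proof structure used for the vector-scaling versus temperature-scaling comparison, that the two bounds in Theorem~\ref{thm:tuning_bias} should be interpreted as the governing complexity quantities, so that a strict containment of the function classes translates into the claimed ordering of tuning biases. A complementary route, which I would attempt if the bound-based argument is deemed insufficient, is to show via the APS structure that under the order-preserving constraint the rank-based component of the score function is invariant across $\lambda\in\Lambda_{\text{op}}$, so that the supremum defining $\mathfrak{R}_{\Lambda_{\text{op}}}$ reduces further, sharpening the comparison; this step is where I expect the most technical care will be needed.
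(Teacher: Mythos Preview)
Your proposal takes essentially the same approach as the paper: the paper's proof also reduces to the containment $\mathcal{H}_{\Lambda,\text{op}}\subseteq\mathcal{H}_{\Lambda}$, deduces the pointwise inequality of the suprema, and then concludes $\TuningBias(\mathcal{C}_{\text{op}})\leq\TuningBias(\mathcal{C})$ directly from $\E\,\mathfrak{R}_{\Lambda_{\text{op}}}\leq\E\,\mathfrak{R}_{\Lambda}$. Your worry about the one-sidedness of Theorem~\ref{thm:tuning_bias} is legitimate, but the paper does not address it either---it implicitly identifies the tuning bias with its upper bound $\E\,\mathfrak{R}_\Lambda$ (exactly as in Corollary~\ref{corollary:tuning_bias_contrast_vs_ts}), so your complementary APS-structure argument is not needed to match the paper.
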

The proof of Proposition~\ref{prop:tuning_bias_order_preserving} is provided in \cref{sec:proof_tuning_bias_order_preserving}. 
And by Lemma~\ref{lemma:order_preserving_vector_scaling}, the dimension of parameter space of order-preserving vector scaling is $2$, which is much smaller than that without constraint, $2K$. 
By Lemma~\ref{lemma:order_preserving_matrix_scaling}, the dimension of parameter space of ConfTr with order-preserving constraint is $K+2$, which is much smaller than that of matrix scaling with its dimension being $K^2+K$. 
We also verify the argument by the empirical results shown in \cref{tab:performance}. 
The tuning bias of ConfTr with order-preserving constraint is much smaller than that without it.
This approach is advantageous as it can generalize to complex paradigms of parameter tuning (e.g., fine-tuning the final layer). 
It is worth noting that designing tailored regularization techniques for specific tuning methods in conformal prediction may be a promising direction for future research.

\section{Related Works}
\label{sec:related_work}
\paragraph{Conformal prediction}
Conformal prediction is a method for uncertainty quantification that ensures that the prediction intervals or sets cover the true label with a user-defined error rate~\citep{papadopoulos2008inductive, vovk2005algorithmic, vovk2012conditional}. 
On the one hand, conditional coverage validity is a key property of conformal prediction, ensuring the algorithm fairness~\citep{angelopoulos2021uncertainty, gibbs2024conformal, romano2020classification,huang2024conformal}.
On the other hand, to enhance the efficiency of conformal prediction, recent studies~\citep{liu2025cadapter, stutz2022learning, kiyani2024length, kiyani2024conformal} have proposed training-based methods, which could be regarded as adapters, performing tuning prior to conformal prediction.
Beyond these, with many non-conformity scores proposed, the score selection or aggregation is another challenge in conformal prediction~\citep{yang2024selection, luo2024weighted, fan2024utopia, gasparin2024conformal, ge2024optimal, qin2024sat}.
These recent paradigms of conformal prediction typically require tuning parameters on a hold-out dataset, and our work provides the first study to quantify the negative effect induced by using the same dataset for tuning and calibration.

\paragraph{Learnability}
As classical learnability theory, a learnable model could be regarded as a risk minimization model with a specific hypothesis class~\citep{vapnik1971uniform,vapnik1991principles,vandervaart1996weak, vapnik1999overview, vershynin2018highdimensional}.
And further, the constrained ERM is developed as a special case of learnability theory, where the hypothesis class is the set with a specific constraint or structure, such as structural risk minimization of character recognition~\citep{guyon1991structural}, of data-dependent hypothesis class~\citep{shawe-taylor1998structural}, of Rademacher penalty method~\citep{koltchinskii2001rademacher} and rough set-based classifier~\citep{liu2020structural}.
As for conformal prediction, several works study learn the conformal prediction directly by designing a specific objective function~\citep{stutz2022learning, noorani2024conformal}. 
Further, the conformal prediction is a special case of risk minimization problem with the constraint of coverage guarantee, and the generalization loss for the size of prediction set could be quantified~\citep{gupta2022nested, yang2024selection}, including classification~\citep{bai2022efficient} and regression~\citep{gupta2022nested, fan2024utopia}.
As we proposed in~\cref{sec:theoretical_results}, we interpret the tuning bias as a further constrained risk minimization problem of conformal prediction and provide the first systematic theoretical analysis of the tuning bias, offering a novel tool for analyzing the learnability of general conformal prediction problems.

\section{Conclusion}
\label{sec:conclusion}
In this paper, we first uncover an unobservable phenomenon related to the tuning bias in conformal prediction with parameter tuning. 
Empirically, we observe that the tuning bias is negligible when the parameter space's complexity is low, and the hold-out set is sufficiently large. 
Our theoretical findings establish a parametric scaling law for the tuning bias. 
Additionally, we explore potential solutions to address the implications of our theoretical results, emphasizing strategies to mitigate tuning bias by reducing the complexity of the parameter space.
For future research, incorporating the structure of the parameter space could improve the precision of our theoretical results.

\paragraph{Limitation}
Our theoretical results provide a unified framework for understanding tuning bias - a general phenomenon in conformal prediction. 
However, the bound of tuning bias is not tight without more assumptions on the modelling. 
The bound could be more precise with more assumptions on the modelling. 
Here, we provide an example of regularization on the learnability for conformal prediction, which roughly tightens tuning bias. 
The tuning bias could be more precise under more assumptions on the modelling, such as the structure of the score and transformation.

\section*{Acknowledgement}
We thank the anonymous reviewers for their valuable comments and suggestions. 
This research is supported by the Shenzhen Fundamental Research Program (Grant No.JCYJ20230807091809020), the SUSTech-NUS Joint Research Program and in part by the National Natural Science Foundation of China (NSFC) under grant 12371290. 
We gratefully acknowledge the support of the Center for Computational Science and Engineering at the Southern University of Science and Technology.

\section*{Impact Statement}
This paper presents work whose goal is to advance the field of Machine Learning. There are many potential societal consequences of our work, none which we feel must be specifically highlighted here.


\bibliography{doubleCalCp}
\bibliographystyle{icml2025}

\newpage
\appendix
\onecolumn

\section{Basic Experimental Setup}
\label{sec:basic_experimental_setup}
In this section, we provide a detailed experimental setup. In Appendix~\ref{subsec:pre-trained_models}, we describe the pre-trained models. In Appendix~\ref{subsec:parameter_tuning}, we describe the parameter tuning processes for confidence calibration, score function tuning, score aggregation and ConfTr.  

\subsection{Pre-trained Models}
\label{subsec:pre-trained_models}
We conduct this empirical study on three image classification benchmarks: CIFAR-10, CIFAR-100, and ImageNet. 
For CIFAR-10, we use a simple CNN consisting of three convolutional layers and two fully connected layers. 
For CIFAR-100 and ImageNet, we employ ResNet-18~\citep{he2016deep}. 
The ImageNet experiments utilize pre-trained classifiers available in TorchVision~\citep{paszke2019pytorch}, while the CIFAR-10 and CIFAR-100 classifiers are trained from scratch using their respective full training sets. 
For CIFAR-100, the network is trained for 200 epochs using SGD with a momentum of 0.9, a weight decay of 0.0005, and a batch size of 256. 
The initial learning rate is set to 0.1 and decreases by a factor of 5 at epochs 60, 120, and 160. 
Similarly, for CIFAR-10, the network is trained for 120 epochs using SGD with identical momentum, weight decay, and batch size settings. The initial learning rate is also set to 0.1 and decreases by a factor of 5 at epochs 30, 60, and 90.

\subsection{Setup for Parameter Tuning}
\label{subsec:parameter_tuning}
\paragraph{Confidence calibration} 
We investigate two commonly used confidence calibration methods, including temperature scaling and vector scaling~\citep{guo2017calibration, xi2024does, dabah2024temperature}. 
Temperature scaling is the simplest extension for Platt scaling~\citep{platt1999probabilistic} and is a calibration method that adjusts the model's output logits through a scalar parameter $\lambda$.
The transformed output probability is defined as a special case of vector scaling with tuning parameter $\lambda \in \mathbb{R}^{+}$:
\begin{equation}
    \label{eq:ts}
    p^{\text{TS}}_{\lambda}(y\mid \boldsymbol{x}) = \psi_y(f(\boldsymbol{x})/\lambda),
\end{equation}
where $f(\boldsymbol{x})$ is the original logit output from the pre-trained deep classifier model in Section~\ref{subsec:pre-trained_models}, and $\psi_y$ denotes the $y$-th element of the softmax function:
\begin{equation}
    \label{eq:softmax}
    \psi_j(z) = \frac{\exp(z_j)}{\sum_{k=1}^K \exp(z_k)}, \quad \text{for } j=1,\ldots,K,
\end{equation}
where $z \in \mathbb{R}^K$ is the input logit vector and \(z_j\) is the $j$-th component of the input logit vector and $\psi_j(z)$ represents the $j$-th component of the output probability vector.  
The vector scaling~\citep{guo2017calibration} for a logits output of the pre-trained deep classifier model $f(\boldsymbol{x}) \in \mathbb{R}^K$ with classes \(K\) is defined as:
\begin{equation}
    \label{eq:vs}
    p^{\text{VS}}_{\lambda}(y\mid \boldsymbol{x}) = \psi_y(W \circ f(\boldsymbol{x}) + b),
\end{equation}
where \(\lambda = (W^\top, b^\top)^\top\in \mathbb{R}^{2K}\), the notation ``\(\circ\)'' means element-wise product of elements in two vectors.
We optimize the parameter \(\lambda\) using the objective negative log-likelihood (NLL) to obtain the optimal parameter 
\begin{equation*}
\ell_{\text{NLL}, p}(\lambda) = -\frac{1}{n}\sum_{(\boldsymbol{x},y)\in\mathcal{D}_{\text{cal}}} \log  (p_{\lambda}(y\mid \boldsymbol{x})),
\end{equation*}
where \(p_{\lambda}(y\mid \boldsymbol{x})\) is the transformed output probability from temperature scaling or vector scaling.
The transformed score function \(S^{\lambda}(X,y)\) is a function of \(p(y |X )\) based on some base score function, such as APS~\citep{romano2020classification} defined 
as:
\begin{equation}
    \label{eq:aps}
    \begin{aligned}
        S_{\text{APS}}(\boldsymbol{x}, y, p) = \sum_{y_i \in \mathcal{Y}} 
        p(y_i|\boldsymbol{x}) \cdot 
        \mathbb{I}\{p(y_i|\boldsymbol{x}) > p(y|\boldsymbol{x})\} + u \cdot p(y|\boldsymbol{x})
    \end{aligned}
\end{equation}
given probability output \(p := p(y|\boldsymbol{x})\) for a classification task, and \(u\) is a random variable drawn from the uniform distribution \(U[0,1]\), and THR~\citep{sadinle2019least} defined as:
\begin{equation}
    \label{eq:thr}
S_{\text{THR}}(\boldsymbol{x}, y, p) = 1 - p(y\mid \boldsymbol{x}).
\end{equation}

\paragraph{Score function tuning}
Score function tuning is a common step in the pipeline of conformal prediction. In this study, we focus on two of the most widely used score functions: RAPS~\citep{angelopoulos2021uncertainty} and SAPS~\citep{huang2024conformal}:
\begin{equation*}
    S_{\text{SAPS}}(\boldsymbol{x}, y, p) := 
    \begin{cases} 
        u \cdot p_{\max}(\boldsymbol{x}), & \text{if } o(y, \boldsymbol{x}, p) = 1, \\
        p_{\max}(\boldsymbol{x}) + (o(y, p(\boldsymbol{x})) - 2 + u) \cdot \gamma, & \text{else},
    \end{cases}
\end{equation*}
where \( p_{\max}(\boldsymbol{x}) \) is the maximum predicted probability among all classes for input \( \boldsymbol{x} \), \( o(y, \boldsymbol{x}, p) \) is the order of the true label \( y \) in the sorted predicted probabilities \(p(y|\boldsymbol{x})\), and \( \gamma \) is a tuning parameter.

A two-stage grid search for parameter searching is performed for both RAPS and SAPS\@. Initially, the search is conducted over the range $[0.01, 0.3]$ with a step size of 0.01 to determine the optimal parameter $x$. Subsequently, a finer search is carried out within $[x - 0.005, x + 0.005]$ using a granularity of 0.001. Notably, since RAPS is not sensitive to its parameter $k_{\text{reg}}$, we focus on $\gamma$.

\paragraph{Score aggregation}   
Score aggregation could significantly enhance the performance of conformal predictors. Recent studies~\citep{yang2024selection, luo2024weighted, fan2024utopia, gasparin2024conformal, ge2024optimal, qin2024sat} have proposed aggregation techniques that combine multiple non-conformity scores with tunable weights, which can enhance the efficiency of conformal predictors. These methods typically require a hold-out set to search for the optimal weights through grid search. In this study, we investigate the aggregation of three score functions—APS, RAPS, and SAPS—all derived from the same pre-trained classifier as a case example. Specifically, we use three non-negative parameters $w_1$, $w_2$, and $w_3$, which represent the tunable weights for aggregating the three score functions. These parameters satisfy the constraint $w_1 + w_2 + w_3 = 1$, and we evaluate all possible value combinations with a granularity of $0.1$. The parameters $w_1$, $w_2$, and $w_3$ are restricted to the set $\{0, 0.1, 0.2, \dots, 1\}$. We enumerate all triplets $(w_1, w_2, w_3)$ such that:
\[
w_1 + w_2 + w_3 = 1, \quad w_1, w_2, w_3 \geq 0, \quad \text{and} \quad w_1, w_2, w_3 \in \{0, 0.1, 0.2, \dots, 1\}.
\]
This approach ensures that all feasible combinations of $w_1$, $w_2$, and $w_3$ are comprehensively explored under the given constraint and granularity.

\paragraph{C-Adapter} Conformal Adapter (C-Adapter) \cite{liu2025cadapter} is an adapter-based tuning method designed to enhance the efficiency of conformal predictors without sacrificing accuracy. It achieves this by appending an adapter layer to pre-trained classifiers for conformal prediction. With C-Adapter, the model tends to produce extremely high non-conformity scores for incorrect labels, thereby improving the efficiency of prediction sets across various coverage rates.

For tuning the C-Adapter, we use the Adam optimizer \cite{kingma2017adam} with a batch size of 256 and a learning rate of 0.1. The model is tuned for 10 epochs, and the only parameter, \( T \), is set to \( 1 \times 10^{-4} \) by default.  In our empirical study, we explore the application of C-Adapter using hold-out data.

\paragraph{Conformal Training} Conformal Training (ConfTr) \cite{stutz2022learning} is a training framework designed to enhance the efficiency of conformal predictors. Its loss function is defined as:  
\begin{equation*}
\mathcal{L}_{\text{ConfTr}}(f(\boldsymbol{x}; \boldsymbol{\theta}), y, \tau^{\text{soft}}_\alpha) = \mathcal{L}_{\text{cls}}(f(\boldsymbol{x}; \boldsymbol{\theta}), y) + \lambda \mathcal{L}_{\text{size}}(f(\boldsymbol{x}; \boldsymbol{\theta}), \tau^{\text{soft}}_\alpha),
\end{equation*}  
where \(\mathcal{L}_{\text{cls}}\) represents the classification loss, \(\mathcal{L}_{\text{size}}\) approximates the size of the prediction set at a coverage rate of \(1 - \alpha\), \(\tau^{\text{soft}}_\alpha\) denotes the soft threshold, and \(\lambda\) controls the strength of the regularization term.  
Here, \(\tau^{\text{soft}}_\alpha\) denotes the soft threshold and \(\lambda\) controls the strength of the regularization term. To improve efficiency, ConfTr adjusts the training objective by incorporating a regularization term that minimizes the average prediction set size at a specified error rate. 

Beyond training models from scratch, ConfTr can also function as a post-training adjustment method, allowing fine-tuning of the fully connected layer or an appended layer to enhance further the efficiency of conformal predictors \citep{stutz2022learning, huang2023uncertainty}. We denote this version of ConfTr as \textbf{ConfTr (ft.)}. In our empirical study, we explore this application of ConfTr (ft.) using hold-out data. Specifically, we append a fully connected layer to the trained classifier and fine-tune only the parameters of this layer using the Adam optimizer \citep{kingma2017adam} with a learning rate of 0.001. The hyperparameter \(\alpha\) is set to 0.01 by default. For \(\lambda\), we use a value of 0.0001. 
  
\section{Additional Experimental Results}
\label{sec:additional_experimental_results}
We investigate various conformal prediction algorithms that require parameter tuning. This section provides additional results for these methods. Typically, we vary the size of the calibration set to examine the CovGap. For vector scaling, we consider both the size of the calibration set and the number of parameters (i.e., the amount of frozen weights).  

By the results in this section, it is worthy noting that the parameter numbers of those tuning methods (e.g., VS and C-Adapter) are positively related to the class numbers of the dataset. Thus, datasets with more classes require more parameter numbers in the tuning, leading to a larger tuning bias. This explains why those methods perform differently in various datasets. The details are provided as following.

\subsection{Additional Results for Confidence Calibration}

\paragraph{Temperature scaling} Additional experimental results on the tuning bias with temperature scaling are shown in Figures~\ref{fig:TS_APS} and~\ref{fig:TS_THR}, using APS and THR as the score functions, respectively.

\paragraph{Vector scaling} Figure~\ref{fig:VS_THR_Calnum} and Figure~\ref{fig:VS_THR_Amountpara} present additional experimental results on the tuning bias with vector scaling, using THR as the score function. The experiments vary the size of the calibration set and the number of parameters.

The results in Figure~\ref{fig:additional_result_confidence_calibration} show that the tuning bias of temperature scaling is small for all datasets and score functions under various settings. 
The tuning bias of vector scaling is generally large except for CIFAR-10. 
For comparison, the tuning bias of temperature scaling is generally smaller than that of vector scaling.
\begin{figure}
    \centering
    \begin{subfigure}[b]{\textwidth}
        \centering
    \begin{subfigure}[b]{0.33\textwidth}
            \includegraphics[width=\textwidth]{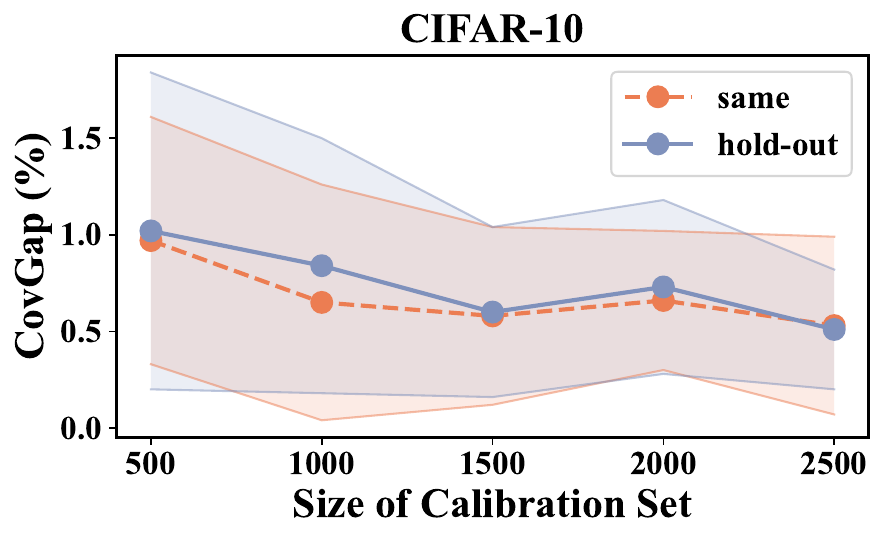}
        \end{subfigure} 
        \begin{subfigure}[b]{0.33\textwidth}
            \includegraphics[width=\textwidth]{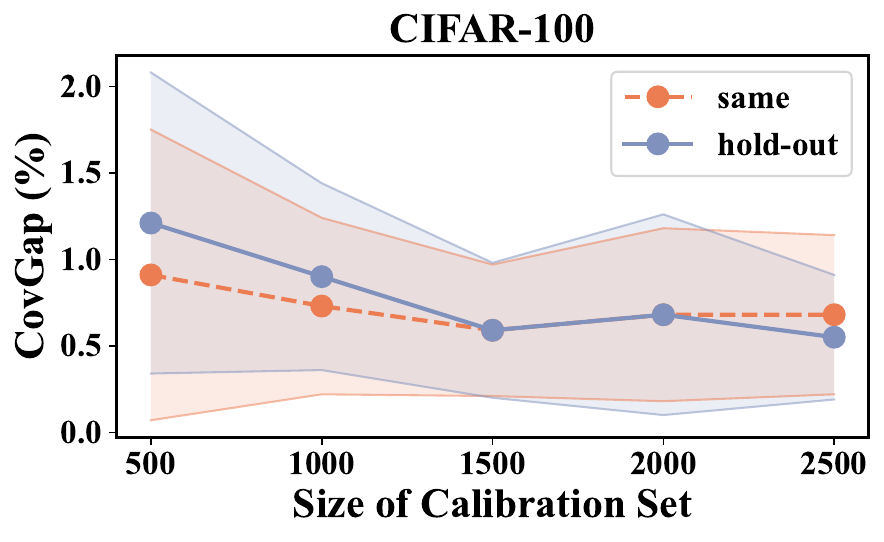}
        \end{subfigure} 
        \begin{subfigure}[b]{0.33\textwidth}
            \includegraphics[width=\textwidth]{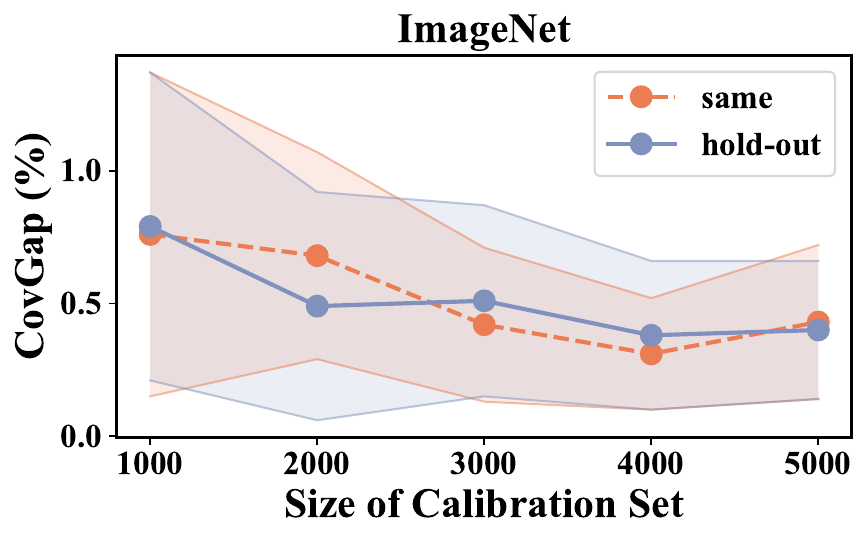}
        \end{subfigure}
        \caption{Temperature Scaling, APS}
        \label{fig:TS_APS}
    \end{subfigure}
        \begin{subfigure}[b]{\textwidth}
        \centering
    \begin{subfigure}[b]{0.33\textwidth}
            \includegraphics[width=\textwidth]{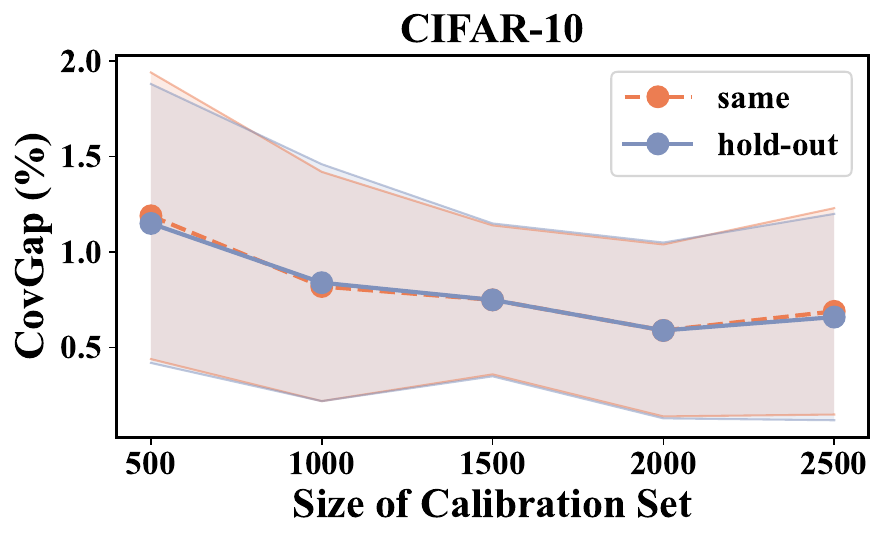}
        \end{subfigure} 
        \begin{subfigure}[b]{0.33\textwidth}
            \includegraphics[width=\textwidth]{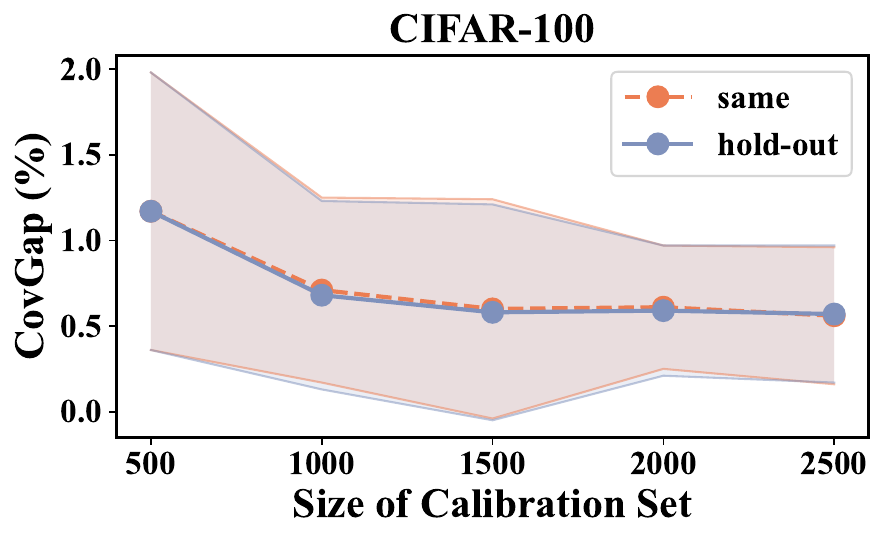}
        \end{subfigure} 
        \begin{subfigure}[b]{0.33\textwidth}
            \includegraphics[width=\textwidth]{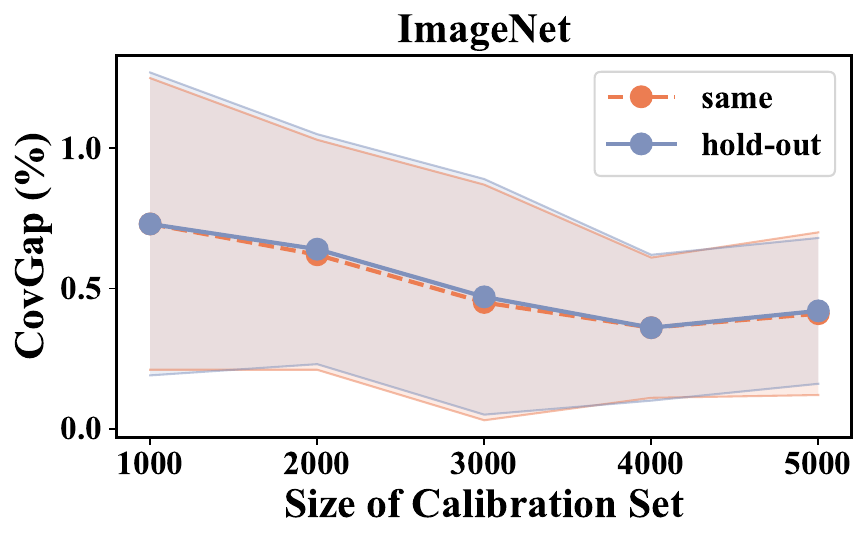}
        \end{subfigure}
        \caption{Temperature Scaling, THR}
        \label{fig:TS_THR}
    \end{subfigure}
    \begin{subfigure}[b]{\textwidth}
        \centering
        \begin{subfigure}[b]{0.33\textwidth}
            \includegraphics[width=\textwidth]{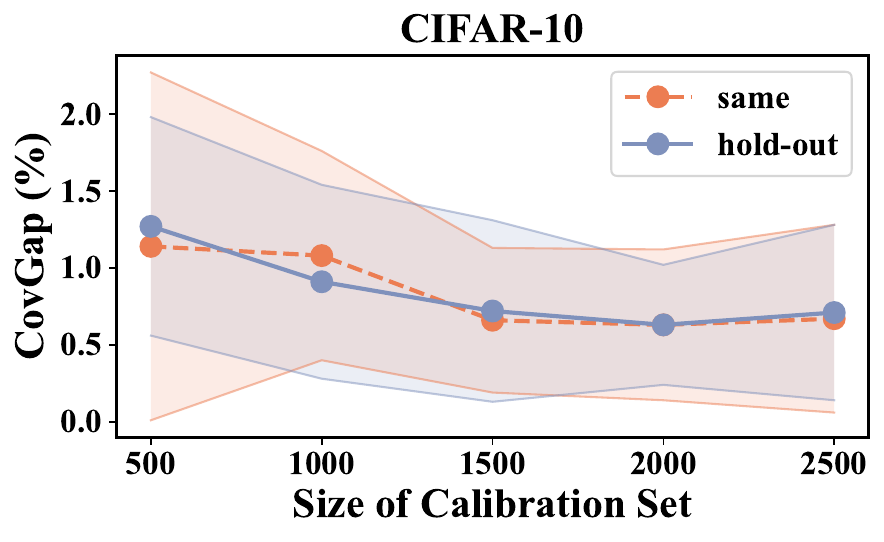}
        \end{subfigure} 
        \begin{subfigure}[b]{0.33\textwidth}
            \includegraphics[width=\textwidth]{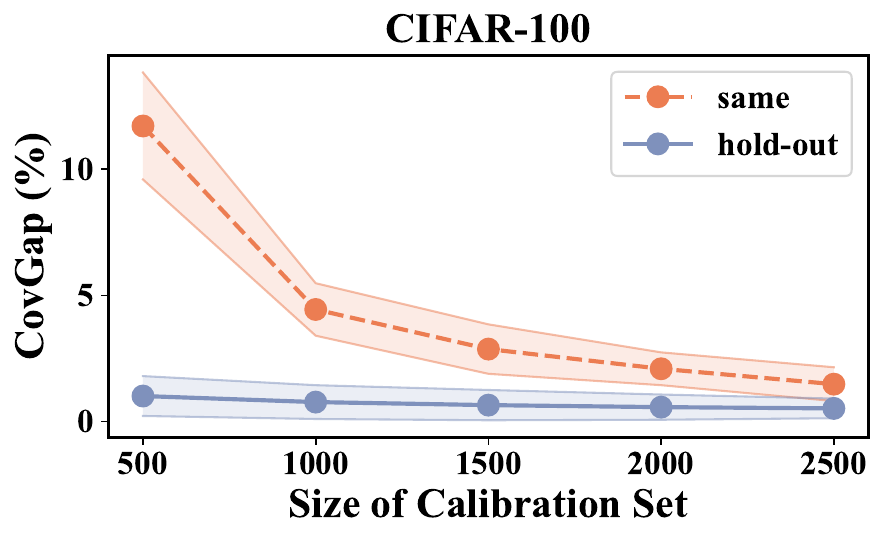}
        \end{subfigure} 
        \begin{subfigure}[b]{0.33\textwidth}
            \includegraphics[width=\textwidth]{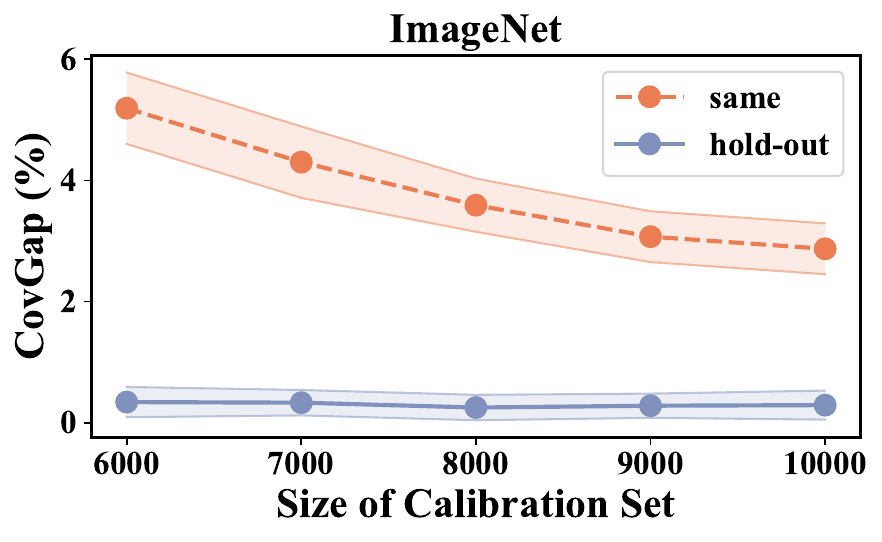}
        \end{subfigure}
        \caption{Vector Scaling, THR, Size of Calibration Set}
        \label{fig:VS_THR_Calnum}
    \end{subfigure}  
    \begin{subfigure}[b]{\textwidth}
        \centering
        \begin{subfigure}[b]{0.33\textwidth}
            \includegraphics[width=\textwidth]{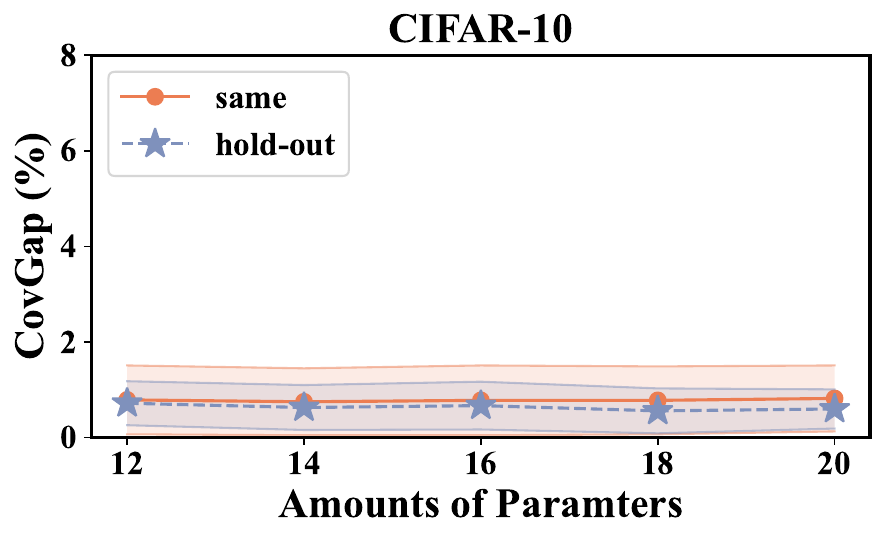}
        \end{subfigure} 
        \begin{subfigure}[b]{0.33\textwidth}
            \includegraphics[width=\textwidth]{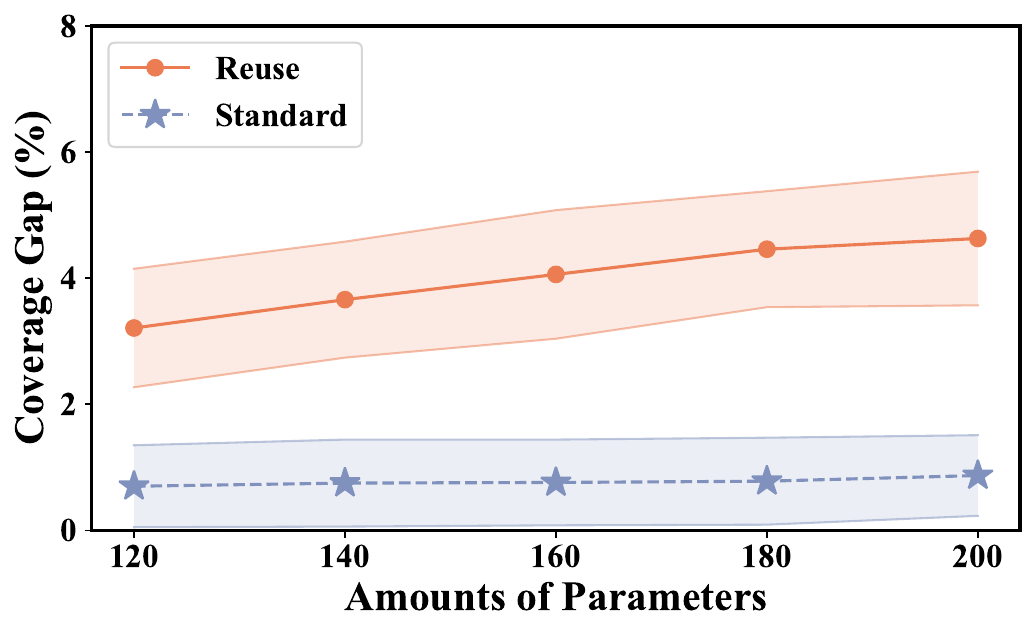}
        \end{subfigure} 
        \begin{subfigure}[b]{0.33\textwidth}
            \includegraphics[width=\textwidth]{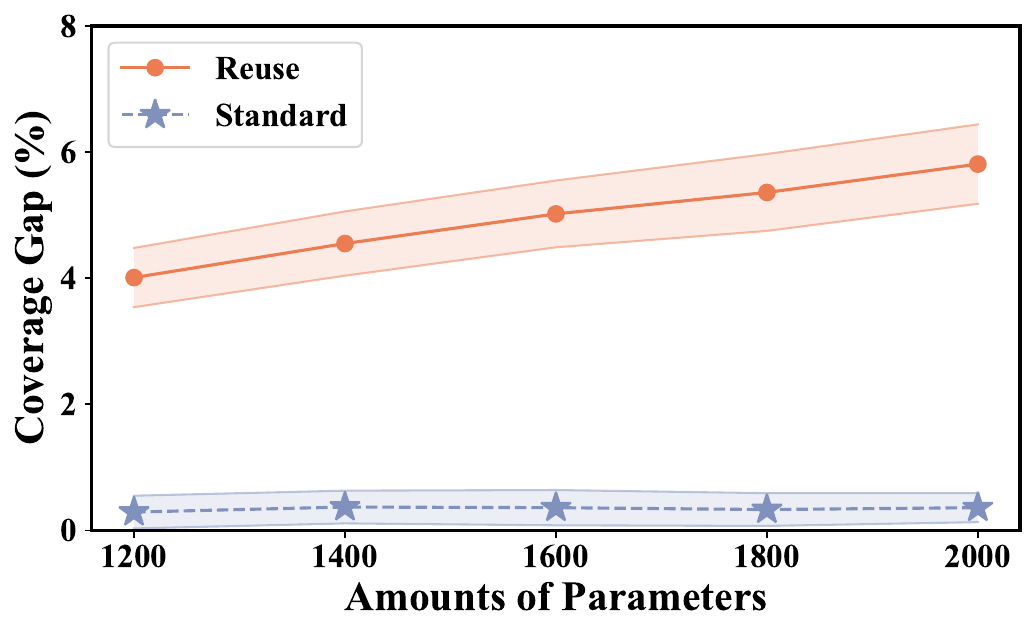}
        \end{subfigure}
        \caption{Vector Scaling, THR, Amount of Parameters}
        \label{fig:VS_THR_Amountpara}
    \end{subfigure}  
    \label{fig:addition_result}
\caption{\textbf{Additional results for confidence calibration across various datasets and score functions.} 
The coverage gaps are obtained on the conformal prediction with a \textbf{\textit{hold-out}} set or the \textbf{\textit{same}} set as calibration set to parameter tuning.
Figures~\ref{fig:TS_APS} and~\ref{fig:TS_THR} show the tuning bias with temperature scaling, using APS and THR as the score functions, respectively. 
The coverage gaps are obtained on the conformal prediction with a \textbf{\textit{hold-out}} set or the \textbf{\textit{same}} set as calibration set to parameter tuning.
Figures~\ref{fig:VS_THR_Calnum} and~\ref{fig:VS_THR_Amountpara} show the tuning bias with vector scaling, using THR as the score function. 
The experiments vary the size of the calibration set and the number of parameters.
The experiments are conducted on pre-trained models in \cref{subsec:pre-trained_models}. 
The coverage rate is set to 0.9. 
The figure shows that the tuning bias of temperature scaling is small for all datasets and score functions under various settings. 
The tuning bias of vector scaling is generally large except CIFAR-10. 
For comparison, the tuning bias of temperature scaling is generally smaller than that of vector scaling.
}
\label{fig:additional_result_confidence_calibration}
\end{figure}

\subsection{Additional Results for Score Function Tuning}
We focus on two widely used score functions: RAPS and SAPS. The results for them are provided in Figures~\ref{fig:RAPS_tunng} and~\ref{fig:SAPS_tunng}. The results in these figures shows the tunig bias of RAPS and SAPS is not significantly different from the one with a hold-out set under various settings.

\subsection{Additional Results for Score Aggregation}  In this study, we investigate the aggregation of three score functions—APS, RAPS,
and SAPS—all derived from the same pre-trained classifier as a case example. The results are provided in Figure~\ref{fig:ScoreAggregation}. The results in this figure shows the tuning bias of score aggregation is not significantly different from the one with a hold-out set under various settings.

\subsection{Additional Results for C-Adapter}
Additional experimental results on CovGap with C-Adapter are shown in Figures~\ref{fig:Cadapter_APS} and~\ref{fig:Cadapter_THR}, using APS and THR as the score functions, respectively. The results in these figures shows the tuning bias of C-Adapter is not significantly different from the one with a hold-out set under various settings.
\subsection{Additional Results for ConfTr (ft.)}
Additional experimental results on CovGap with ConfTr (ft.) are shown in Figures~\ref{fig:ConfTr_APS} and~\ref{fig:ConfTr_THR}, using APS and THR as the score functions, respectively. The results in these figures shows the tuning bias of ConfTr (ft.) is significantly different from the one with a hold-out set under various settings except CIFAR-10.

\begin{figure}
    \centering
    \begin{subfigure}[b]{\textwidth}
        \centering
    \begin{subfigure}[b]{0.33\textwidth}
            \includegraphics[width=\textwidth]{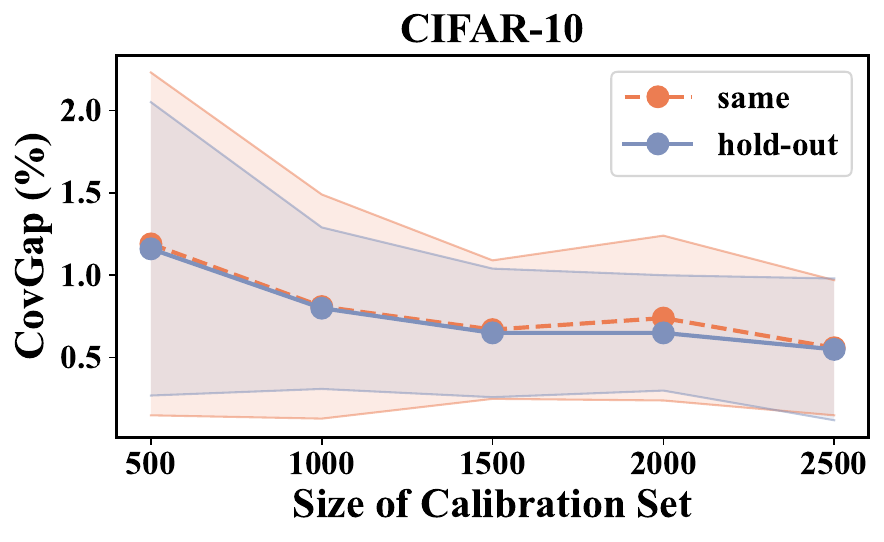}
        \end{subfigure} 
        \begin{subfigure}[b]{0.33\textwidth}
            \includegraphics[width=\textwidth]{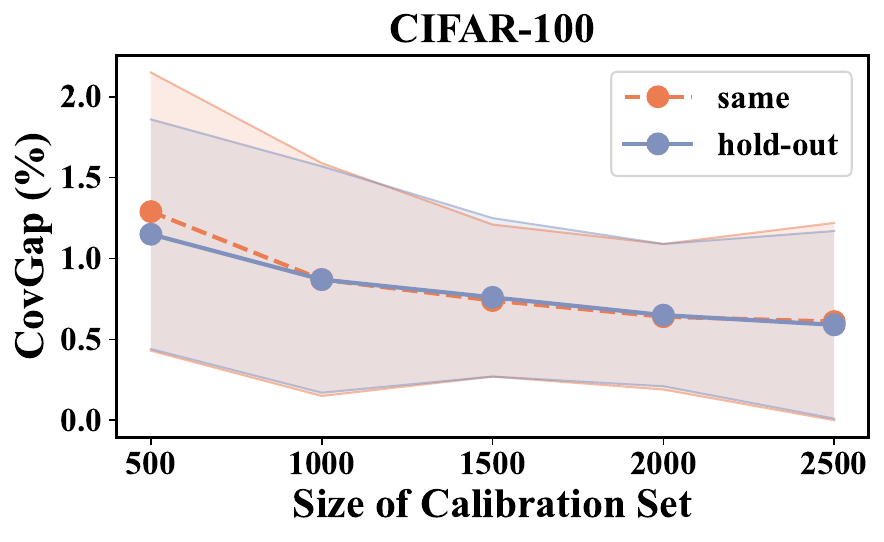}
        \end{subfigure} 
        \begin{subfigure}[b]{0.33\textwidth}
            \includegraphics[width=\textwidth]{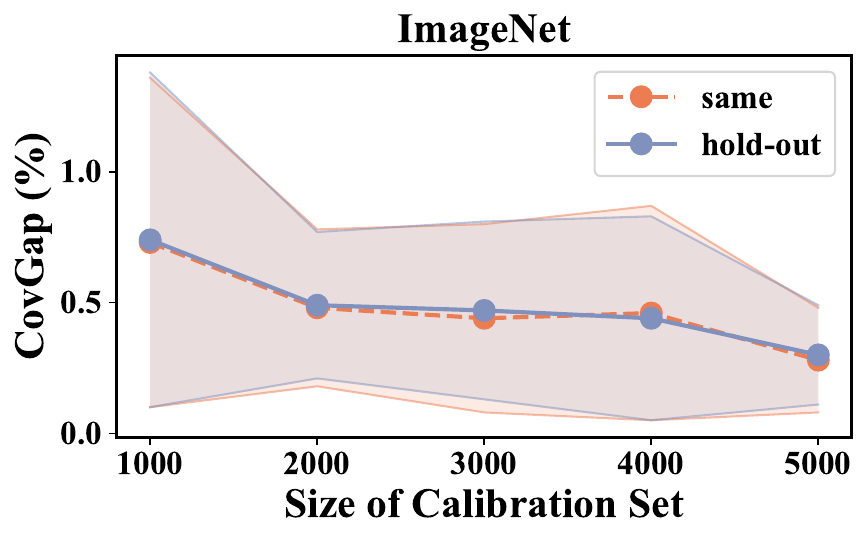}
        \end{subfigure}
        \caption{RAPS}
        \label{fig:RAPS_tunng}
    \end{subfigure}
        \begin{subfigure}[b]{\textwidth}
        \centering
    \begin{subfigure}[b]{0.33\textwidth}
            \includegraphics[width=\textwidth]{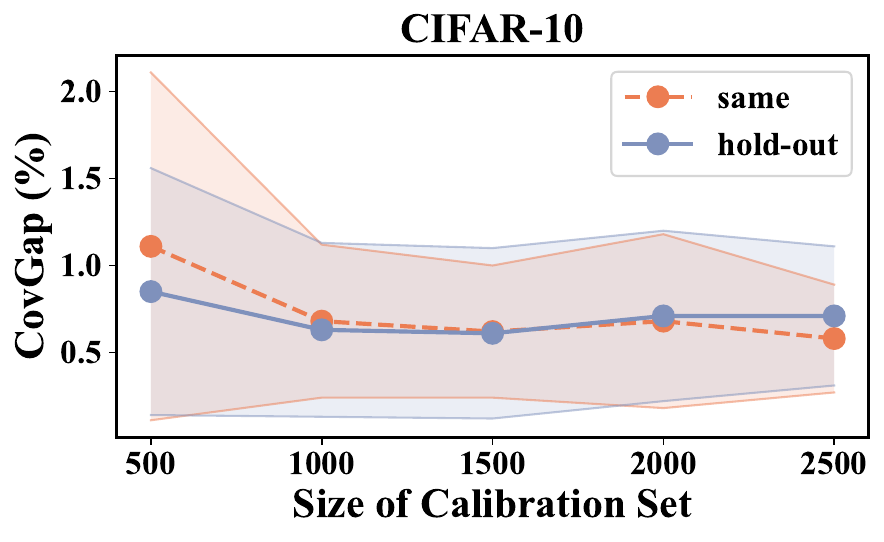}
        \end{subfigure} 
        \begin{subfigure}[b]{0.33\textwidth}
            \includegraphics[width=\textwidth]{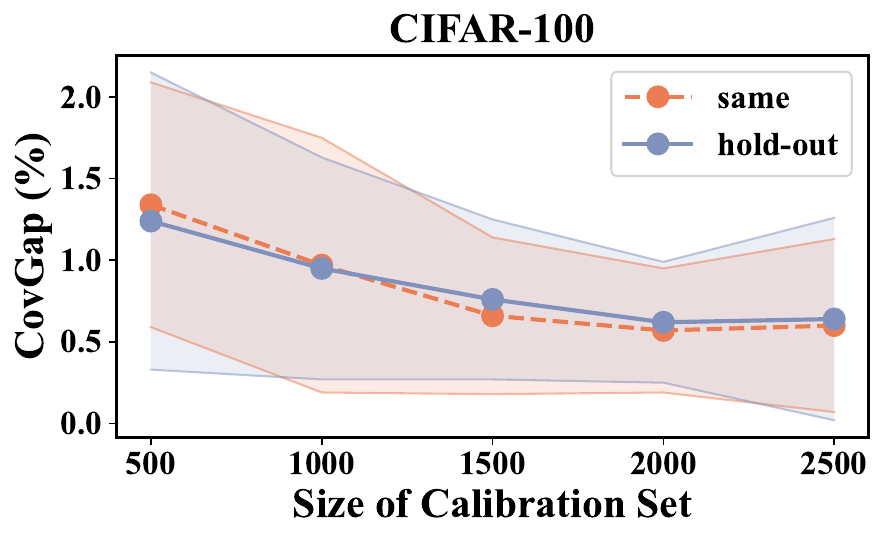}
        \end{subfigure} 
        \begin{subfigure}[b]{0.33\textwidth}
            \includegraphics[width=\textwidth]{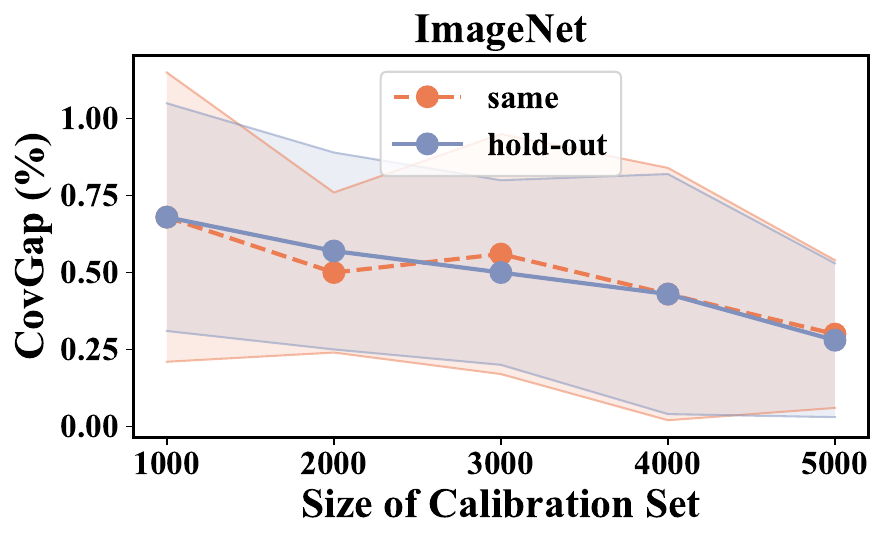}
        \end{subfigure}
        \caption{SAPS}
        \label{fig:SAPS_tunng}
    \end{subfigure}
    \begin{subfigure}[b]{\textwidth}
        \centering
    \begin{subfigure}[b]{0.33\textwidth}
            \includegraphics[width=\textwidth]{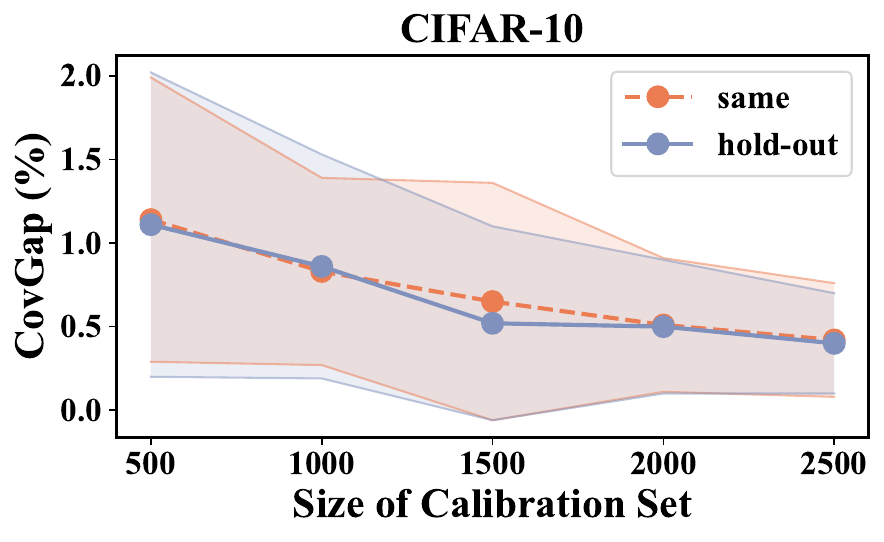}
        \end{subfigure} 
        \begin{subfigure}[b]{0.33\textwidth}
            \includegraphics[width=\textwidth]{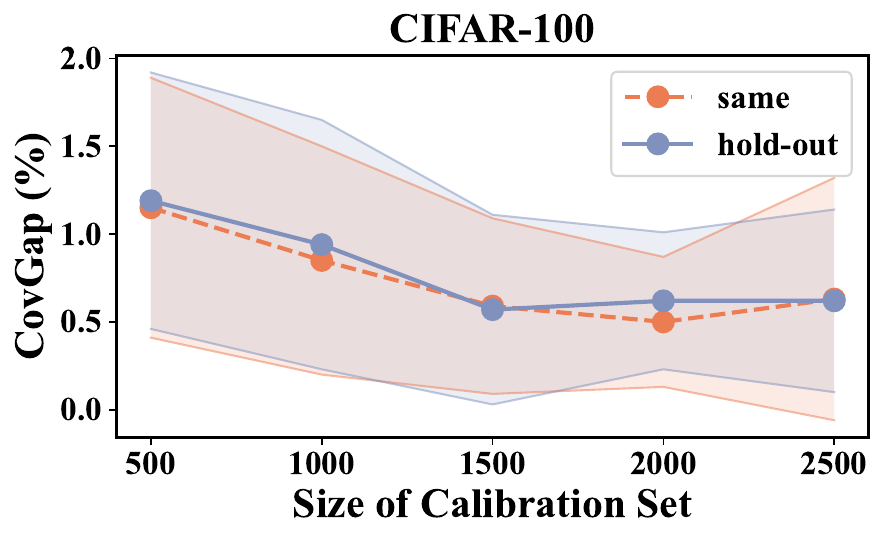}
        \end{subfigure} 
        \begin{subfigure}[b]{0.33\textwidth}
            \includegraphics[width=\textwidth]{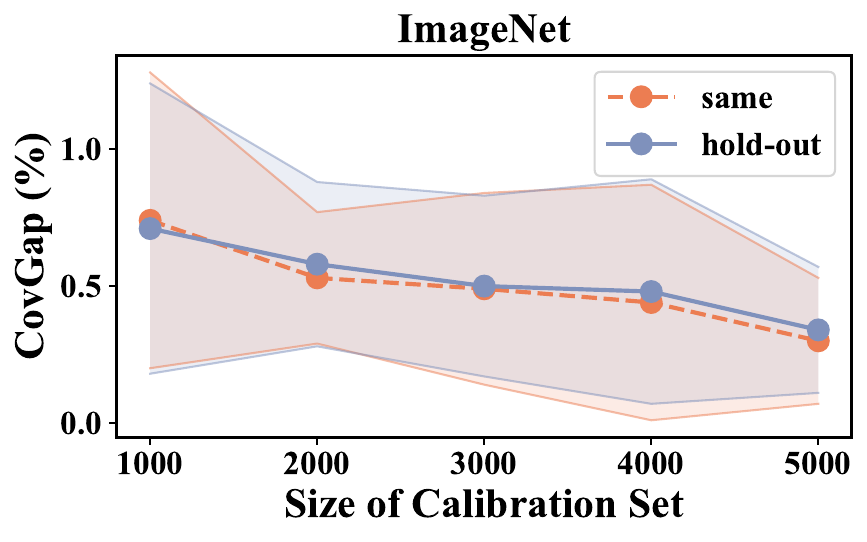}
        \end{subfigure}
        \caption{Score Aggregation}
        \label{fig:ScoreAggregation}
    \end{subfigure}
    \begin{subfigure}[b]{\textwidth}
    \centering
    \begin{subfigure}[b]{0.33\textwidth}
            \includegraphics[width=\textwidth]{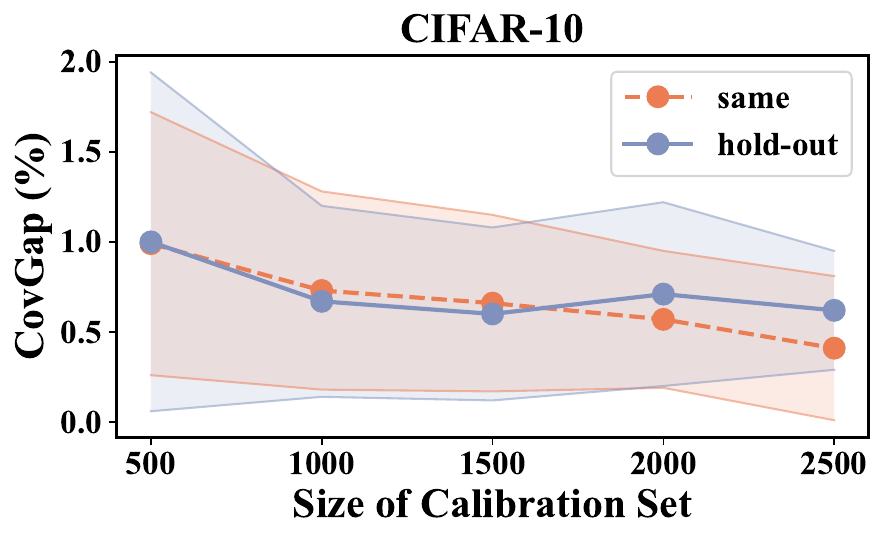}
        \end{subfigure} 
        \begin{subfigure}[b]{0.33\textwidth}
            \includegraphics[width=\textwidth]{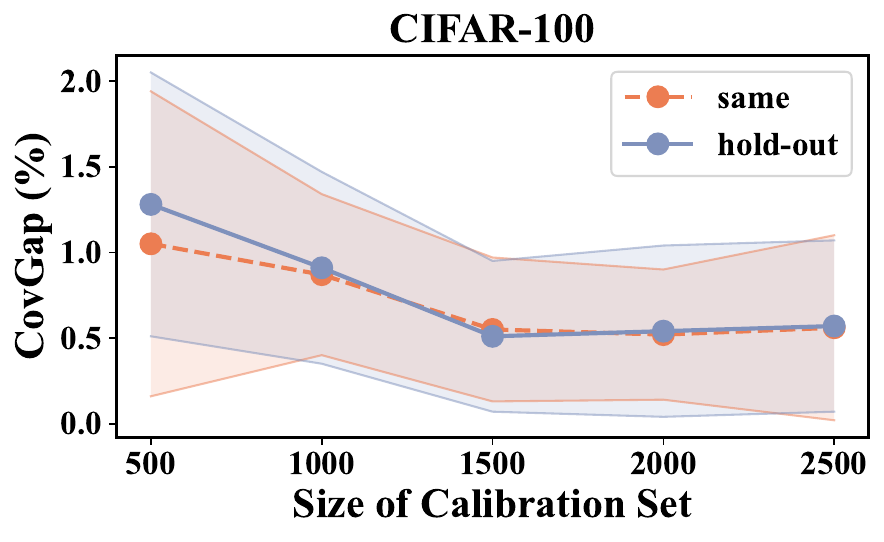}
        \end{subfigure} 
        \begin{subfigure}[b]{0.33\textwidth}
            \includegraphics[width=\textwidth]{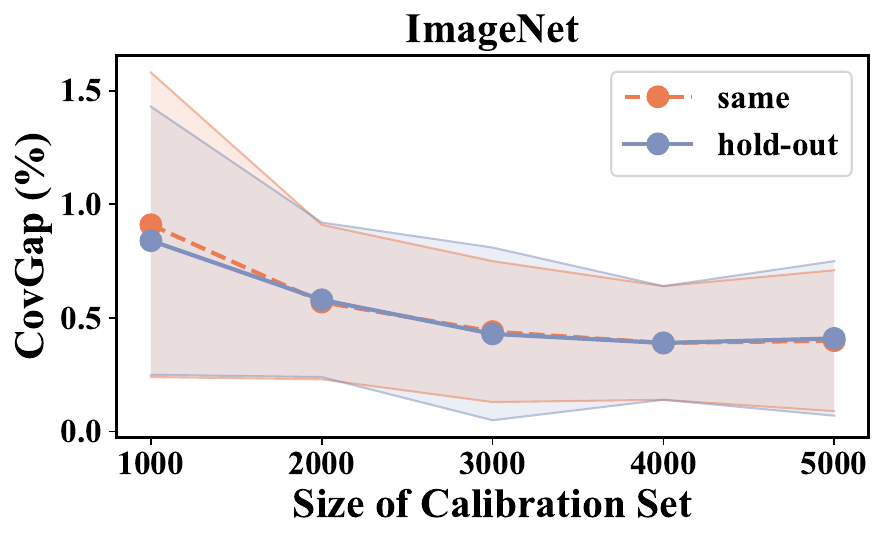}
        \end{subfigure}
        \caption{C-Adapter, APS}
        \label{fig:Cadapter_APS}
    \end{subfigure}
    
    \begin{subfigure}[b]{\textwidth}
    \centering
    \begin{subfigure}[b]{0.33\textwidth}
            \includegraphics[width=\textwidth]{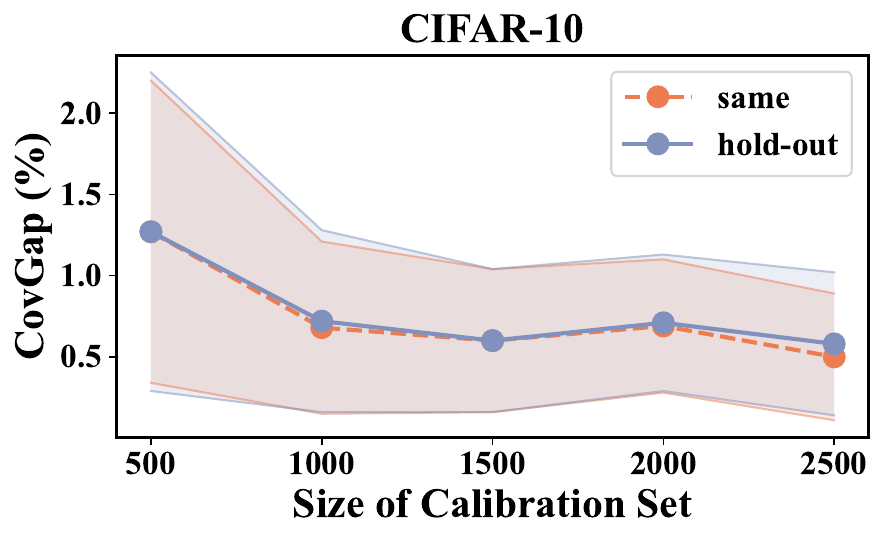}
        \end{subfigure} 
        \begin{subfigure}[b]{0.33\textwidth}
            \includegraphics[width=\textwidth]{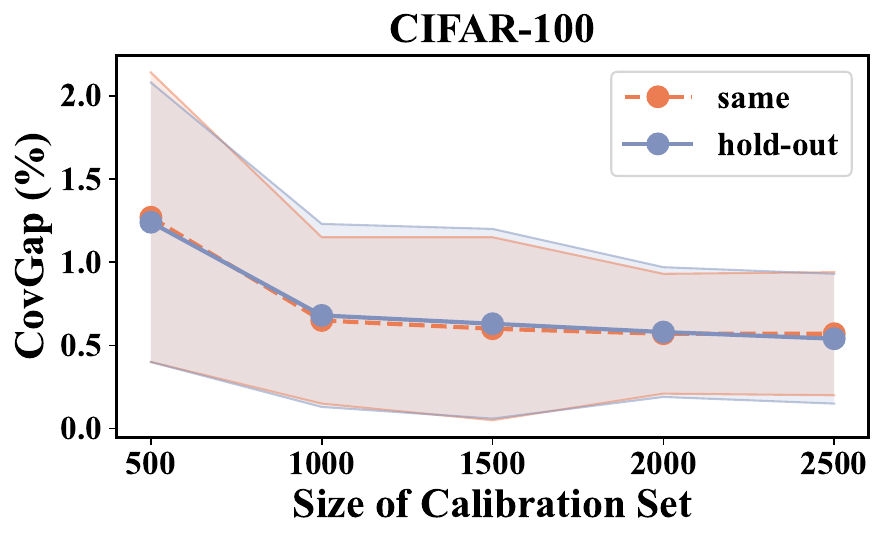}
        \end{subfigure} 
        \begin{subfigure}[b]{0.33\textwidth}
            \includegraphics[width=\textwidth]{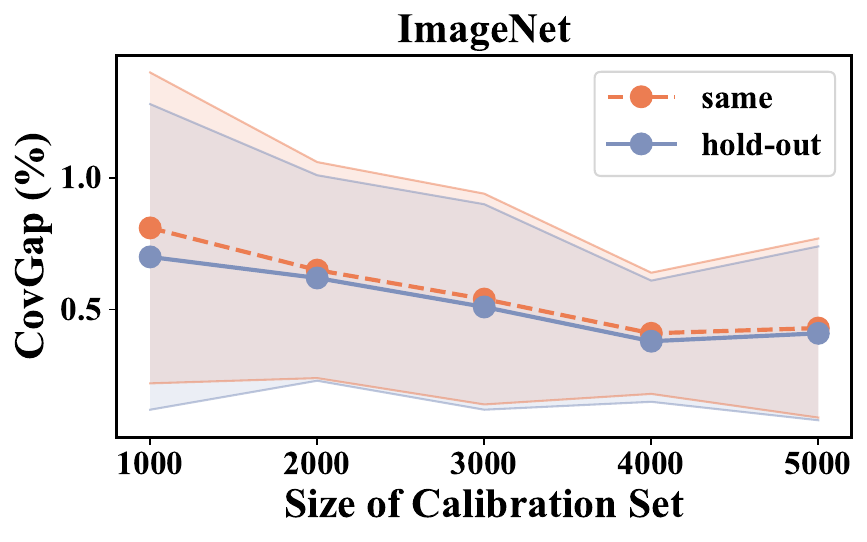}
        \end{subfigure}
        \caption{C-Adapter, THR}
        \label{fig:Cadapter_THR}
    \end{subfigure}   
 
    \label{fig:addition_result_sec}
\caption{\textbf{Additional results for score function tuning, aggregation, and C-Adapter across various datasets and score functions.} 
The coverage gaps are obtained on the conformal prediction with a \textbf{\textit{hold-out}} set or the \textbf{\textit{same}} set as calibration set to parameter tuning.
Figures~\ref{fig:RAPS_tunng} and~\ref{fig:SAPS_tunng} show the tuning bias with RAPS and SAPS, respectively. 
Figure~\ref{fig:ScoreAggregation} shows the tuning bias with score aggregation. Figures~\ref{fig:Cadapter_APS} and~\ref{fig:Cadapter_THR} show the tuning bias with C-Adapter, using APS and THR as the score functions, respectively.
The experiments are conducted on pre-trained models in \cref{subsec:pre-trained_models}. 
The coverage rate is set to 0.9.
The figure shows that the tuning bias of RAPS, SAPS, score aggregation, and C-Adapter is not significantly different from the one with a hold-out set under various settings.
}
\end{figure}

\begin{figure}
    \centering
    \begin{subfigure}[b]{\textwidth}
    \centering
    \begin{subfigure}[b]{0.33\textwidth}
            \includegraphics[width=\textwidth]{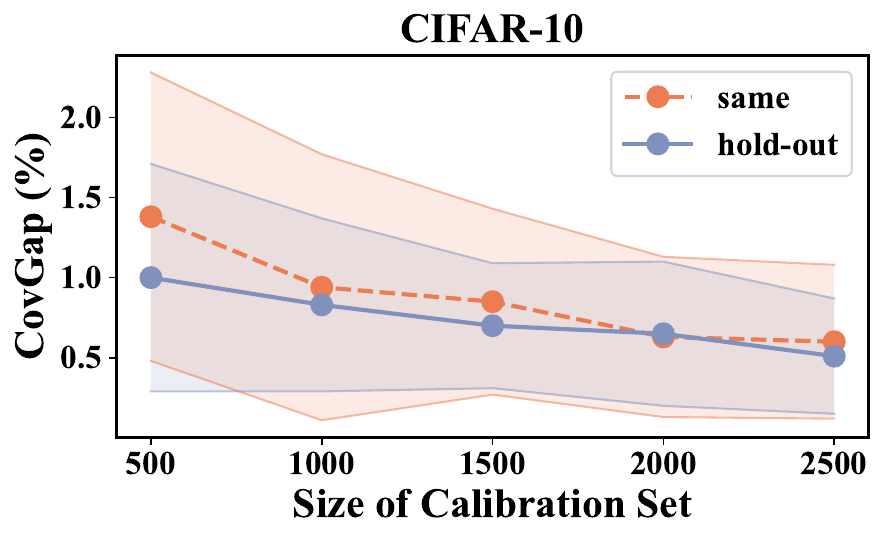}
        \end{subfigure} 
        \begin{subfigure}[b]{0.33\textwidth}
            \includegraphics[width=\textwidth]{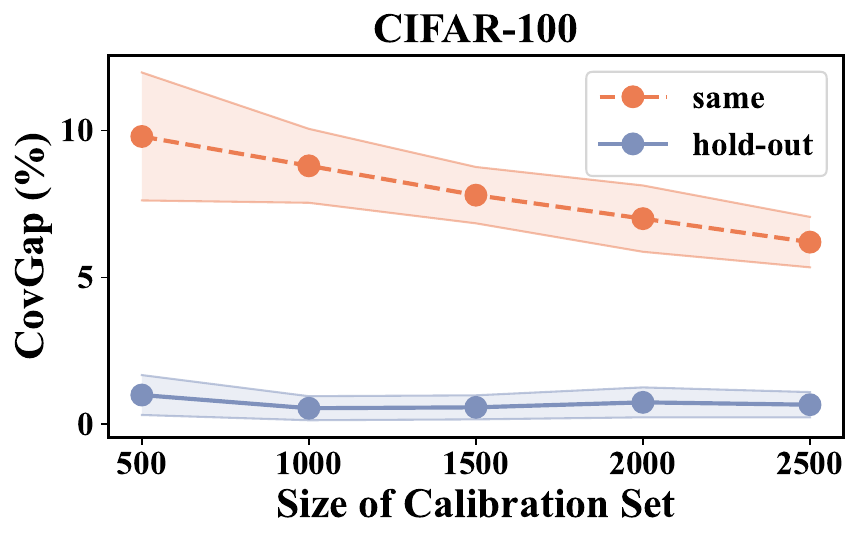}
        \end{subfigure} 
        \begin{subfigure}[b]{0.33\textwidth}
            \includegraphics[width=\textwidth]{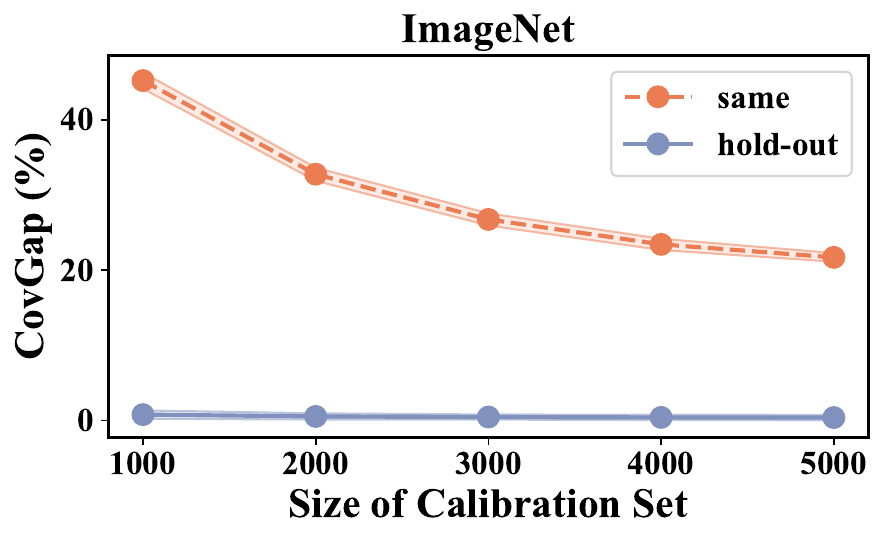}
        \end{subfigure}
        \caption{ConfTr(ft.), APS}
        \label{fig:ConfTr_APS}
    \end{subfigure}
    
    \begin{subfigure}[b]{\textwidth}
    \centering
    \begin{subfigure}[b]{0.33\textwidth}
            \includegraphics[width=\textwidth]{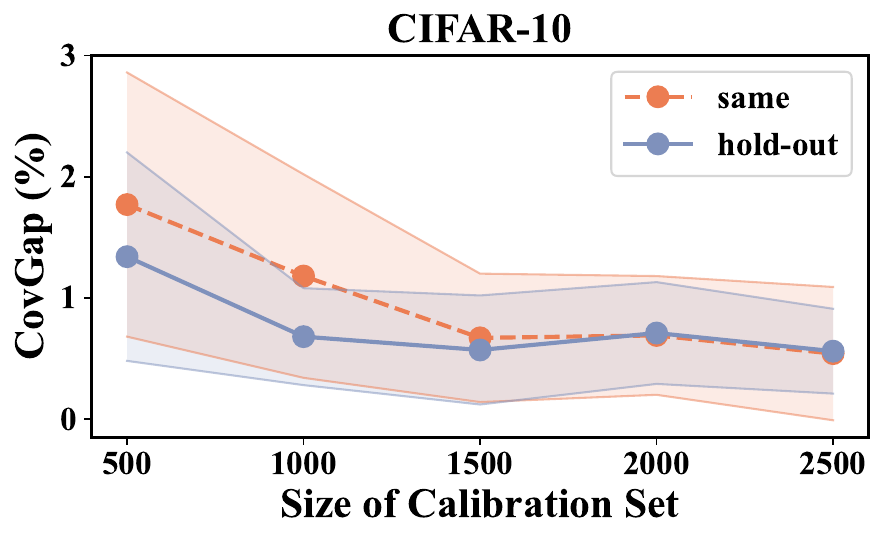}
        \end{subfigure} 
        \begin{subfigure}[b]{0.33\textwidth}
            \includegraphics[width=\textwidth]{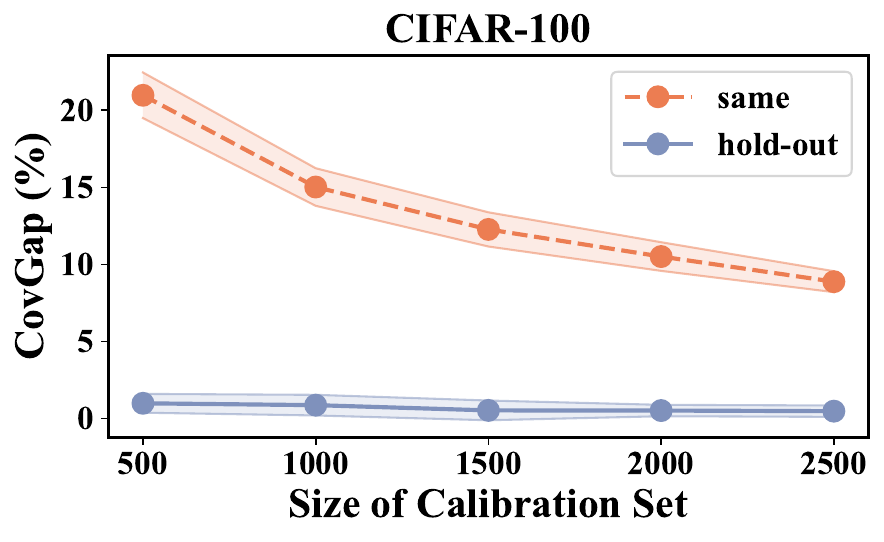}
        \end{subfigure} 
        \begin{subfigure}[b]{0.33\textwidth}
            \includegraphics[width=\textwidth]{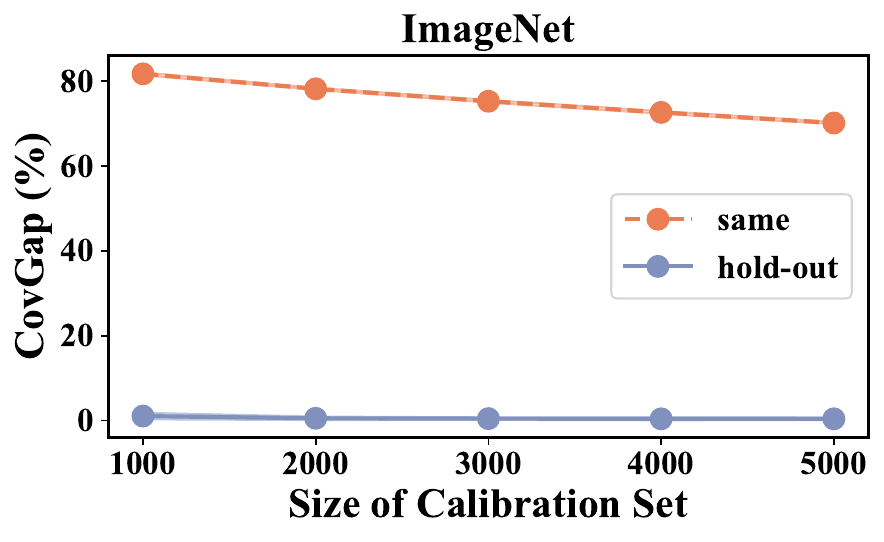}
        \end{subfigure}
        \caption{ConfTr(ft.), THR}
        \label{fig:ConfTr_THR}
    \end{subfigure}   
 
    \label{fig:addition_result_third}
\caption{\textbf{Additional results for ConfTr(ft.) across various datasets and score functions.} 
The coverage gaps are obtained on the conformal prediction with a \textbf{\textit{hold-out}} set or the \textbf{\textit{same}} set as calibration set to parameter tuning.
Figures~\ref{fig:ConfTr_APS} and~\ref{fig:ConfTr_THR} show the tuning bias with ConfTr(ft.), using APS and THR as the score functions, respectively.
The experiments are conducted on pre-trained models in \cref{subsec:pre-trained_models}. 
The coverage rate is set to 0.9.
The figure shows that the tuning bias of ConfTr (ft.) is generally small for all datasets and score functions under various settings.
}
\end{figure}

\section{Additional Results for Regression}
\label{sec:additional_results_regression}
We clarify that our analysis reveals a general phenomenon of conformal prediction in both classification and regression tasks. Section~\ref{sec:emprical_study} focuses on classification, as many parameter tuning methods (like TS/VS and ConfTr) are designed for classification tasks. Here, we provide the results for regressions to show the case of regression tasks following previous work~\citep{liang2024conformal} with a target coverage of 90\% and 30 repetitions. 
The details are presented as following:

We conduct an empirical study to evaluate methods for constructing prediction intervals using the Conformalized Quantile Regression (CQR) framework. The experiments are performed on the Protein Structure dataset obtained from the UCI repository, which comprises $N=45730$ data points with feature dimension $d=9$. 
The base learners employed are \texttt{RandomForestQuantileRegressor} instances from Python's \texttt{quantile\_forest} package. 
The target nominal coverage level for all experiments is set to $1-\alpha = 0.9$.

We investigate two distinct strategies for utilizing the data reserved after model training (referred to as calibration set and hold-out set):
\textbf{Same}: A single dataset is allocated for \textit{both} model selection (i.e., choosing the optimal hyperparameter configuration from a set of candidates) and the subsequent calibration step (i.e., determining the empirical quantile required to adjust the prediction interval width); \textbf{Hold-out}: The reserved data is partitioned into two disjoint sets of equal size, where the first set is used for model selection, while the second set is used for calibration, consequently requiring twice the amount of reserved data compared to the Shared strategy to achieve an equivalently sized dataset for the final calibration phase.
Here we consider the following two experiments:

\paragraph{Experiment 1: Varying the number of candidate models.}
For the first experiment, the size of calibration set in the Shared strategy is fixed at $n = 50$.
For the Shared strategy, these $50$ data points are employed for both model selection and calibration, while for the Split strategy, $50$ data points are used for model selection, and a separate set of $50$ data points is used for calibration, amounting to $100$ reserved data points in total.
The complexity of the model selection task is modeled by varying the number of candidate models. 
All candidate models are \texttt{RandomForestQuantileRegressor} instances. The \texttt{max\_features} parameter is varied across 10 distinct values, specifically \texttt{np.linspace(d/10, d, 10)}. The \texttt{n\_estimator} parameter is adjusted to alter the total pool of models:
We vary the number of candidate models by adjusting the \texttt{n\_estimator} parameter to 4, 8, 12, 16, and 20 values, which combined with the 10 distinct \texttt{max\_features} values results in 40, 80, 120, 160, and 200 candidate models respectively.

\paragraph{Experiment 2: Varying the Calibration Set Size}
In this experiment, the number of candidate models is held constant at 200 (derived from 20 \texttt{n\_estimator} values and 10 \texttt{max\_features} values). We vary the size of the calibration dataset, denoted as $n$.
For the \textbf{Same} strategy, $n$ represents the size of the single dataset used for both model selection and calibration, while for the \textbf{Hold-out} strategy, $n$ denotes the size of \textit{each} of the two datasets (one for selection and one for calibration), resulting in a total reserved data of $2 \times n$ for the \textbf{Hold-out} strategy. The values investigated for both strategies are $n \in \{100, 200, 300, 400, 500\}$.

The results of the two experiments are summarized in Table~\ref{tab:exp1_num_models_revised_with_tuning_bias} and Table~\ref{tab:exp2_cal_size_revised_with_tuning_bias}, respectively. Mean Coverage and Std Coverage are calculated from the 30 independent runs. The results of the two experiments are similar to the ones in the main text.

\begin{table}[htbp]
\centering
\caption{Results for varying the number of candidate models. For the Same strategy, calibration size $n=50$. For the Hold-out strategy, $n=50$ for model selection and $n=50$ for calibration. Target coverage $1-\alpha=0.9$.}
\label{tab:exp1_num_models_revised_with_tuning_bias}
\begin{tabular}{@{}llcccc@{}}
\toprule
\# Models & Method    & Coverage        & Length          & Coverage Gap    & Tuning Bias \\
          &           & Mean (Std)      & Mean (Std)      & Mean (Std)      & Mean        \\
\midrule
40        & Same      & 0.8437 (0.0383) & 12.89 (1.22)    & 0.0563 (0.0348) & 0.0246      \\
          & Hold-out  & 0.9251 (0.0268) & 17.28 (2.96)    & 0.0317 (0.0212) & -           \\
\midrule
80        & Same      & 0.8229 (0.0354) & 12.41 (1.34)    & 0.0771 (0.0353) & 0.0412      \\
          & Hold-out  & 0.9049 (0.0404) & 16.01 (2.51)    & 0.0359 (0.0252) & -           \\
\midrule
120       & Same      & 0.8258 (0.0564) & 12.27 (1.63)    & 0.0742 (0.0520) & 0.0324      \\
          & Hold-out  & 0.8953 (0.0480) & 15.94 (3.17)    & 0.0418 (0.0293) & -           \\
\midrule
160       & Same      & 0.8373 (0.0467) & 12.40 (1.33)    & 0.0627 (0.0413) & 0.0371      \\
          & Hold-out  & 0.9163 (0.0286) & 16.24 (2.06)    & 0.0256 (0.0217) & -           \\
\midrule
200       & Same      & 0.8447 (0.0494) & 12.44 (1.41)    & 0.0553 (0.0429) & 0.0187      \\
          & Hold-out  & 0.9037 (0.0406) & 15.09 (2.25)    & 0.0366 (0.0246) & -           \\
\bottomrule
\end{tabular}
\end{table}

\begin{table}[htbp]
\centering
\caption{Results for varying the calibration set size ($n$). The number of candidate models is fixed at 200. For the Hold-out strategy, $n$ refers to the size of each of the two sets (selection and calibration), totaling $2n$ reserved points. Target coverage $1-\alpha=0.9$.}
\label{tab:exp2_cal_size_revised_with_tuning_bias}
\begin{tabular}{@{}llcccc@{}}
\toprule
$n$ & Method    & Coverage        & Length          & Coverage Gap    & Tuning Bias \\
    &           & Mean (Std)      & Mean (Std)      & Mean (Std)      & Mean        \\
\midrule
100   & Same      & 0.8459 (0.0509) & 12.77 (1.25)    & 0.0541 (0.0433) & 0.0150      \\
      & Hold-out  & 0.9079 (0.0456) & 16.16 (3.11)    & 0.0391 (0.0278) & -           \\
\midrule
200   & Same      & 0.8553 (0.0378) & 13.10 (0.96)    & 0.0448 (0.0337) & 0.0118      \\
      & Hold-out  & 0.9043 (0.0394) & 15.12 (1.72)    & 0.0330 (0.0212) & -           \\
\midrule
300   & Same      & 0.8608 (0.0326) & 13.04 (0.93)    & 0.0392 (0.0296) & 0.0158      \\
      & Hold-out  & 0.8967 (0.0323) & 14.46 (1.04)    & 0.0234 (0.0192) & -           \\
\midrule
400   & Same      & 0.8595 (0.0292) & 13.13 (0.56)    & 0.0405 (0.0266) & 0.0157      \\
      & Hold-out  & 0.8927 (0.0322) & 14.37 (1.16)    & 0.0248 (0.0201) & -           \\
\midrule
500   & Same      & 0.8767 (0.0388) & 13.25 (0.68)    & 0.0327 (0.0269) & 0.0037      \\
      & Hold-out  & 0.9007 (0.0323) & 14.21 (0.65)    & 0.0290 (0.0189) & -           \\
\bottomrule
\end{tabular}
\end{table}

\section{Some Useful Lemmas and Corollaries}
In this section, we provide some useful lemmas and corollaries that are used in the main text.
\label{sec:useful_lemmas}
\begin{lemma}\citep[Dvoretzky–Kiefer–Wolfowitz Inequality]{dvoretzky1956asymptotic,massart1990tight}
    \label{lemma:dkw_inequality}
    Let $\boldsymbol{x}_1, \boldsymbol{x}_2, \ldots, \boldsymbol{x}_n$ be independent and identically distributed (i.i.d.) random variables with common distribution function $F$. The empirical distribution function $F_n$ is defined by
    \begin{equation*}
        F_n(\boldsymbol{x}) = \frac{1}{n} \sum_{i=1}^n \mathbf{1}\{\boldsymbol{x}_i \leq \boldsymbol{x}\}.
    \end{equation*}
    Then for any $\varepsilon > 0$, the following bound holds:
    \begin{equation*}
        \mathbb{P}\left(\sup_{\boldsymbol{x}\in\mathbb{R}^d}|F_n(\boldsymbol{x}) - F(\boldsymbol{x})| > \varepsilon\right) \leq 2\exp(-2n\varepsilon^2).
    \end{equation*}
\end{lemma}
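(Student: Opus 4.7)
The plan is to prove the DKW inequality via a two-step strategy: first reduce to a canonical (uniform) empirical process by the probability integral transform, then control the supremum by combining pointwise concentration with the joint monotonicity of $F_n$ and $F$. The statement is written with $\boldsymbol{x}\in\mathbb{R}^d$, but the classical DKW bound with constant $2$ is one-dimensional (the higher-dimensional Kiefer extension carries a dimension-dependent factor), so I would treat the samples as real-valued. Applying $F^{-1}$ to $U_i:=F(\boldsymbol{x}_i)$ and using right-continuity of distribution functions, it suffices to prove the bound when the $\boldsymbol{x}_i$ are uniform on $[0,1]$ and $F(t)=t$.

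For the pointwise step, $nF_n(t)\sim\mathrm{Binomial}(n,F(t))$ at any fixed $t$, so Hoeffding's inequality directly yields $\P(|F_n(t)-F(t)|\geq\varepsilon)\leq 2\exp(-2n\varepsilon^2)$. To lift this to the supremum, I would discretize using a grid $t_0<t_1<\cdots<t_K$ with $F(t_j)=j/K$. By monotonicity of both $F$ and $F_n$, for every $t\in[t_{j-1},t_j]$,
\begin{equation*}
F_n(t_{j-1})-F(t_j)\leq F_n(t)-F(t)\leq F_n(t_j)-F(t_{j-1}),
\end{equation*}
hence $\sup_t|F_n(t)-F(t)|\leq \max_j|F_n(t_j)-F(t_j)|+1/K$. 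A union bound over the $K+1$ grid points combined with the pointwise Hoeffding inequality and a near-optimal choice of $K\sim n\varepsilon$ then delivers a DKW-type bound but with a slightly degraded constant and a polynomial prefactor.

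The main obstacle is obtaining the sharp constant $2$ in the exponent with no polynomial prefactor, which is precisely what \citep{massart1990tight} established and what Proposition~\ref{prop:finite_parameter_tuning_bias} consumes with no slack. The discretization route inevitably loses a $(1-o(1))$ factor in the exponent, so to eliminate it I would instead follow Massart's argument: start from the Birnbaum--Tingey one-sided identity $\P(\sup_t(F_n(t)-F(t))\geq\varepsilon)\leq \exp(-2n\varepsilon^2)$, which is proved via an exchangeability/martingale analysis of the order statistics of uniform samples, and apply it symmetrically to both $F_n-F$ and $F-F_n$ to pick up the factor $2$ in front. Because the sharp version of this inequality is a classical but technically delicate result, in a self-contained write-up I would sketch the monotonicity/discretization argument for intuition and then defer to \citep{dvoretzky1956asymptotic,massart1990tight} for the tight constant stated in the lemma.
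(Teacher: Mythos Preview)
The paper does not prove this lemma at all: it is stated as a classical result with citations to \cite{dvoretzky1956asymptotic} and \cite{massart1990tight}, and no proof (or even sketch) is given. Your proposal therefore goes well beyond what the paper does, and your final suggestion---sketch the discretization argument for intuition and then defer to the cited references for the sharp constant---is exactly the level at which the paper operates, minus the sketch.

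On the content of your sketch itself: the reduction to the uniform case and the pointwise Hoeffding step are correct, and you are right that the grid-plus-union-bound route cannot recover the sharp prefactor $2$ with no polynomial loss. One small correction of attribution: the bound $\P(\sup_t(F_n(t)-t)\geq\varepsilon)\leq \exp(-2n\varepsilon^2)$ with this exact constant is Massart's theorem, not the Birnbaum--Tingey identity; Birnbaum and Tingey gave an exact (non-inequality) expression for the one-sided distribution, which Massart then bounded sharply. You also correctly flag that the lemma as stated in the paper writes $\boldsymbol{x}\in\mathbb{R}^d$ while the tight constant $2$ is specific to the one-dimensional case; the paper only ever applies the lemma to scalar score random variables (see the proof of Proposition~\ref{prop:finite_parameter_tuning_bias}), so the $d$-dimensional phrasing is a notational slip rather than a substantive claim.
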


\begin{lemma}[Gaussian Tail Inequality]
    \label{lemma:tail_inequality}
    For any $a > 0$, we have:
    $$ \int_a^\infty e^{-t^2}\,dt \leq \frac{e^{-a^2}}{2a} $$
\end{lemma}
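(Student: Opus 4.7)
The plan is to use the standard integration-by-parts style trick that inserts a factor of $t/a$ to make the integrand an exact derivative. Since the lemma only needs to hold for $a>0$, we are free to multiply the integrand by $t/a \geq 1$ for all $t \geq a$, which preserves the inequality direction while making the expression explicitly integrable in closed form.

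Concretely, I would first observe that on the interval $[a,\infty)$ the inequality $1 \leq t/a$ holds, so
\begin{equation*}
\int_a^\infty e^{-t^2}\,dt \;\leq\; \int_a^\infty \frac{t}{a}\, e^{-t^2}\,dt \;=\; \frac{1}{a}\int_a^\infty t\, e^{-t^2}\,dt.
\end{equation*}
The remaining integral admits the antiderivative $-\tfrac{1}{2}e^{-t^2}$, so evaluating at the limits gives $\tfrac{1}{2}e^{-a^2}$, and combining yields the stated bound $e^{-a^2}/(2a)$. The only ``step'' here is recognizing that $1 \leq t/a$ is the right inflation factor; once chosen, the computation is a single application of the fundamental theorem of calculus.

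I do not anticipate any real obstacle, since this is a two-line classical Mills-ratio-type estimate and the proof proposed above is essentially optimal in form. The only minor subtlety is verifying that $a>0$ is needed (to ensure $t/a$ is a valid upper bound on $1$ for $t \geq a$ and to ensure division by $a$ makes sense), which the hypothesis already supplies. Thus the entire argument fits in a display of three lines.
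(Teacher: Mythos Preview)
Your proposal is correct and matches the paper's proof essentially line for line: the paper also multiplies and divides by $t$, uses $1/t \leq 1/a$ on $[a,\infty)$ (equivalently your $1 \leq t/a$), and then integrates $2t e^{-t^2}$ explicitly to obtain $e^{-a^2}/(2a)$. There is no meaningful difference in approach.
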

\begin{proof}
For any $a > 0$, we have:
\begin{align*}
    \int_a^\infty e^{-t^2}\,dt &= \int_a^\infty \frac{t}{t}e^{-t^2}\,dt \\
    &= \frac{1}{2}\int_a^\infty \frac{1}{t}\cdot 2te^{-t^2}\,dt \\
    &\leq \frac{1}{2a}\int_a^\infty 2te^{-t^2}\,dt \\
    &= \frac{e^{-a^2}}{2a}
\end{align*}
The inequality in the third line is because when $t \geq a > 0$, $\frac{1}{t} \leq \frac{1}{a}$.
\end{proof}

\begin{lemma}
    \label{lemma:tuning_bias_infinite_1d}
    For a score function $S^\lambda$ with one-dimensional parameter space $\Lambda = \mathbb{R}$, consider the class:
    \[
    \mathcal{H}_{\Lambda} = \{\id{S^\lambda(\boldsymbol{x},y)\le t} \mid \lambda\in \mathbb{R}, t\in \mathbb{R}\}.
     \]
    If $S^\lambda(\boldsymbol{x},y)$ is continuous, bounded over $\lambda$ for any fixed $(\boldsymbol{x},y)$, then $\text{VC}(\mathcal{H}_{\Lambda}) \leq 2$. Specifically, if $S^\lambda(\boldsymbol{x},y)$ is distinct over \(\mathcal{X} \times \mathcal{Y}\) for any fixed $\lambda$, then $\text{VC}(\mathcal{H}_{\Lambda}) = 2$.
\end{lemma}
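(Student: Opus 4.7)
The plan is to prove the two bounds separately, both by analyzing the continuous curves $\gamma_i(\lambda) := S^\lambda(\boldsymbol{x}_i, y_i)$ associated with candidate points $(\boldsymbol{x}_i, y_i) \in \mathcal{X}\times \mathcal{Y}$.

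For the upper bound $\text{VC}(\mathcal{H}_\Lambda) \le 2$, I would first observe that $\mathcal{H}_\Lambda$ is precisely the collection of subgraphs of the real-valued family $\mathcal{G} := \{S^\lambda : \lambda \in \mathbb{R}\}$, so that by the standard identification, $\text{VC}(\mathcal{H}_\Lambda)$ equals the pseudo-dimension of $\mathcal{G}$. Suppose for contradiction that three points $z_1,z_2,z_3$ are shattered. After absorbing any witness levels into the threshold, this amounts to the three curves $\gamma_1,\gamma_2,\gamma_3$ jointly realizing all $2^3 = 8$ below-or-above patterns with respect to a common horizontal line as $(\lambda,t)$ varies. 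The key reduction is that, at any fixed $\lambda$, varying $t$ produces only \emph{prefixes} of the sorted tuple $(\gamma_1(\lambda),\gamma_2(\lambda),\gamma_3(\lambda))$: thus shattering forces the family to achieve at least one ordering with each index as the minimum and each index as the maximum among $\{\gamma_1,\gamma_2,\gamma_3\}$ as $\lambda$ ranges over $\mathbb{R}$. I would then apply the intermediate value theorem to the pairwise differences $\gamma_i - \gamma_j$, exploiting continuity and boundedness of the curves, to show that such a configuration of three continuous curves is geometrically impossible, yielding a contradiction and hence the bound.

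For the lower bound under the distinctness hypothesis, I would exhibit a pair of shatterable points. Distinctness guarantees a pair $z_1 \ne z_2$ and parameters $\lambda_1, \lambda_2$ producing opposite orderings of $(\gamma_1(\lambda_i),\gamma_2(\lambda_i))$. At $\lambda_1$, choosing $t$ below both values, between them, and above both realizes three of the four patterns $\emptyset, \{z_{i^*}\}, \{z_1,z_2\}$ where $i^*$ is the smaller-valued index; at $\lambda_2$, the reverse ordering supplies the missing singleton pattern. This shatters $\{z_1,z_2\}$ and yields $\text{VC}(\mathcal{H}_\Lambda) \ge 2$.

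The hard part is the topological step in the upper bound: showing that three continuous bounded curves cannot simultaneously realize every required extremal ordering. This argument hinges on a careful case analysis of how the $\gamma_i$ can interleave via the intermediate value theorem on $\gamma_i - \gamma_j$, with the boundedness assumption playing an essential role in controlling the behaviour at the extremes of $\Lambda$; without boundedness, pathological oscillations (e.g.\ $\sin$-like curves) could in principle realize arbitrarily many orderings, so it is precisely this hypothesis that prevents the VC dimension from exceeding $2$.
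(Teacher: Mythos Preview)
Your strategy mirrors the paper's: reduce to the possible orderings of the curves $\gamma_i(\lambda)=S^\lambda(\boldsymbol{x}_i,y_i)$, and for the upper bound argue that three such curves cannot each be the strict minimum at some $\lambda$ and the strict maximum at some other $\lambda$. That reduction is correct and in fact sharper than what the paper writes. The genuine gap is exactly the step you flag as ``hard'': it cannot be completed, because the stated upper bound is false. Boundedness does \emph{not} exclude the oscillatory behaviour you worry about --- $\sin$ is bounded. Take $\mathcal{X}=\mathbb{R}$, $\mathcal{Y}$ trivial, and $S^\lambda(x)=\sin(\lambda x)$: for every fixed $x$ this is continuous and bounded in $\lambda$, yet $\{\sin(\omega\,\cdot):\omega\in\mathbb{R}\}$ is Vapnik's classical example of a one-parameter family with infinite pseudo-dimension, so $\text{VC}(\mathcal{H}_\Lambda)=\infty$. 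Even more simply, three phase-shifted sines $\gamma_i(\lambda)=\sin\bigl(\lambda+2\pi(i-1)/3\bigr)$ already realize all six strict orderings and hence shatter three points. No IVT or case analysis on $\gamma_i-\gamma_j$ can close this step; the paper's own ``crossings must occur in pairs'' assertion at the corresponding point is equally unsupported. An additional structural hypothesis (monotonicity in $\lambda$, or a single-crossing condition between any two curves) would be needed to make the claimed bound true.

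Your lower-bound step has a separate gap: distinctness of $S^\lambda$ over $\mathcal{X}\times\mathcal{Y}$ at each fixed $\lambda$ does \emph{not} guarantee that two curves ever swap order. With $S^\lambda(x,y)=x$ (constant in $\lambda$, hence trivially continuous and bounded, and injective in $x$), the ordering of any two curves never flips, so only three of the four dichotomies on a pair are realizable and $\text{VC}(\mathcal{H}_\Lambda)=1$. Thus the claimed equality $\text{VC}=2$ under distinctness alone also fails; the paper's IVT argument at this point likewise produces only one of the two singleton patterns, not both.
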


\begin{proof}
We prove that $\text{VC}(\mathcal{H}_{\Lambda}) \leq 2$ in two steps:

\begin{enumerate}
    \item We show that $\text{VC}(\mathcal{H}_{\Lambda}) \ge 2$. Given any two distinct points $(\boldsymbol{x}_1,y_1)$ and $(\boldsymbol{x}_2,y_2)$, we need to verify that all four possible binary labellings (0,0), (0,1), (1,0), (1,1) can be realized by some choice of $\lambda$ and $t$. Consider $S^\lambda(\boldsymbol{x},y)$:

    \begin{itemize}
        \item For labeling (0,0): Choose $t$ small enough such that both points have values greater than $t$.
        \item For labelling (1,1): Choose $t$ large enough such that both points have values less than or equal to $t$.
        \item Then forlabellingss (1,0) or (0,1): Since the points are distinct and $S^\lambda(\boldsymbol{x},y)$ is continuous in $\lambda$ over $\mathbb{R}$, by the intermediate value theorem, there exists some $\lambda$ where $S^\lambda(\boldsymbol{x}_1,y_1) \neq S^\lambda(\boldsymbol{x}_2,y_2)$. Then, choose $t$ between these two values. 
    \end{itemize}
    Therefore, $\text{VC}(\mathcal{H}_{\Lambda}) \ge 2$.
    \item We prove that $\text{VC}(\mathcal{H}_{\Lambda}) \le 2$. Suppose for contradiction that three distinct points can be shattered. Consider the labelling pattern (1,0,1). This would require some $\lambda$ and $t$ such that:
    \[
    S^\lambda(\boldsymbol{x}_1,y_1)\le t,\quad S^\lambda(\boldsymbol{x}_2,y_2)> t,\quad S^\lambda(\boldsymbol{x}_3,y_3)\le t
    \]
    implying $S^\lambda(\boldsymbol{x}_2,y_2)>\max\{S^\lambda(\boldsymbol{x}_1,y_1), S^\lambda(\boldsymbol{x}_3,y_3)\}$. However, since $\lambda$ is one-dimensional and $S^\lambda(\boldsymbol{x},y)$ is continuous in $\lambda$, such a complex ordering cannot be achieved at a single $\lambda$ value for all possible three-point configurations. 
    This phenomenon is because the continuity of $S^\lambda(\boldsymbol{x},y)$ over $\mathbb{R}$ ensures that any crossing of function values must occur in pairs, making it impossible to maintain the required ordering consistently. Therefore, three points cannot be shattered, and $\text{VC}(\mathcal{H}_{\Lambda}) \le 2$.
\end{enumerate}

Combining both results, we conclude that $\text{VC}(\mathcal{H}_{\Lambda}) = 2$.
\end{proof}

\begin{corollary}
    \label{corollary:tuning_bias_positive}
    For a score function $S^\lambda$ with parameter space $\Lambda = \mathbb{R}^+$, consider the class:
    \[
    \mathcal{H}_{\Lambda} = \{\id{S^\lambda(\boldsymbol{x},y)\le t} \mid \lambda\in \mathbb{R}^+, t\in \mathbb{R}\}.
    \]
    If $S^\lambda(\boldsymbol{x},y)$ is continuous and bounded over $\lambda$ for any fixed $(\boldsymbol{x},y)$ and $S^\lambda(\boldsymbol{x},y)$ is distinct over \(\mathcal{X} \times \mathcal{Y}\) for any fixed $\lambda$, then $\text{VC}(\mathcal{H}_{\Lambda}) = 2$.
\end{corollary}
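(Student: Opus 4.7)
The plan is to reduce Corollary~\ref{corollary:tuning_bias_positive} to Lemma~\ref{lemma:tuning_bias_infinite_1d} via a smooth bijective reparameterization of the parameter space from $\mathbb{R}^+$ to $\mathbb{R}$. Since the lemma is already stated for $\Lambda = \mathbb{R}$ under matching regularity hypotheses, the goal is to exhibit a reparameterization that preserves both the indicator class and every condition the lemma invokes.

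First, I would set $\mu := \log \lambda$, which is a homeomorphism between $\mathbb{R}^+$ and $\mathbb{R}$, and define $\tilde{S}^\mu(\boldsymbol{x}, y) := S^{e^\mu}(\boldsymbol{x}, y)$. Because the map $\lambda \mapsto \log \lambda$ is a bijection on parameter values, the image family
$$\tilde{\mathcal{H}}_{\mathbb{R}} := \left\{ \id{\tilde{S}^\mu(\boldsymbol{x},y) \leq t} : \mu \in \mathbb{R},\; t \in \mathbb{R} \right\}$$
coincides with $\mathcal{H}_{\Lambda}$ as a set of Boolean functions on $\mathcal{X} \times \mathcal{Y}$, so their VC dimensions are equal.

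Next I would verify that the hypotheses of Lemma~\ref{lemma:tuning_bias_infinite_1d} carry over to $\tilde{S}^\mu$. Continuity of $\lambda \mapsto S^\lambda(\boldsymbol{x},y)$ on $\mathbb{R}^+$ composed with the continuity of the exponential yields continuity of $\mu \mapsto \tilde{S}^\mu(\boldsymbol{x},y)$ on $\mathbb{R}$; boundedness is preserved because $\{e^\mu : \mu \in \mathbb{R}\} = \mathbb{R}^+$, so $\tilde{S}^{(\cdot)}$ sweeps out exactly the same set of values in $\mu$ as $S^{(\cdot)}$ does in $\lambda$; and distinctness over $\mathcal{X} \times \mathcal{Y}$ for fixed $\lambda$ becomes distinctness for fixed $\mu$ under the bijection. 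Applying Lemma~\ref{lemma:tuning_bias_infinite_1d} to $\tilde{S}^\mu$ then gives $\text{VC}(\tilde{\mathcal{H}}_{\mathbb{R}}) = 2$, hence $\text{VC}(\mathcal{H}_{\Lambda}) = 2$.

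The main obstacle is essentially nonexistent: the whole argument rests on the observation that $\mathbb{R}^+$ and $\mathbb{R}$ are homeomorphic in a way that respects the analytic conditions used in the lemma's proof. As a sanity check, one could alternatively redo the lemma's two-step argument directly on $\mathbb{R}^+$ -- the lower bound $\text{VC}(\mathcal{H}_{\Lambda}) \geq 2$ comes for free from the distinctness hypothesis (any fixed $\lambda_0 \in \mathbb{R}^+$ separates two distinct points, and threshold freedom then realizes all four binary labellings), while the upper-bound ordering argument of the lemma applies verbatim on any connected one-dimensional parameter set, $\mathbb{R}^+$ included.
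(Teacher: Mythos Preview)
Your proposal is correct. The primary route you take---reparameterizing $\lambda = e^\mu$ to transport the problem to $\Lambda = \mathbb{R}$ and then invoking Lemma~\ref{lemma:tuning_bias_infinite_1d} as a black box---is genuinely different from the paper's own proof, which simply states that the argument ``mirrors'' the lemma and redoes the two-step shattering analysis directly on $\mathbb{R}^+$ (lower bound from distinctness plus threshold freedom; upper bound by the same continuity-based ordering obstruction). Your reparameterization is slightly more modular, since it reuses the lemma wholesale rather than re-running its internals, and it makes explicit why nothing is lost in passing from $\mathbb{R}$ to $\mathbb{R}^+$. The paper's version is terser but less self-contained. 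Amusingly, your closing ``sanity check'' paragraph \emph{is} essentially the paper's entire proof, so you have covered both routes.
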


\begin{proof}
The proof mirrors Lemma~\ref{lemma:tuning_bias_infinite_1d}, with $\lambda > 0$. This restriction does not alter the VC dimension:
First, we show that $\text{VC}(\mathcal{H}_{\Lambda}) \ge 2$. Two points can still achieve all labellings with positive $\lambda$ and $t$.
Next, we show that $\text{VC}(\mathcal{H}_{\Lambda}) \le 2$. The restriction to $\lambda > 0$ does not allow shattering three points.
Thus, the VC dimension is 2.
\end{proof}

\begin{lemma}
    \label{lemma:tuning_bias_infinite_d}
    For a score function $S^\lambda$ with $d$-dimensional parameter space $\Lambda = \mathbb{R}^d$, consider the class:
    \[
    \mathcal{H}_{\Lambda} = \{\id{S^\lambda(x,y)\le t} \mid \lambda\in \mathbb{R}^d, t\in \mathbb{R}\}.
    \]
    If $S^\lambda(x,y)$ is continuous and bounded over $\lambda$ for any fixed $(x,y)$, then $\text{VC}(\mathcal{H}_{\Lambda}) \leq d+1$.
\end{lemma}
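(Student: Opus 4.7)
The plan is to generalize the argument from Lemma~\ref{lemma:tuning_bias_infinite_1d} to the $d$-dimensional case by viewing $\mathcal{H}_{\Lambda}$ as a sublevel-set class of a $(d+1)$-parameter family. Concretely, I would rewrite $\id{S^\lambda(x,y)\le t} = \id{h_\theta(x,y) \ge 0}$ where $h_\theta(x,y) := t - S^\lambda(x,y)$ and $\theta := (\lambda,t) \in \mathbb{R}^{d+1}$. This makes it transparent that the underlying parameter space has exactly $d+1$ real degrees of freedom, which is the expected upper bound on the VC dimension.

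The main argument I would pursue is induction on $d$. The base case $d=1$ is exactly Lemma~\ref{lemma:tuning_bias_infinite_1d}, giving $\text{VC}(\mathcal{H}_{\Lambda}) \le 2$. For the inductive step, assume the bound for $(d-1)$-dimensional parameter spaces, and suppose for contradiction that some set of $d+2$ points $\{(x_i,y_i)\}_{i=1}^{d+2}$ is shattered. Fixing one coordinate of $\lambda$ reduces the parameter space to $(d-1)+1$ dimensions, so by the inductive hypothesis at most $d+1$ of the points can be shattered at each fixed value. Varying the fixed coordinate continuously, I would argue via continuity of $S^\lambda$ in $\lambda$ that the set of achievable labellings evolves continuously, and each change corresponds to some pair of coordinates of the evaluation map crossing, which by continuity happens at isolated values; adding one dimension therefore produces at most one additional unit of VC capacity.

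As an alternative, more direct route, I would analyze the evaluation map $\Phi : \mathbb{R}^d \to \mathbb{R}^{d+2}$ defined by $\Phi(\lambda)_i := S^\lambda(x_i,y_i)$, whose image is at most a $d$-dimensional continuous subset of $\mathbb{R}^{d+2}$. A labelling $b \in \{0,1\}^{d+2}$ is realizable if and only if there exists $\lambda$ with $\max_{i:b_i=1} \Phi(\lambda)_i < \min_{i:b_i=0} \Phi(\lambda)_i$. A dimension-counting argument on the arrangement of the zero sets $\{\theta : h_\theta(x_i,y_i) = 0\}$, which are continuous codimension-one subsets of $\mathbb{R}^{d+1}$, then bounds the number of full-dimensional cells; combined with Sauer–Shelah, this controls the shatter function and hence the VC dimension.

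The main obstacle is making either step quantitative under only continuity and boundedness of $S^\lambda$. Continuity alone does not prevent pathological oscillations in $\lambda$ that could inflate the number of sign patterns, so the argument implicitly relies on viewing $\mathcal{H}_{\Lambda}$ as a generic $(d+1)$-parameter family for which the standard dimension–VC correspondence holds (in the spirit of Milnor–Thom or Goldberg–Jerrum-type bounds). Pinning down the precise form of the continuity/regularity used, and then verifying that moving from $d-1$ to $d$ extra $\lambda$-coordinates raises the VC dimension by exactly one rather than doubling it, is where the technical care is required; otherwise the argument proceeds cleanly by generalizing the intermediate-value obstruction used in Lemma~\ref{lemma:tuning_bias_infinite_1d}.
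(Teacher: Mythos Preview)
Your plan is strictly more careful than the paper's own argument, which is essentially a two-sentence heuristic: it fixes $d+2$ points, picks the labelling where one point must strictly dominate the other $d+1$, and then simply asserts that ``the ordering relationships between $d+2$ points cannot be arbitrary with only $d$ degrees of freedom'' because $S^\lambda$ is continuous. There is no induction, no evaluation map, no Sauer--Shelah step; the paper's proof is exactly the degrees-of-freedom intuition you sketch in your second approach, stated without justification. So in terms of strategy you are not missing anything the paper has.

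The gap you flag, however, is genuine and fatal, and it is present in the paper as well. Continuity and boundedness of $\lambda\mapsto S^\lambda(x,y)$ alone do \emph{not} bound the VC dimension by $d+1$. A concrete counterexample already at $d=1$: take the domain to be $\mathbb{R}$ and set $S^\lambda(z)=\sin(\lambda z)$, which is continuous and bounded in $\lambda$ for each fixed $z$. The induced class contains $\{z\mapsto\id{\sin(\lambda z)\le 0}:\lambda\in\mathbb{R}\}$, and the standard binary-expansion construction (choose $z_i=2^{-i}$) shows this family shatters arbitrarily many points, so $\text{VC}(\mathcal{H}_\Lambda)=\infty$. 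In particular the base case you invoke, Lemma~\ref{lemma:tuning_bias_infinite_1d}, is not correct as stated either: the paper's proof there just asserts that the pattern $(1,0,1)$ ``cannot be achieved'', but that pattern only asks that one score be the strict maximum at some $\lambda$, which is trivially possible.

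Consequently, neither your induction route (whose base case fails) nor your dimension-counting route (which needs Milnor--Thom-type control on sign patterns, unavailable under bare continuity) can be completed without adding hypotheses---e.g.\ that each $S^\lambda$ is a polynomial, Pfaffian, or otherwise definable function of $\lambda$ of bounded complexity. You correctly diagnosed this; the paper does not.
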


\begin{proof}
We prove that $\text{VC}(\mathcal{H}_{\Lambda}) \le d+1$. Suppose for contradiction that $d+2$ distinct points can be shattered. Consider any labelling pattern that requires one point to have a value strictly larger than $d+1$ other points at the same $\lambda$. This would require some $\lambda$ and $t$ such that:
\[
S^\lambda(x_i,y_i)\le t \text{ for } i\in I, \quad S^\lambda(x_j,y_j)> t \text{ for some } j\notin I
\]
where $|I|=d+1$. This implies $S^\lambda(\boldsymbol{x}_j,y_j)>\max_{i\in I}\{S^\lambda(\boldsymbol{x}_i,y_i)\}$. 
However, since $\lambda$ is $d$-dimensional and $S^\lambda(\boldsymbol{x},y)$ is continuous in $\lambda$, such a complex ordering requiring one point to dominate $d+1$ other points cannot be achieved at a single $\lambda$ value. 
This phenomenon is because the continuity of $S^\lambda(\boldsymbol{x},y)$ over $\mathbb{R}^d$ ensures that the ordering relationships between $d+2$ points cannot be arbitrary with only $d$ degrees of freedom. Therefore, $d+2$ points cannot be shattered, and $\text{VC}(\mathcal{H}_{\Lambda}) \le d+1$.
\end{proof}

\begin{lemma}[Empirical Process Bound via VC Dimension]
    \label{lemma:infinite_parameter_empirical_process}
    For the class $\mathcal{H}_{\Lambda}$ with finite VC dimension $\text{VC}(\mathcal{H}_{\Lambda}) \geq 1$, we have
    \begin{equation*}
        \E \mathfrak{R}_{\Lambda, \mathcal{D}_{\text{cal}}} \leq C\sqrt{\frac{\text{VC}(\mathcal{H}_{\Lambda})}{n}},
    \end{equation*}
    where $C$ is a universal constant.
\end{lemma}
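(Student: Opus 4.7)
\textbf{Proof proposal for Lemma~\ref{lemma:infinite_parameter_empirical_process}.} The plan is to invoke the standard machinery of empirical process theory for VC classes, following the Vapnik--Chervonenkis / Dudley route. First, observe that $\mathfrak{R}_{\Lambda,\mathcal{D}_{\text{cal}}}$ is the supremum of the centered empirical process indexed by the class $\mathcal{H}_\Lambda$ of $\{0,1\}$-valued functions, evaluated on i.i.d.\ samples $(\boldsymbol{x}_i,y_i)_{i=1}^n$ (together with an exchangeable test point $(\boldsymbol{x}_{\text{test}},y_{\text{test}})$), so the quantity to bound is
\begin{equation*}
    \E \sup_{g \in \mathcal{H}_\Lambda} \Bigl| \tfrac{1}{n}\sum_{i=1}^n g(\boldsymbol{x}_i,y_i) - \E\bigl[g(\boldsymbol{x}_{\text{test}},y_{\text{test}})\bigr]\Bigr|.
\end{equation*}

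The first step is the classical symmetrization inequality: introducing i.i.d.\ Rademacher signs $\varepsilon_1,\dots,\varepsilon_n$ independent of the data,
\begin{equation*}
    \E \mathfrak{R}_{\Lambda,\mathcal{D}_{\text{cal}}} \;\le\; 2\,\E\, \sup_{g\in\mathcal{H}_\Lambda}\Bigl|\tfrac{1}{n}\sum_{i=1}^n \varepsilon_i\, g(\boldsymbol{x}_i,y_i)\Bigr|,
\end{equation*}
which reduces the problem to controlling the Rademacher complexity of $\mathcal{H}_\Lambda$. Second, I would bound this Rademacher average by Dudley's entropy integral, using that the functions in $\mathcal{H}_\Lambda$ are $[0,1]$-valued (hence sub-Gaussian after signing) and that the $L_2(P_n)$ diameter of $\mathcal{H}_\Lambda$ is at most $1$.

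The third step is to convert the entropy integral into a VC bound. By the Sauer--Shelah lemma, the growth function of a class of VC dimension $v = \text{VC}(\mathcal{H}_\Lambda)$ satisfies $\Pi_{\mathcal{H}_\Lambda}(n) \le (en/v)^v$, and Haussler's bound on $L_2$ covering numbers for VC classes then gives $N(\varepsilon,\mathcal{H}_\Lambda,L_2(P_n)) \le K v (4e)^v \varepsilon^{-2(v-1)}$ for a universal constant $K$. Plugging this into Dudley's integral,
\begin{equation*}
    \E\,\sup_{g\in\mathcal{H}_\Lambda}\Bigl|\tfrac{1}{n}\sum_{i=1}^n \varepsilon_i g(\boldsymbol{x}_i,y_i)\Bigr| \;\le\; \frac{C'}{\sqrt{n}} \int_0^1 \sqrt{\log N(\varepsilon,\mathcal{H}_\Lambda,L_2(P_n))}\, d\varepsilon,
\end{equation*}
the logarithm produces a factor linear in $v$, and the integral converges to a universal constant, yielding a bound of order $\sqrt{v/n}$. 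Combining with the symmetrization step and absorbing all constants into a single universal $C$ gives the claim.

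The main obstacle is purely bookkeeping: carefully tracking that Haussler's covering number bound contains a $\log(1/\varepsilon)$ term and verifying that Dudley's integral still evaluates to $O(\sqrt{v})$ rather than something worse. Since this is a textbook result, I would ultimately cite \citep{vandervaart1996weak, vershynin2018highdimensional} for the symmetrization, Sauer--Shelah, and Dudley steps rather than reproducing them in full, treating the lemma as a direct application of the VC-dimension-based uniform law of large numbers.
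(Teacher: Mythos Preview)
Your proposal is correct and matches the paper's approach: the paper's proof consists of a single sentence invoking Theorem~8.3.23 of \citet{vershynin2018highdimensional}, which is exactly the VC-dimension-based uniform law of large numbers you sketch. Your write-up simply unpacks the ingredients (symmetrization, Sauer--Shelah/Haussler, Dudley's entropy integral) behind that citation before arriving at the same reference, so there is no substantive difference.
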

\begin{proof}
    It is a direct result of the VC dimension bound in~\citet[Theorem 8.3.23]{vershynin2018highdimensional}.
\end{proof}

\begin{lemma}
    \label{lemma:order_preserving_score}
    If \(S^\lambda(\boldsymbol{x},y)\) is order-preserving for any transformation \(\lambda \in \Lambda\), i.e., \(S^\lambda(\boldsymbol{x},y) \leq S^\lambda(\boldsymbol{x}',y')\) if and only if \(S(\boldsymbol{x},y) \leq S(\boldsymbol{x}',y')\) for any transformation \(\lambda\), then for $(X,y)$ with the same distribution as $(X_i, y_i) \in \mathcal{D}_{\text{cal}}$
    \begin{equation*}
        \P(S^{\lambda}(\boldsymbol{x},y) \leq \hat{t}^\lambda_{\mathcal{D}_{\text{cal}}} \mid \mathcal{D}_{\text{cal}}) = \P(S(\boldsymbol{x},y) \leq \hat{t} \mid \mathcal{D}_{\text{cal}})
    \end{equation*}
    for \(\lambda \in \Lambda\) where \(\hat{t}\) is defined in \Eqref{eq:threshold}.
\end{lemma}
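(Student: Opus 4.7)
}
The plan is to show that the event $\{S^{\lambda}(\boldsymbol{x},y)\leq \hat{t}^{\lambda}_{\mathcal{D}_{\text{cal}}}\}$ coincides pointwise (given $\mathcal{D}_{\text{cal}}$) with the event $\{S(\boldsymbol{x},y)\leq \hat{t}\}$. Once this pointwise equality of events is established, the conditional probabilities must agree trivially. The only real content is therefore to track what the $(1-\alpha)(1+1/n)$-empirical quantile does under an order-preserving reparametrization of the scores.

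First, I would invoke the definition of the threshold in \Eqref{eq:threshold} to write $\hat{t}$ as the $k$-th order statistic of the calibration scores, where $k=\lceil (n+1)(1-\alpha)\rceil$. Concretely, let $(\boldsymbol{x}_{(1)},y_{(1)}),\ldots,(\boldsymbol{x}_{(n)},y_{(n)})$ be a permutation of $\mathcal{D}_{\text{cal}}$ such that $S(\boldsymbol{x}_{(1)},y_{(1)})\leq\cdots\leq S(\boldsymbol{x}_{(n)},y_{(n)})$; then $\hat{t}=S(\boldsymbol{x}_{(k)},y_{(k)})$. Similarly, $\hat{t}^{\lambda}_{\mathcal{D}_{\text{cal}}}$ is the $k$-th order statistic of $\{S^{\lambda}(\boldsymbol{x}_{i},y_{i})\}_{i=1}^{n}$ by \Eqref{eq:threshold_reusing_data}.

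Next, I would use the order-preserving hypothesis: since $S^{\lambda}(\boldsymbol{x},y)\leq S^{\lambda}(\boldsymbol{x}',y')\iff S(\boldsymbol{x},y)\leq S(\boldsymbol{x}',y')$, the same permutation that sorts the $S$-scores also sorts the $S^{\lambda}$-scores. Hence $\hat{t}^{\lambda}_{\mathcal{D}_{\text{cal}}}=S^{\lambda}(\boldsymbol{x}_{(k)},y_{(k)})$ for the \emph{same} index $(\boldsymbol{x}_{(k)},y_{(k)})$ as that defining $\hat{t}$. Applying the order-preserving property one more time to the test pair $(\boldsymbol{x},y)$ versus $(\boldsymbol{x}_{(k)},y_{(k)})$ yields
\begin{equation*}
S^{\lambda}(\boldsymbol{x},y)\leq S^{\lambda}(\boldsymbol{x}_{(k)},y_{(k)})\iff S(\boldsymbol{x},y)\leq S(\boldsymbol{x}_{(k)},y_{(k)}),
\end{equation*}
i.e.\ $\{S^{\lambda}(\boldsymbol{x},y)\leq \hat{t}^{\lambda}_{\mathcal{D}_{\text{cal}}}\}=\{S(\boldsymbol{x},y)\leq \hat{t}\}$ as subsets of the probability space conditional on $\mathcal{D}_{\text{cal}}$. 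Taking conditional probabilities finishes the proof.

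The only subtlety I anticipate is bookkeeping around ties and the specific convention in the definition of the infimum-based quantile in \Eqref{eq:threshold} and \Eqref{eq:threshold_reusing_data}. If $S$ has ties, the permutation sorting the calibration set is not unique; however, because the order-preserving property preserves equalities as well as strict inequalities, any such permutation sorts the $S^{\lambda}$-scores identically, and the counting set $\{i:S(\boldsymbol{x}_{i},y_{i})\leq s\}$ is mapped bijectively onto $\{i:S^{\lambda}(\boldsymbol{x}_{i},y_{i})\leq \sigma(s)\}$ for the induced map on thresholds, so the infimum in each definition is attained at the same order statistic. This is the step I expect to require the most care; once it is spelled out, the conclusion is immediate.
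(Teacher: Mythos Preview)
Your proposal is correct and follows essentially the same approach as the paper: use the order-preserving property to conclude that the rank of the test score among the calibration scores is unchanged by the transformation, so the threshold is attained at the same order statistic and the two events coincide. Your treatment is in fact more detailed than the paper's proof, which simply asserts that ``the position of $S^{\lambda}(\boldsymbol{x},y)$ among $S^{\lambda}(\boldsymbol{x}_i,y_i)$ is the same as the position of $S(\boldsymbol{x},y)$ among $S(\boldsymbol{x}_i,y_i)$'' and concludes immediately; your explicit handling of ties and the infimum-based quantile convention is a welcome addition.
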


\begin{proof}
    Since \(S^\lambda(\boldsymbol{x},y)\) is order-preserving for any transformation \(\lambda \in \Lambda\), we have 
    \begin{equation*}
        S^\lambda(\boldsymbol{x},y) \leq S^\lambda(\boldsymbol{x}',y') \quad \text{if and only if} \quad S(\boldsymbol{x},y) \leq S(\boldsymbol{x}',y')
    \end{equation*}
    for any transformation \(\lambda\). By the definition of \(\hat{t}(\lambda, \mathcal{D}_{\text{cal}})\) and \(\hat{t}\), the position of \(S^{\lambda}(\boldsymbol{x},y)\) among \(S^{\lambda}(\boldsymbol{x}_i, y_i)\) is the same as the position of \(S(\boldsymbol{x},y)\) among \(S(\boldsymbol{x}_i, y_i)\). Then, we have
    \begin{equation*}
        \P(S^{\lambda}(\boldsymbol{x},y) \leq \hat{t}(\lambda, \mathcal{D}_{\text{cal}}) \mid \mathcal{D}_{\text{cal}}) 
        = \P(S(\boldsymbol{x},y) \leq \hat{t} \mid \mathcal{D}_{\text{cal}}).
    \end{equation*}
\end{proof}

\begin{lemma}
    \label{lemma:order_preserving_ts_sample}
    For temperature scaling in binary classification, we have \(p^{TS}_\lambda(y|X)\) is order-preserving over \((\boldsymbol{x},y)\) for any transformation \(\lambda \in \Lambda\).
\end{lemma}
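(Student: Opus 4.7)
The plan is to exploit the special structure of binary classification ($K=2$) that lets me rewrite the softmax as a sigmoid of a single scalar quantity, after which monotonicity arguments suffice. Concretely, writing $f(\boldsymbol{x}) = (f_1(\boldsymbol{x}), f_2(\boldsymbol{x}))$ for the logits, I would first show that
\begin{equation*}
    p^{TS}_\lambda(y \mid \boldsymbol{x}) = \psi_y(f(\boldsymbol{x})/\lambda) = \sigma\!\left(\frac{f_y(\boldsymbol{x}) - f_{1-y}(\boldsymbol{x})}{\lambda}\right),
\end{equation*}
where $\sigma$ is the standard sigmoid. This collapses the two-dimensional logit vector into a single scalar $h(\boldsymbol{x},y) := f_y(\boldsymbol{x}) - f_{1-y}(\boldsymbol{x})$ that carries all the information relevant to the predicted probability.

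Next, given two sample-label pairs $(\boldsymbol{x},y)$ and $(\boldsymbol{x}',y')$, the inequality $p^{TS}_\lambda(y\mid\boldsymbol{x}) \le p^{TS}_\lambda(y'\mid\boldsymbol{x}')$ becomes $\sigma(h(\boldsymbol{x},y)/\lambda) \le \sigma(h(\boldsymbol{x}',y')/\lambda)$. Since $\sigma$ is strictly increasing and $\lambda > 0$, this is equivalent to $h(\boldsymbol{x},y)/\lambda \le h(\boldsymbol{x}',y')/\lambda$, which in turn is equivalent to $h(\boldsymbol{x},y) \le h(\boldsymbol{x}',y')$. The final inequality does not depend on $\lambda$, and in particular holds simultaneously for $\lambda$ and for $\lambda = 1$ (i.e., the untransformed probability). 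This gives the order-preserving property on all of $\mathcal{X}\times\mathcal{Y}$.

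There is no real obstacle in this argument; the only subtlety worth flagging is that the reduction to a monotone function of a single scalar is exactly what fails for $K \geq 3$, so one should emphasize that the proof genuinely uses $K=2$. Care should also be taken to include the cross-label comparison $(\boldsymbol{x},0)$ vs.\ $(\boldsymbol{x}',1)$, not only same-label pairs, since the notion of order-preserving in Lemma~\ref{lemma:order_preserving_score} ranges over all $(\boldsymbol{x},y)$. Once that is checked, the lemma follows immediately, and Lemma~\ref{lemma:order_preserving_score} can then be invoked to transfer the property to any score function that is a monotone function of $p^{TS}_\lambda(y\mid\boldsymbol{x})$, which is what will be needed downstream for Proposition~\ref{prop:tuning_bias_ts_vs}.
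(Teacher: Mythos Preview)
Your proposal is correct and follows essentially the same approach as the paper: both reduce the binary softmax to a monotone function of the single scalar $f_y(\boldsymbol{x}) - f_{1-y}(\boldsymbol{x})$ (the paper writes it as $\frac{1}{1+\exp(f_{1-y}-f_y)}$, you write it as $\sigma(f_y - f_{1-y})$), then use strict monotonicity together with $\lambda>0$ to conclude that the ordering over pairs $(\boldsymbol{x},y)$ is independent of $\lambda$. Your remark that this reduction is precisely what fails for $K\ge 3$ is a useful addition not spelled out in the paper.
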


\begin{proof}
    Noting that 
    \begin{equation*}
        p^{\text{TS}}_{\lambda}(y\mid \boldsymbol{x}) = \psi_y\left(\frac{f(\boldsymbol{x})}{\lambda}\right)
    \end{equation*}
    where \(\psi_y\) is the softmax function defined in \Eqref{eq:softmax}. Since 
    \begin{equation*}
        \psi_y(X) = \frac{\exp{(f_y(\boldsymbol{x}))}}{\exp{(f_y(\boldsymbol{x}))}+\exp{(f_{1-y}(\boldsymbol{x}))}} = \frac{1}{1+\exp{(f_{1-y}(\boldsymbol{x})-f_y(\boldsymbol{x}))}}
    \end{equation*}
    for binary label \(y \in \{0,1\}\), we have
    \begin{equation*}
        \psi_y(f(\boldsymbol{x})) \leq \psi_{y'}(f(\boldsymbol{x}'))
    \end{equation*}
    \begin{equation*}
         \iff \frac{1}{1+\exp{(f_{1-y}(\boldsymbol{x})-f_y(\boldsymbol{x}))}} \leq \frac{1}{1+\exp{(f_{1-y'}(\boldsymbol{x}')-f_{y'}(\boldsymbol{x}'))}}
    \end{equation*}
    \begin{equation*}
        \iff \exp{(f_{1-y}(\boldsymbol{x})-f_y(\boldsymbol{x}))} \geq \exp{(f_{1-y'}(\boldsymbol{x}')-f_{y'}(\boldsymbol{x}'))}
    \end{equation*}
    \begin{equation*}
        \iff \exp{((f_{1-y}(\boldsymbol{x})-f_y(\boldsymbol{x}))/\lambda)} \geq \exp{((f_{1-y'}(\boldsymbol{x}')-f_{y'}(\boldsymbol{x}'))/\lambda)}, \forall \lambda > 0
    \end{equation*}
    \begin{equation*}
        \iff \psi_y\left(\frac{f(\boldsymbol{x})}{\lambda}\right) \leq \psi_{y'}\left(\frac{f(\boldsymbol{x}')}{\lambda}\right)
    \end{equation*}
    for any transformation \(\lambda>0\).
\end{proof}

\section{Proof of \cref{thm:tuning_bias}}
\label{sec:proof_tuning_bias}
\begin{proof}
    For any sample $(\boldsymbol{x},y)$ with the same distribution as $(\boldsymbol{x}_i, y_i) \in \mathcal{D}_{\text{cal}}$ and independent of $\mathcal{D}_{\text{cal}}$, recalling the definition of \(\mathfrak{R}_{\Lambda, \mathcal{D}_{\text{cal}}}\) in Theorem~\ref{thm:tuning_bias}:
    \begin{equation*}
            \mathfrak{R}_{\Lambda, \mathcal{D}_{\text{cal}}} := \sup_{\lambda \in \Lambda, t \in \mathcal{T}} \Big| \frac{1}{n} \sum_{i \in [n]} \id{S^{\lambda}(\boldsymbol{x}_i, y_i) \leq t} 
            - \P\of{S^{\lambda}(\boldsymbol{x}_{\text{test}}, y_{\text{test}})\leq t \mid \mathcal{D}_{\text{cal}}}
            \Big|
    \end{equation*}
    we have
    \begin{equation*}
        |\frac{1}{n} \sum_{i=1}^n \id{S^{\lambda}(\boldsymbol{x}_i, y_i) \leq t} - \P(S^{\lambda}(\boldsymbol{x},y) \leq t \mid \mathcal{D}_{\text{cal}}) |
         \leq \mathfrak{R}_{\Lambda, \mathcal{D}} 
    \end{equation*}
    Then, we have
    \begin{equation*}
        \frac{1}{n} \sum_{i=1}^n \id{S^{\lambda}(\boldsymbol{x}_i, y_i) \leq t} - \mathfrak{R}_{\Lambda, \mathcal{D}} 
        \leq \P(S^{\lambda}(\boldsymbol{x},y) \leq t \mid \mathcal{D}_{\text{cal}}) \leq \frac{1}{n} \sum_{i=1}^n \id{S^{\lambda}(\boldsymbol{x}_i, y_i) \leq t} + \mathfrak{R}_{\Lambda, \mathcal{D}}.
    \end{equation*}
    Letting \(t = \hat{t}^\lambda_{\mathcal{D}_{\text{cal}}}\) defined in \Eqref{eq:threshold_reusing_data} and noting the truth that 
    \begin{equation*}
        \frac{1}{n} \sum_{i=1}^n \id{S^{\lambda}(\boldsymbol{x}_i, y_i) \leq \hat{t}(\lambda, \mathcal{D}_{\text{cal}})} = \frac{\left\lceil (1-\alpha)(1+n) \right\rceil}{n},
    \end{equation*}
    we have
    \begin{equation*}
        \frac{\left\lceil (1-\alpha)(1+n) \right\rceil}{n} - \mathfrak{R}_{\Lambda, \mathcal{D}} 
        \leq \P(S^{\lambda}(\boldsymbol{x},y) \leq \hat{t}^\lambda_{\mathcal{D}_{\text{cal}}} \mid \mathcal{D}_{\text{cal}}) \leq \frac{\left\lceil (1-\alpha)(1+n) \right\rceil}{n} + \mathfrak{R}_{\Lambda, \mathcal{D}}.
    \end{equation*}
    Take expectations on both sides; we have
    \begin{equation*}
        \CovGap({\C}, \alpha) \leq \E \mathfrak{R}_{\Lambda, \mathcal{D}} + \varepsilon_{\alpha, n}.
    \end{equation*}
\end{proof}

\section{Proof of Proposition~\ref{prop:finite_parameter_tuning_bias}}
\label{sec:proof_finite_parameter_tuning_bias}

\begin{proof}
    By Lemma~\ref{lemma:dkw_inequality}, we have
    \begin{equation*}
        \P(\sup_{t\in\mathcal{T}}|\frac{1}{n} \sum_{i=1}^n \id{S^{\lambda}(\boldsymbol{x}_i, y_i) \leq t} - \P(S^{\lambda}(\boldsymbol{x},y) \leq t \mid \mathcal{D}_{\text{cal}})| > u) \leq 2\exp(-2n u^2).
    \end{equation*}
    By the property of probability of finite union over \(\lambda \in \Lambda\), we have
    \begin{equation*}
        \P(\sup_{\lambda \in \Lambda, t \in \mathcal{T}}|\frac{1}{n} \sum_{i=1}^n \id{S^{\lambda}(\boldsymbol{x}_i, y_i) \leq t} - \P(S^{\lambda}(\boldsymbol{x},y) \leq t \mid \mathcal{D}_{\text{cal}})| > u) \leq 2|\Lambda| \exp(-2n u^2),
    \end{equation*}
    i.e.,
    \begin{equation*}
        \P(\mathfrak{R}_{\Lambda, \mathcal{D}_{\text{cal}}} > u ) \leq 2|\Lambda| \exp(-2n u^2).
    \end{equation*}
    Noting that the bound is too large around 0, we take a departure from the bound into two parts from the point \(\sqrt{\frac{\log(2|\Lambda|)}{2n}}\) where the probability is just 1. Before this point, the bound is too large, we do not use it. 
    Then, the expectation of \( \mathfrak{R}_{\Lambda, \mathcal{D}_{\text{cal}}} \) is bounded by two parts:
    \begin{align*}
        \E[\mathfrak{R}_{\Lambda, \mathcal{D}_{\text{cal}}}] 
        &= \int_0^\infty \P(\mathfrak{R}_{\Lambda, \mathcal{D}_{\text{cal}}} > u)\,du \\
        &= \int_0^{\sqrt{\frac{\log(2|\Lambda|)}{2n}}} \P(\mathfrak{R}_{\Lambda, \mathcal{D}_{\text{cal}}} > u)\,du + \int_{\sqrt{\frac{\log(2|\Lambda|)}{2n}}}^\infty \P(\mathfrak{R}_{\Lambda, \mathcal{D}_{\text{cal}}} > u)\,du \\
        &\leq \sqrt{\frac{\log(2|\Lambda|)}{2n}} + \int_{\sqrt{\frac{\log(2|\Lambda|)}{2n}}}^\infty 2|\Lambda|\exp(-2nu^2)\,du \\
        &= \sqrt{\frac{\log(2|\Lambda|)}{2n}} + \frac{2|\Lambda|}{\sqrt{2n}}\int_{\sqrt{\log(2|\Lambda|)}}^\infty \exp(-t^2)\,dt \\
        & \leq \sqrt{\frac{\log(2|\Lambda|)}{2n}} + \frac{2|\Lambda|}{\sqrt{2n}} \cdot \frac{1}{2|\Lambda|\sqrt{\log(2|\Lambda|)}} \\
        & = \sqrt{\frac{\log(2|\Lambda|)}{2n}} + \frac{1}{\sqrt{2n}\sqrt{\log(2|\Lambda|)}}
    \end{align*}
    where the second inequality is due to Lemma~\ref{lemma:tail_inequality}.
\end{proof}

\section{Proof of \cref{corollary:tuning_bias_raps} and~\ref{corollary:tuning_bias_score_aggregation}}
\label{sec:proof_tuning_bias_raps_score_aggregation}
\begin{proof}
    It is a direct application of Proposition~\ref{prop:finite_parameter_tuning_bias}.
\end{proof}

\section{Proof of \cref{prop:infinite_parameter_tuning_bias}}
\label{sec:proof_infinite_parameter_tuning_bias}
\begin{proof}
    It is a direct application of Lemma~\ref{lemma:tuning_bias_infinite_d} and~\ref{lemma:infinite_parameter_empirical_process}.
\end{proof}

\section{Proof of \cref{corollary:tuning_bias_contrast_vs_ts}}
\label{sec:proof_tuning_bias_contrast_vs_ts}
\begin{proof}
    It is a direct application of Proposition~\ref{prop:infinite_parameter_tuning_bias}.
\end{proof}

\section{Proof of \cref{prop:tuning_bias_ts_vs}}
\label{sec:proof_tuning_bias_ts_vs}
\begin{proof}
    We consider a binary classification problem with temperature scaling and THR score applied.
    The probability \(p^{\text{TS}}_{\lambda}(y\mid \boldsymbol{x})\) is order-preserving of the label for a given input $X$ for any $\lambda \in \mathbb{R}^+$, i.e, \(\forall \lambda \in \mathbb{R}^+\),
    \begin{equation}\label{eq:order_preserving_label}
        p^{\text{TS}}_{\lambda}(y\mid \boldsymbol{x}) \leq p^{\text{TS}}_{\lambda}(y'\mid \boldsymbol{x}) \iff p(y\mid \boldsymbol{x}) \leq p(y'\mid \boldsymbol{x}).
    \end{equation}
    Then, for any two samples $(\boldsymbol{x}, y)$ and $(\boldsymbol{x}', y')$, with the property of softmax function \(\psi_y\) on binary classification (see Lemma~\ref{lemma:order_preserving_ts_sample} in Appendix~\ref{sec:useful_lemmas}), we have
    \begin{equation}\label{eq:order_preserving_sample}
    p^{\text{TS}}_{\lambda}(y\mid \boldsymbol{x}) \leq p^{\text{TS}}_{\lambda}(y'\mid \boldsymbol{x}) \iff p(y\mid \boldsymbol{x}) \leq p(y'\mid \boldsymbol{x}).
    \end{equation}
    Then, the direct result is that the order of score function is preserved, i.e.,
    \begin{align*}
        & S_{\text{THR}}(\boldsymbol{x}, y, p^{\text{TS}}_{\lambda}) \leq S_{\text{THR}}(\boldsymbol{x}', y', p^{\text{TS}}_{\lambda}) \\ 
        \iff 
        & S_{\text{THR}}(\boldsymbol{x}, y, p) \leq S_{\text{THR}}(\boldsymbol{x}', y', p).
    \end{align*}
    Therefore, the order of the scores on the dataset \(\mathcal{D}_{\text{cal}}\) is preserved for any transformation \(\lambda \in \mathbb{R}^+\), i.e, the order of \(\mathcal{S}^{\lambda}_{\mathcal{D}_{\text{cal}}}\) is the same as \(\mathcal{S}_{\mathcal{D}_{\text{cal}}}\) for any \(\lambda \in \mathbb{R}^+\). 
    Then by Lemma~\ref{lemma:order_preserving_score} in Appendix~\ref{sec:useful_lemmas}, we have 
    \begin{equation}\label{eq:order_preserving_score}
    \P\of{S^{\lambda}(\boldsymbol{x}, y)\leq \hat{t}^{\lambda}_{D_{\text{cal}}} | \mathcal{D}_{\text{cal}}} = 
    \P\of{S(\boldsymbol{x}, y)\leq \hat{t} | \mathcal{D}_{\text{cal}}}
    \end{equation}
    for any $\lambda \in \mathbb{R}^+$, which means \(\TuningBias(\widehat{\mathcal{C}}_{\text{TS}}) = 0 \leq \TuningBias(\widehat{\mathcal{C}}_{\text{VS}})\).
    
\end{proof}
\section{Order-preserving Regularization and Proof of \cref{prop:tuning_bias_order_preserving}}
\label{sec:proof_tuning_bias_order_preserving}
\paragraph{Order-preserving regularization} 
As discussed in \cref{sec:discussion}, applying the regularization on the conformal prediction with confidence calibration or fine-tuning methods could reduce the tuning bias significantly. 
Taking order-preserving regularization as an example, we modify the class associated with \(\mathfrak{R}_{\Lambda}\) as regularization form:
\begin{equation}
    \label{eq:op_regularization}
    \mathcal{H}_{\lambda, \text{op}} = \{ f_{\lambda, t} \in \mathcal{H}_{\Lambda} \text{ s.t. }  o(y, \boldsymbol{x}, p) = o(y, \boldsymbol{x}, p_\lambda) \},
\end{equation}
where \(o(y, \boldsymbol{x}, p)\) is the order of the true label \(y\) in the sorted predicted probabilities \(p(\cdot|\boldsymbol{x})\), and \(p_\lambda\) is the probability after confidence calibration like temperature scaling or vector scaling. 
The class \(\mathcal{H}_{\Lambda, \text{op}}\) is a constraint version of \(\mathcal{H}_{\Lambda}\) with a special regularization. 
With the constrained ERM with artificial order-preserving regularization, we could obtain a less tuning bias. 
The tuning bias of the conformal prediction with order-preserving regularization is bounded by the tuning bias of the conformal prediction without order-preserving regularization as \cref{prop:tuning_bias_order_preserving}.
\begin{proof}[Proof of \cref{prop:tuning_bias_order_preserving}]
    The ConfTr(op) is a regularization case of ConfTr, where the class \(\mathcal{H}_{\Lambda, \text{op}}\) is used as the constraint. 
    Since the class \(\mathcal{H}_{\Lambda, \text{op}}\) defined in \Eqref{eq:op_regularization} is a subset of \(\mathcal{H}_{\Lambda}\), we have
    \begin{equation*}
        \E \sup_{f \in \mathcal{H}_{\Lambda, \text{op}}} \Big|
        \frac{1}{n} \sum_{i \in [n]} f(\boldsymbol{x}_i, y_i) - \E[f(\boldsymbol{x}_{\text{test}}, y_{\text{test}}) \mid \mathcal{D}_{\text{cal}}]
        \Big|
        \leq \E \sup_{f \in \mathcal{H}_{\Lambda}} \Big|
        \frac{1}{n} \sum_{i \in [n]} f(\boldsymbol{x}_i, y_i) - \E[f(\boldsymbol{x}_{\text{test}}, y_{\text{test}}) \mid \mathcal{D}_{\text{cal}}]
        \Big|,
    \end{equation*}
    which means that 
    \begin{equation*}
        \TuningBias({\mathcal{C}}_{\text{ConfTr (op)}}) \leq \TuningBias({\mathcal{C}}_{\text{ConfTr}}).
    \end{equation*}
\end{proof}

\paragraph{Order-preserving regularization for vector scaling}
For the vector scaling defined in \Eqref{eq:vs}, because the softmax function is order-preserving, the order-preserving regularization on the output probability in~\Eqref{eq:softmax} is equivalent to the order-preserving regularization on the input logits~\(f(\boldsymbol{x})\). If we apply the order-preserving regularization on the vector scaling, we should have for any logits vector \(f(\boldsymbol{x})\), 
\begin{equation*}
    \forall j, j'\in [K], f_{j}(\boldsymbol{x}) \leq f_{j'}(\boldsymbol{x}) \iff W_j f_{j}(\boldsymbol{x}) + b_j \leq W_{j'} f_{j'}(\boldsymbol{x}) + b_{j'},
\end{equation*}
where \(f_{j}(\boldsymbol{x})\) is the \(j\)-th element of the logits vector \(f(\boldsymbol{x})\), \(W_j\) is the \(j\)-th element of \(W\), and \(b_j\) is the \(j\)-th element of \(b\). 

\begin{lemma}
    \label{lemma:order_preserving_vector_scaling}
    Vector scaling is order-preserving if and only if \(W_j = W_{j'} >0\) for all \(j, j'\in [K]\) and \(b_j = b_{j'}\) for all \(j, j'\in [K]\).
\end{lemma}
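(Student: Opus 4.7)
The plan is to prove both directions of the biconditional by a direct pairwise argument. The key observation is that the order-preserving condition compares two coordinates at a time, so for each pair $(j, j')$ the condition reduces to a statement about an affine transformation of two real inputs $f_j(\boldsymbol{x}), f_{j'}(\boldsymbol{x})$ that are free to range over $\mathbb{R}$ (since logits are unconstrained).

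For the sufficiency direction, I would substitute $W_j = W_{j'} = W > 0$ and $b_j = b_{j'} = b$ into the right-hand side of the equivalence, so that it collapses to $W f_j(\boldsymbol{x}) + b \leq W f_{j'}(\boldsymbol{x}) + b$; since $W > 0$, this is equivalent to $f_j(\boldsymbol{x}) \leq f_{j'}(\boldsymbol{x})$, giving the iff immediately.

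For the necessity direction, I would fix a pair $(j, j')$ and exploit the symmetry of equality. Evaluating the iff at $f_j(\boldsymbol{x}) = f_{j'}(\boldsymbol{x}) = c$ for arbitrary $c \in \mathbb{R}$, both $a \leq b$ and $a \geq b$ hold, so by the iff both $W_j c + b_j \leq W_{j'} c + b_{j'}$ and its reverse must hold. This yields $(W_j - W_{j'}) c = b_{j'} - b_j$ for every $c \in \mathbb{R}$, which forces $W_j = W_{j'}$ and $b_j = b_{j'}$. Once $W := W_j = W_{j'}$ is established, the condition reduces to $a \leq b \iff W a \leq W b$; testing $(a, b) = (1, 0)$ and $(0, 1)$ rules out $W \leq 0$ (for $W = 0$ the right side is always true while the left may be false; for $W < 0$ the direction flips), leaving $W > 0$. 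Quantifying over all pairs $(j, j')$ then yields a single common $W > 0$ and common $b$ across all coordinates.

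There is no real obstacle here beyond being careful that ``for any logits vector $f(\boldsymbol{x})$'' legitimately allows $f_j(\boldsymbol{x})$ and $f_{j'}(\boldsymbol{x})$ to be chosen independently over $\mathbb{R}$; this is implicit in the vector-scaling setup, where the affine map is required to be order-preserving on the image of $f$, which is taken to be all of $\mathbb{R}^K$. With that in hand, the proof is essentially a short algebraic verification in two or three lines per direction.
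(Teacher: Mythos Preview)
Your proposal is correct. The sufficiency direction is identical to the paper's, and your necessity argument is valid: setting $f_j(\boldsymbol{x})=f_{j'}(\boldsymbol{x})=c$ and applying the iff in both directions forces $(W_j-W_{j'})c+(b_j-b_{j'})=0$ for all $c\in\mathbb{R}$, hence $W_j=W_{j'}$ and $b_j=b_{j'}$; then testing $a<b$ and $a>b$ separately rules out $W\le 0$.

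Your route differs from the paper's. The paper first excludes $W_j<0$ by sending $f_j(\boldsymbol{x})\to -\infty$, then assumes $W_j<W_{j'}$, writes $f_{j'}(\boldsymbol{x})=f_j(\boldsymbol{x})+\epsilon$, passes to the limit $\epsilon\to 0$, and reaches a contradiction by choosing $f_j(\boldsymbol{x})$ extreme enough; the equality of the $b_j$ is left implicit. Your argument is more direct and self-contained: by exploiting the equality case of the iff you obtain both $W_j=W_{j'}$ and $b_j=b_{j'}$ in one stroke, with no limit, and you then dispose of $W=0$ and $W<0$ by two concrete evaluations. The paper's version is a bit more ad hoc (and as written does not explicitly treat $W_j=0$), whereas yours is shorter and handles all sign cases uniformly; both rely on the same underlying freedom to choose the logits $f_j(\boldsymbol{x}),f_{j'}(\boldsymbol{x})$ arbitrarily in $\mathbb{R}$.
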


\begin{proof}
    \textbf{\((\Leftarrow)\)}
    It is a direct result of the definition of vector scaling.

    \textbf{\((\Rightarrow)\)}
    If vector scaling is order-preserving, then
    \begin{equation*}
        \forall j, j'\in [K], f_{j}(\boldsymbol{x}) \leq f_{j'}(\boldsymbol{x}) \iff W_j f_{j}(\boldsymbol{x}) + b_j \leq W_{j'} f_{j'}(\boldsymbol{x}) + b_{j'}.
    \end{equation*}
    We first exclude the case when \(W_j <0\) for any \(j\in [K]\):
    For any \(W_j <0\), and fixed \(b_j, b_{j'}\), we could find small enough \(f_{j}(\boldsymbol{x})\), such that \(f_{j}(\boldsymbol{x}) < f_{j'}(\boldsymbol{x})\) and \(W_j f_{j}(\boldsymbol{x}) + b_j > W_{j'} f_{j'}(\boldsymbol{x}) + b_{j'}\), which contradicts the order-preserving property.
    Therefore, we only consider the case when \(W_j, W_{j'} >0\) for all \(j, j'\in [K]\). 
    Without loss of generality, we assume \(f_{j}(\boldsymbol{x}) \leq f_{j'}(\boldsymbol{x})\) and \(W_j < W_{j'}\), and let \(f_{j'}(\boldsymbol{x}) = f_{j}(\boldsymbol{x}) + \epsilon\) with \(\epsilon \geq 0\), then we have
    \begin{equation*}
        W_j f_{j}(\boldsymbol{x}) + b_j \leq W_{j'} f_{j'}(\boldsymbol{x}) + b_{j'} \iff 
        (W_j - W_{j'})f_{j}(\boldsymbol{x})  \leq b_{j'} - b_j+ W_{j'} \epsilon
    \end{equation*}
    Let \(\varepsilon \to 0\), we have
    \begin{equation*}
        (W_j - W_{j'})f_{j}(\boldsymbol{x})  \leq b_{j'} - b_j
    \end{equation*}
    hold for any \(f_{j}(\boldsymbol{x})\). We could find a small enough \(f_{j}(\boldsymbol{x})\), such that
    \begin{equation*}
        (W_j - W_{j'})f_{j}(\boldsymbol{x})  > b_{j'} - b_j
    \end{equation*}
    This contradicts the order-preserving property.
    Therefore, we must have \(W_j = W_{j'} >0\) for all \(j, j'\in [K]\) and \(b_j = b_{j'}\) for all \(j, j'\in [K]\).
\end{proof}

\begin{remark}
    By \cref{prop:tuning_bias_order_preserving}, if we apply the order-preserving regularization on the vector scaling, the parameter space of the vector scaling is reduced to \(\Lambda = \mathbb{R}^2\).
\end{remark}

\paragraph{Order-preserving regularization for matrix scaling} Here we consider a more complex case that the parameter space of the scaling is a matrix: \(W f(\boldsymbol{x}) + b\), where \(f\) be a logits value function for a classifier of \(K\) classes, and \(W\) is a matrix of size \(K \times K\) and \(b\) is a vector of size \(K\). Here the dimension of the parameter space is \(K^2 + K\). We apply the order-preserving regularization on the matrix scaling:

\begin{lemma}
    \label{lemma:order_preserving_matrix_scaling}
    Let $f$ be a logits value function for a classification of $K$ classes. The matrix scaling $g(x) = W f(x) + b$ is order-preserving if and only if $W$ has the form $W = a I + \mathbf{1} v^T$ for some scalar $a > 0$ and vector $v \in \mathbb{R}^K$, and $b$ is a constant vector (i.e., $b_j = b_{j'}$ for all $j, j'\in [K]$). Here, $I$ is the $K \times K$ identity matrix and $\mathbf{1}$ is the $K$-dimensional vector of all ones.
\end{lemma}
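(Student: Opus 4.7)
The $(\Leftarrow)$ direction is a direct computation: if $W = aI + \mathbf{1}v^T$ with $a>0$ and $b$ constant, then $(Wz+b)_j - (Wz+b)_{j'} = a(z_j-z_{j'})$, which has the same sign as $z_j - z_{j'}$, so $g$ is order-preserving. The real content is the $(\Rightarrow)$ direction. Since $f(x)$ ranges freely over a set with non-empty interior in $\mathbb{R}^K$ (this is the implicit setting inherited from the vector-scaling lemma, where arbitrary logits are considered), it suffices to assume that the order-preserving identity holds for all $z \in \mathbb{R}^K$ in place of $f(x)$. Define the linear form
\begin{equation*}
    \Delta_{jj'}(z) := (Wz)_j - (Wz)_{j'} + (b_j - b_{j'}) = \sum_{k=1}^K (W_{jk} - W_{j'k}) z_k + (b_j - b_{j'}).
\end{equation*}
The order-preserving condition says $\Delta_{jj'}(z) \leq 0 \iff z_j \leq z_{j'}$, and symmetrically with the reverse inequality; in particular $\Delta_{jj'}(z) = 0 \iff z_j = z_{j'}$.

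Plugging in $z = 0$ immediately gives $b_j - b_{j'} = 0$, so $b$ is constant. For the matrix part, fix a pair $j \neq j'$. The linear form $\Delta_{jj'}$ vanishes on the hyperplane $H_{jj'} := \{z : z_j = z_{j'}\}$, which is the kernel of the linear form $z \mapsto z_j - z_{j'}$. Since two linear forms on $\mathbb{R}^K$ with the same kernel are scalar multiples of each other, there exists $c_{jj'} \in \mathbb{R}$ with
\begin{equation*}
    \sum_{k=1}^K (W_{jk} - W_{j'k}) z_k = c_{jj'}(z_j - z_{j'}) \quad \text{for all } z \in \mathbb{R}^K.
\end{equation*}
The sign correspondence $\Delta_{jj'}(z) \leq 0 \iff z_j \leq z_{j'}$ forces $c_{jj'} > 0$. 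Comparing coefficients coordinate-by-coordinate yields $W_{jk} = W_{j'k}$ for all $k \notin \{j,j'\}$, together with $W_{jj} - W_{j'j} = c_{jj'}$ and $W_{jj'} - W_{j'j'} = -c_{jj'}$.

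The final step is to stitch these pairwise constraints into the claimed global form. For each column $k$, the relations $W_{jk} = W_{j'k}$ for all $j, j' \neq k$ show that $W_{jk}$ depends only on $k$ whenever $j \neq k$; write $v_k := W_{jk}$ for any such $j$, and set $a_k := W_{kk} - v_k$. Substituting into $W_{jj} - W_{j'j} = c_{jj'}$ gives $a_j = c_{jj'}$, and $W_{jj'} - W_{j'j'} = -c_{jj'}$ gives $a_{j'} = c_{jj'}$. Thus $a_j = a_{j'}$ for all pairs, so $a_k \equiv a$ is a common scalar, equal to the (positive) common value of all $c_{jj'}$. This yields $W_{jk} = v_k + a\,\delta_{jk}$, i.e.\ $W = a I + \mathbf{1}v^T$ with $a > 0$, completing the proof. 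The main obstacle is the coordination step in the last paragraph, ensuring that the scalars $c_{jj'}$ obtained from independent pairwise arguments are in fact all equal; this is resolved cleanly by double-counting the diagonal perturbations $a_k$.
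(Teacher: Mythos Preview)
The paper does not actually prove this lemma; it is stated and followed only by a remark about C-Adapter. Your proof is correct and fills that gap. The closest analogue in the paper is the proof of the vector-scaling case (Lemma~\ref{lemma:order_preserving_vector_scaling}), which proceeds by ad hoc contradictions: rule out negative diagonal entries, assume two distinct positive weights, let $\epsilon \to 0$, then choose a logit value violating the inequality. Your route via the kernel characterization of linear forms is cleaner and scales naturally to the full matrix setting: once $\Delta_{jj'}$ is identified as a positive scalar multiple of $z_j - z_{j'}$, the off-diagonal constraints $W_{jk} = W_{j'k}$ for $k \notin \{j,j'\}$ drop out by coefficient comparison, whereas an argument in the style of the paper's vector-scaling proof would need a separate treatment of those cross-terms. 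The coordination step---showing that the pairwise scalars $c_{jj'}$ are all equal through the diagonal quantities $a_k = W_{kk} - v_k$---is also handled correctly.
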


\begin{remark}
    Here, we regard C-Adapter as a special case of matrix scaling with order-preserving regularization. The above proposition shows that the order-preserving regularization reduces the dimension of parameter space from $K^2 +K$ to $K+2$, which is much smaller than VS  with its dimension being $2K$. Based on the parametric scaling law (Section~\ref{sec:theoretical_results}), we explain why C-Adapter can achieve lower tuning bias than VS\@. And Lemma~\ref{lemma:order_preserving_vector_scaling} is a special case of Lemma~\ref{lemma:order_preserving_matrix_scaling}.
\end{remark}

\section{Additional Theoretical Results of Score aggregation}
\label{sec:additional_theoretical_results}
For a more general case about the selection of scores, we could also apply the same analysis to the score aggregation~\citep{luo2024weighted}. 
In the setting of score aggregation, they choose an aggregation weights vector \(\boldsymbol{w}\) to aggregate multiple score functions:
\begin{equation*}
    S_{\boldsymbol{w}}(\boldsymbol{x}, y) = \sum_{m=1}^M w_m S_m(\boldsymbol{x}, y),
\end{equation*}
where \(\boldsymbol{w} = (w_1, \ldots, w_M)^\top\) is the aggregation weights vector. 
Define the parameter space $\Lambda$ as the set of all possible aggregation weights vectors, i.e., $\Lambda = \mathcal{W} = \{ \boldsymbol{w} \in \mathbb{R}^M: \sum_{m=1}^M w_m = 1, w_m \geq 0 \}$. 
Then, for grid research, we could also apply Proposition~\ref{prop:finite_parameter_tuning_bias} to bound its tuning bias to obtain a more similar result as~\cref{corollary:tuning_bias_score_aggregation}.

\end{document}